\newcommand{\ind}{\mathds{1}}
\newcommand{\anchor}{\text{A}}
\newcommand{\viz}{\text{V}}
\newcommand{\abovesectionskip}{\vspace{-.3em}}
\newcommand{\belowsectionskip}{\vspace{-.3em}}
\newcommand{\abovesubsectionskip}{\vspace{-.4em}}
\newcommand{\belowsubsectionskip}{\vspace{-.3em}}
\title[Visualizing Embedding Spaces of Neural Survival Analysis Models]{A General Framework for Visualizing Embedding Spaces of\titlebreak Neural Survival Analysis Models Based on Angular Information}
\author{
\Name{George H.~Chen}
\Email{georgechen@cmu.edu}\\
\addr Carnegie Mellon University, USA
}
\begin{document}

\maketitle

\begin{abstract}
We propose a general framework for visualizing any intermediate embedding representation used by any neural survival analysis model. Our framework is based on so-called \emph{anchor directions} in an embedding space. We show how to estimate these anchor directions using clustering or, alternatively, using user-supplied ``concepts'' defined by collections of raw inputs (e.g., feature vectors all from female patients could encode the concept ``female''). For tabular data, we present visualization strategies that reveal how anchor directions relate to raw clinical features and to survival time distributions. We then show how these visualization ideas extend to handling raw inputs that are images. Our framework is built on looking at angles between vectors in an embedding space, where there could be ``information loss'' by ignoring magnitude information. We show how this loss results in a ``clumping'' artifact that appears in our visualizations, and how to reduce this information loss in practice.
\end{abstract}

\paragraph*{Data and Code Availability}
We use the publicly available datasets on predicting time until death from the Study to Understand Prognoses, Preferences, Outcomes, and Risks of Treatment (SUPPORT) \citep{knaus1995support}, the Rotterdam tumor bank (Rotterdam) \citep{foekens2000urokinase}, and the German Breast Cancer Study Group (GBSG) \citep{schumacher_1994}. The SUPPORT dataset is on severely ill hospitalized patients with various diseases whereas the Rotterdam and GBSG datasets are both on breast cancer. We also use the MNIST handwritten digits dataset \citep{lecun2010mnist} modified by \citet{polsterl2019survival} to be for survival analysis. Our code is publicly available (links to the datasets we use are in our code): \\
\url{https://github.com/georgehc/anchor-vis/}

\paragraph*{Institutional Review Board (IRB)}
Our research does not require IRB approval as we are conducting secondary analyses of existing publicly available datasets that do not have access restrictions.

\sloppy

\abovesectionskip
\section{Introduction}
\label{sec:intro}
\belowsectionskip

Survival analysis models regularly arise in health applications in reasoning about %
how much time will elapse
before a critical event happens, such as death, disease relapse, and hospital readmission. Across many health-related datasets, state-of-the-art survival analysis models commonly use neural networks
(e.g., \citealt{ranganath2016deep,chapfuwa2018adversarial,katzman2018deepsurv,lee2018deephit,kvamme2019time,nagpal2021deep,chen2020deep,chen2022survival,li2020neural,zhong2021deep,zhong2022deep,manduchi2022deep}). However, most neural survival analysis models have been developed with a focus on prediction accuracy, often without examining what these models have learned internally. In more detail, these models typically represent individual patients in terms of ``embedding vectors''. How do these embedding vectors relate to patient characteristics? How do they relate to survival (or time-to-event) outcomes?

Some existing neural survival analysis models have been designed to have interpretable components. For instance, the model by \citet{zhong2022deep} uses a partially linear Cox model: variables that the modeler wants to easily reason about are captured by a linear component of the model whereas the rest of the variables are modeled by a neural network. Meanwhile, \citet{chapfuwa2020survival}, \citet{nagpal2021deep}, \citet{manduchi2022deep}, and \citet{chen2022survival} all represent a data point (i.e., a patient) in terms of clusters, where we can summarize each cluster's patient characteristics and survival distributions. Along similar lines, \citet{li2020neural} introduced a neural topic model with survival supervision, which represents each patient as a combination of ``topics'', where each topic corresponds to specific patient characteristics being more probable and to either higher or lower survival times.

In this paper, rather than developing a new survival analysis model that aims to be in some sense interpretable, our main contribution is instead to propose a general framework for visualizing intermediate representations of \emph{any} neural survival analysis model. Crucially, our framework is based on analyzing \emph{angular information}. Specifically, our framework uses what we refer to as \emph{anchor directions} in an intermediate representation space. We specifically show:
\begin{itemize}[leftmargin=*,itemsep=0pt,parsep=0pt,topsep=0pt,partopsep=0pt]
\item how to estimate anchor directions based on clustering or, alternatively, based on a ``concept'' that the user provides, where the concept is represented as a collection of data points (e.g., a set of feature vectors all for female patients could represent the concept ``female''; this is the same definition of ``concepts'' as used by \citet{kim2018interpretability});
\item how to visualize raw features vs anchor directions, and survival time distributions vs anchor directions; %
\item how to tell if our visualizations are ``losing too much information'' by focusing on angular information (and ignoring magnitude information), and how to reduce this information loss.
\end{itemize}
Our framework could be thought of as a suite of visualization tools with accompanying statistical tests that can help quantify the strength of associations related to the embedding space under examination.

To showcase our framework, we first focus on a tabular dataset on predicting time until death of patients from the Study to Understand Prognoses, Preferences, Outcomes, and Risks of Treatment (\mbox{SUPPORT}) \citep{knaus1995support}. We then show how our visualization ideas extend to working with images, where we use the semi-synthetic Survival MNIST dataset \citep{lecun2010mnist,polsterl2019survival} with known ground truth structure. Our visualizations help us see to how well a neural network's embedding space captures this known structure. We provide a second tabular data example on survival times of breast cancer patients in Appendix~\ref{sec:rotterdam-gbsg}, using data from the Rotterdam tumor bank (Rotterdam) \citep{foekens2000urokinase} and the German Breast Cancer Study Group (GBSG) \citep{schumacher_1994}.

The only baseline visualization strategy we are aware of that works with any intermediate representation of any neural survival analysis model is to apply a dimensionality reduction method (such as PCA \citep{pearson1901lines} or t-SNE \citep{van2008visualizing}) to transform the intermediate representation of interest (which could be high-dimensional) into a 1D, 2D, or 3D representation that is displayed in a scatter plot. Points in the scatter plot could be colored based on, for instance, their median survival times as predicted by the neural survival analysis model. We provide examples of such scatter plots in Appendix~\ref{sec:baseline-visualization}.
While this baseline visualization strategy can provide some rough intuition of the geometry of the embedding space, it does not---on its own---do clustering or provide quantitative metrics relating the embedding space to raw features or to predicted survival time distributions. %
Our framework could be used in addition to this baseline visualization strategy and does
relate the embedding space to raw features and to survival time distributions,
with the help of anchor directions based on clustering or on concepts.

Separately, even though many visualization tools have been developed for neural network models for classification or for predicting a single scalar output (e.g., \citealt{selvaraju2016grad,zhou2016learning,dabkowski2017real,lundberg2017unified,shrikumar2017learning,smilkov2017smoothgrad,sundararajan2017axiomatic,kim2018interpretability}), these existing visualization tools do not easily extend to the survival analysis setting. A key reason is that these tools aim to quantify how important different input features are in affecting a single output neuron's scalar value. However, in general, the prediction target in survival analysis for a single test data point is not a single scalar value and is instead a probability distribution over time, where time could either be continuous or discrete. When the time is discrete, the number of time steps used is up to the modeler and could even scale with the number of training data points. Quantifying the importance of different input features in predicting such a survival time distribution is not straightforward.

\abovesectionskip
\section{Background}
\belowsectionskip

We review the standard survival analysis setup in Section~\ref{sec:survival-analysis-background}, and then we provide an example of a neural survival analysis model in Section~\ref{sec:deepsurv}. For the latter, we specifically review the now-standard DeepSurv model \citep{katzman2018deepsurv}.
We emphasize that our visualization framework is not limited to only working with DeepSurv and works with any neural survival analysis model.
For ease of exposition, we use DeepSurv throughout the paper.

\abovesubsectionskip
\subsection{Survival Analysis}
\label{sec:survival-analysis-background}
\belowsubsectionskip

We assume that we have $n$ training patients with data points $\{(x_i,y_i,\delta_i)\}_{i=1}^n$, where the $i$-th training patient has raw input $x_i\in\mathcal{X}$ (e.g., tabular data, images), observed time $y_i\in[0,\infty)$, and event indicator $\delta_i\in\{0,1\}$; if $\delta_i=1$, then this means that the critical event of interest (e.g., death) happened for the $i$-th training patient so $y_i$ is the time until the critical event happened (also called the ``survival time''), whereas if $\delta_i=0$, then this means that the critical event did not happen, so $y_i$ is the time until we stopped collecting information on the $i$-th patient (commonly called the ``censoring time'').

To model how training points are sampled, we first introduce some notation. We denote the random variable for a generic raw input as $X$, which has marginal distribution $\mathbb{P}_X$. We denote the random variable for the survival time as $T$, which depends on raw input~$X$; in particular, there is a conditional distribution $\mathbb{P}_{T | X}$. We also denote the random variable for the censoring time as $C$, which depends on raw input $X$ via a conditional distribution~$\mathbb{P}_{C | X}$. Note that $T$ and $C$ are assumed to be conditionally independent given $X$. Then each training point $(x_i,y_i,\delta_i)$ is assumed to be generated i.i.d.~as follows:
\begin{enumerate}[leftmargin=*,itemsep=0pt,parsep=0pt,topsep=1pt]
\item Sample raw input $x_i$ from $\mathbb{P}_X$.
\item Sample survival time $t_i$ from $\mathbb{P}_{T\mid X=x_i}$.
\item Sample censoring time $c_i$ from $\mathbb{P}_{C\mid X=x_i}$.
\item If $t_i \le c_i$ (the critical event happens before censoring): set $y_i = t_i$ and $\delta_i=1$. Otherwise, set $y_i = c_i$ and $\delta_i=0$.
\end{enumerate}
For a patient with raw input $x\in\mathcal{X}$, a survival analysis model estimates a distribution over survival times for~$x$. Specifically, this survival time distribution is specified in terms of the \emph{conditional survival function}
\[
S(t|x) \triangleq \mathbb{P}(T>t\mid X=x)\quad\text{for }t\ge0,
\]
or a transformed version of this function, such as the \emph{hazard function} $h(t|x) \triangleq -\frac{\partial}{\partial t}\log S(t|x)$ (by negating, integrating, and exponentiating, one can show that $S(t|x) = \exp(-\int_0^t h(\tau|x)d\tau)$, i.e., having an estimate for $h(\cdot|x)$ yields an estimate for $S(\cdot|x)$).

\abovesubsectionskip
\subsection{Neural Survival Analysis: DeepSurv}
\label{sec:deepsurv}
\belowsubsectionskip

The DeepSurv model \citep{katzman2018deepsurv} %
estimates the hazard function $h(t|x)$ under the standard \emph{proportional hazards assumption} \citep{cox1972regression}:
\begin{equation}
h(t|x) = h_0(t) \exp( f(x; \theta) ),
\label{eq:proportional-hazards}
\end{equation}
where $h_0$ is called the \emph{baseline hazard function} (which takes as input a nonnegative time $t\ge0$ and outputs a nonnegative value), and $f(\cdot;\theta)$ is a user-specified neural network with parameters $\theta$ (specifically, $f(\cdot;\theta)$ maps a raw input from $\mathcal{X}$ to a single real number that could be thought of as a ``risk score'', where higher values correspond to the critical event of interest likely happening earlier for feature vector $x$). For example, when working with tabular data, $f$ could be a multilayer perceptron, and when working with images, $f$ could be a convolutional neural network.

To train a DeepSurv model, we first learn the neural network parameters~$\theta$ by minimizing the loss %
\[
L(\theta) \triangleq
-\sum_{i=1}^n
\delta_i
\bigg[ f(x_i; \theta) - \log \!\!
\sum_{\substack{j=1,\dots,n\\\text{s.t.~}y_j \ge y_i}}\!\!
\exp(f(x_j; \theta)) \bigg].
\]
After learning $\theta$, we then estimate $h_0$ using a standard approach such as Breslow's method \citep{breslow1972discussion}. Specifically, let $\widehat{\theta}$ denote the learned neural network parameters, let $t_{(1)}<t_{(2)}<\cdots<t_{(\tau)}$ denote the sorted unique times when the critical event happened in the training data (so that the total number of these unique times is $\tau$), and let $d_{(\ell)}$ denote the number of times the critical event happened at time $t_{(\ell)}$ for $\ell=1,\dots,\tau$. Then Breslow's method estimates a discretized version of $h_0$ at time $t_{(\ell)}$ using
\[
\widehat{h}_0(t_{(\ell)}) \triangleq \frac{d_{(\ell)}}{\sum_{j=1,\dots,n\text{ s.t.~}y_j\ge t_{(\ell)}}\exp(f(x_j; \widehat{\theta}~\!))}.
\]
The conditional survival function $S(t|x)$ can then be estimated by
\begin{align}
\widehat{S}(t|x) \triangleq \exp\bigg\{-\exp(f(x; \widehat{\theta}~\!)) \sum_{\substack{\ell=1,\dots,\tau\\\text{ s.t.~}t_{(\ell)}\le t}} \widehat{h}_0(t_{(\ell)}) \bigg\}. \nonumber \\[-1.5em]
\label{eq:deepsurv-surv}
\end{align}
Note that here, the conditional survival function is estimated along a time grid with a total of $\tau$ discretized time points, where $\tau$ could scale with the number of training points.

\abovesectionskip
\section{Visualization Framework}
\label{sec:framework}
\belowsectionskip

We now present our visualization framework based on anchor directions. Our framework can work with any neural survival analysis model with a base neural network $f$ and an estimate $\widehat{S}(t|x)$ of the conditional survival function $S(t|x)$. For the rest of the paper, we treat the neural survival analysis model that we aim to provide visualizations for as fixed, meaning that it has already been learned using the training data $\{(x_i,y_i,\delta_i)\}_{i=1}^n$ and its learned parameters~$\widehat{\theta}$ will not be modified. Thus, for notational convenience, we now write ``$f(x)$'' instead of ``$f(x;\widehat{\theta}~\!)$''.

At a high level, our framework works as follows. %
First, we need to decide what intermediate ``embedding space'' of the base neural network $f$ to visualize (Section~\ref{sec:choose-embedding-space}). Next, we estimate anchor directions in the chosen embedding space (Section~\ref{sec:anchor-directions}). Our visualizations will then be based on how well embedding vectors align with anchor directions in terms of cosine similarity (Section~\ref{sec:anchor-projections}). We explain our visualization strategies via real data examples, first with tabular data (Section~\ref{sec:tabular}) and then with images (Section~\ref{sec:images}). Our visualization strategy focuses on using angular and not magnitude information within the embedding space, which could result in ``information loss'' in our visualizations. We provide details on this information loss and how to mitigate it (Section~\ref{sec:information-loss}).

\smallskip
\noindent
\textbf{Statistical assumptions and sample splitting.}
Our exposition will be clear about statistical assumptions, which ensure that the different steps of our visualization framework are theoretically sound (and we will be clear when a step is a heuristic and lacks theoretical justification). For example, we occasionally use statistical tests, which commonly require that the input data to the tests are i.i.d. With such considerations in mind, we assume that we have access to additional data separate from the training data $\{(x_i,y_i,\delta_i)\}_{i=1}^n$: specifically, we assume that we also have ``anchor direction estimation'' data $\{(x_i^{\anchor},y_i^{\anchor},\delta_i^{\anchor})\}_{i=1}^{n^{\anchor}}$ %
sampled i.i.d.~in the same manner as the training data, %
and also ``visualization raw inputs'' $\{x_i^{\viz}\}_{i=1}^{n^{\viz}}$ sampled i.i.d.~from $\mathbb{P}_X$ that are separate from both training and anchor direction estimation data.\footnote{Our framework works even if the training data were sampled differently from the rest of the data; see Appendix~\ref{sec:rotterdam-gbsg}.} The basic workflow is as follows: we learn the neural survival analysis model using training data. Afterward, we estimate anchor directions using anchor direction estimation data. Finally,
we produce visualizations based on the estimated anchor directions with the help of visualization raw inputs.

\abovesubsectionskip
\subsection{Which ``Embedding Space'' to Visualize?}
\label{sec:choose-embedding-space}
\belowsubsectionskip

First, we need to specify which space we will try to visualize. To go with the DeepSurv example from earlier, the neural network $f$ used with DeepSurv could be a multilayer perceptron. In this case, we could visualize the representation at one of the intermediate layers such as the representation right before the last fully-connected layer (this last layer outputs a single number corresponding to the risk score). Specifically,
\begin{equation}
f(x) = g(\phi(x)),
\label{eq:neural-net-decomposition}
\end{equation}
where the function $\phi$ maps from the raw input space~$\mathcal{X}$ to some intermediate Euclidean space $\mathbb{R}^d$, and the function $g$ maps from $\mathbb{R}^d$ to $\mathbb{R}$. %
For the rest of the paper, we refer to $\phi$ as the \emph{encoder}, which converts a raw input into an \emph{embedding vector}.\footnote{
Note that for equation~\eqref{eq:neural-net-decomposition}, our framework does \emph{not} require the function $g$ to output a single real number. This happens to be the case for DeepSurv. For example, DeepHit \citep{lee2018deephit} has a base neural network $f$ that outputs a Euclidean vector instead.  Our visualization framework trivially also supports these other base neural network functions.} %

An embedding vector by itself is not very informative. Instead, our visualizations depend on the distribution of these embedding vectors. Recall from Section~\ref{sec:survival-analysis-background} that the distribution of raw inputs is given by $\mathbb{P}_X$. We define the \emph{embedding space} as follows. \vspace{-.45em}
\begin{definition}\label{def:embedding-space}
Let $\mathbb{P}_X$ denote the distribution of raw input data. Suppose that we sample $X\sim\mathbb{P}_X$. Then we set $U=\phi(X)$, where $\phi$ is the encoder. Then we define the \emph{embedding space} as the distribution of $U$, denoted as $\mathbb{P}_U$. This distribution is over $\mathbb{R}^d$.
\end{definition} \vspace{-.45em}
Crucially, we define the embedding space as a distribution over $\mathbb{R}^d$---not just $\mathbb{R}^d$ without a distribution.

To visualize the embedding space $\mathbb{P}_U$, our framework involves first choosing ``interesting'' \emph{anchor directions} in the embedding space to look at, which we discuss in detail in the next section. Importantly, we argue that in practice, the $d$ axis-aligned directions of $\mathbb{R}^d$ are not necessarily the ``interesting'' directions for the application at hand. For example, if $\mathbb{P}_{U}$ is well-approximated by a clustering model with $k$ clusters, then the ``meaningful'' directions to consider could be the directions that point toward the $k$ different cluster centers. These directions need not be axis-aligned.

Moreover, the embedding dimension $d$ is often a hyperparameter to be tuned (as part of the neural net architecture of $f$). If the problem of interest has an underlying ground truth distribution that is based on $k$ clusters, then a ``good'' choice of neural survival analysis model would be one where for a wide range of values for hyperparameter $d$ (where $d\ge k$), after learning the neural survival analysis model, the resulting learned embedding space $\mathbb{P}_U$ should consist of $k$ sufficiently separated clusters within $\mathbb{R}^d$.

\abovesubsectionskip
\subsection{Choosing Anchor Directions}
\label{sec:anchor-directions}
\belowsubsectionskip

As stated previously, we estimate anchor directions using the anchor direction estimation data $\{(x_i^{\anchor},y_i^{\anchor},\delta_i^{\anchor})\}_{i=1}^{n^{\anchor}}$.
We denote the embedding vectors of these points by $u_i^{\anchor} \triangleq \phi(x_i^{\anchor})$. Note that $\phi$ is estimated as part of learning the neural survival analysis model using training data $\{(x_i,y_i,\delta_i)\}_{i=1}^{n}$. 
Conditioned on the original training data and on the encoder $\phi$, the embedding vectors $u_1^{\anchor},\dots,u_{n^{\anchor}}^{\anchor}$ appear as i.i.d.~draws from the embedding space $\mathbb{P}_U$ (this i.i.d.~property would fail to hold if the anchor direction estimation data were the same as the training data; see Appendix~\ref{sec:statistical-considerations-anchor-direction-estimation} for details). We now provide two approaches for estimating anchor directions. %

\abovesubsectionskip
\subsubsection{Clustering Approach}
\label{sec:anchor-directions-via-clustering}
\belowsubsectionskip

We first estimate anchor directions using clustering. %
This approach is agnostic to the choice of clustering algorithm but assumes that the clustering algorithm chosen only has access to the embedding vectors $u_1^{\anchor},\dots,u_{n^{\anchor}}^{\anchor}$. In particular, we ignore the survival information $\{(y_i^{\anchor},\delta_i^{\anchor})\}_{i=1}^{n^{\anchor}}$ for the moment. Note that many clustering algorithms, such as the Expectation-Maximization algorithm for Gaussian mixture models \citep[Section 9.2]{bishop2006pattern}, %
are derived under the assumption that the data points to be clustered are i.i.d. Again, this assumption holds since $u_1^{\anchor},\dots,u_{n^{\anchor}}^{\anchor}$ appear as i.i.d.~samples from $\mathbb{P}_U$ after we condition on the original training data and~$\phi$. %

Once we have learned the clustering model of our choice, we obtain a cluster assignment for each embedding vector. In particular, we denote the cluster assignment of the $i$-th embedding vector $u_i^{\anchor}$ by $z_i^{\anchor} \in \{1,2,\dots,k\}$, where $k$ is the number of clusters.
Then we define the \mbox{$j$-th} cluster's anchor direction as
\begin{equation}
\mu_{\text{cluster~\!}j}
\triangleq
\underbrace{
  \frac{\sum_{i=1}^{n^{\anchor}} u_i^{\anchor} \ind\{z_i^{\anchor} = j\}}
       {\sum_{i=1}^{n^{\anchor}} \ind\{z_i^{\anchor} = j\}}}_{
  \substack{\text{mean direction of}\\\text{embedding vectors in}\\\text{the }j\text{-th cluster}}}
-\!\!\!\!\!\!
\underbrace{\frac{1}{n^{\anchor}}\sum_{i=1}^{n^{\anchor}} u_i^{\anchor}}_{
  \substack{\text{mean direction across}\\\text{embedding vectors }\\\triangleq~\overline{u}^{\anchor}}}\!\!\!\!\!\!\!\!,
\label{eq:cluster-vs-all}
\end{equation}
where $\ind\{\cdot\}$ is the indicator function that is~1 when its input is true and 0 otherwise.
The second
term $\overline{u}^{\anchor}$ is %
the ``center of mass'' of the embedding vectors. %
Thus, by subtracting $\overline{u}^{\anchor}$, we focus on how the clusters differ from the center of mass. This is essentially changing the center of the coordinate system so that directions are measured treating $\overline{u}^{\anchor}$ as the ``origin'' of the new coordinate system. A similar idea is commonly used in principal component analysis, in which data are first centered \citep{abdi2010principal}.

\smallskip
\noindent
\textbf{Choosing the number of clusters based on survival information.}
Clustering algorithms often have a hyperparameter (or multiple) that affects the number of clusters~$k$. For example, a Gaussian mixture model has a hyperparameter for the number of mixture components, which could be thought of as clusters. %
We now propose a heuristic for choosing $k$ using survival information $\{(y_i^{\anchor},\delta_i^{\anchor})\}_{i=1}^{n^{\anchor}}$.

\begin{figure}[t]
\centering
\includegraphics[width=.9\linewidth]{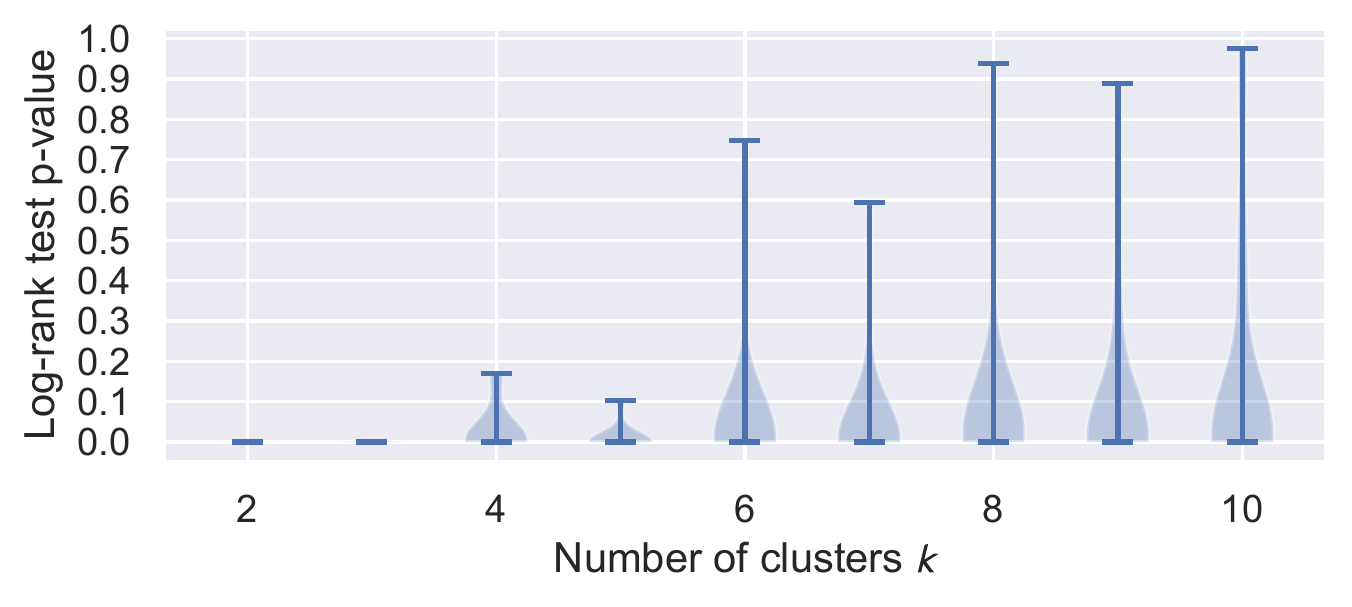}
\vspace{-1em}
\caption{A violin plot to help select the number of clusters~$k$ to use. Here, the encoder is from a DeepSurv model trained on the SUPPORT dataset, and the clustering model is a mixture of von Mises-Fisher distributions. We plot the set $\Psi(k)$ (consisting of log-rank test p-values; see equation~\eqref{eq:logrank-p-values}) vs~$k$.
Details on the dataset, encoder, and clustering model are in Section~\ref{sec:tabular}.}
\label{fig:support-logrank-helper}
\vspace{-1em}
\end{figure}

Suppose that we have grouped the data into $k$ clusters. For any two clusters $j,j'\in\{1,\dots,k\}$ with $j\ne j'$, we can run the log-rank test \citep{mantel1966evaluation} to quantify how different these two clusters' survival outcomes are (running this test requires using the $y_i^{\anchor}$ and $\delta_i^{\anchor}$ variables of the points in clusters~$j$ and~$j'$). We denote the test's resulting p-value as~$\psi_{j,j'}(k)$.
Then the set of p-values across all pairs of clusters found is
\begin{align}
\Psi(k) \triangleq \big\{ \psi_{j, j'}(k)~&\text{for all }j\in\{1,\dots,k\}, \nonumber \\
&\text{and }j'\in\{j+1,\dots,k\} \big\}.
\label{eq:logrank-p-values}
\end{align}
We can re-run the clustering algorithm to get cluster assignments that have different numbers of clusters~$k$ so that for each~$k$, we have a different set of p-values~$\Psi(k)$. We plot the entire set $\Psi(k)$ as a function of $k$ in \figureref{fig:support-logrank-helper} using a violin plot (i.e., for each~$k$, we see the distribution of $\Psi(k)$ as a ``violin''). We can choose $k$ to be a value before the p-values in $\Psi(k)$ become ``too large''. For example, in \figureref{fig:support-logrank-helper}, the ``violins'' in the violin plot get much taller (so the p-values in $\Psi(k)$ are overall getting much larger) after 5 clusters, so we could choose $k=5$.

This procedure for choosing $k$ is a heuristic: we have found it work well in practice but we currently lack theory to justify when it recovers the ``correct'' number of clusters. We comment on when the log-rank test is theoretically sound to apply (so that the p-values are valid) in Appendix~\ref{sec:statistical-considerations-logrank}. We further discuss a heuristic for choosing which clusters to focus on when the number of clusters is large in Appendix~\ref{sec:handling-many-anchor-directions}; this heuristic is based on estimating a survival time for each anchor direction, ranking anchor directions based on these survival time estimates, and focusing only on anchor directions with particular ranks (e.g., anchor directions with the highest and lowest survival times). %

\abovesubsectionskip
\subsubsection{User-supplied ``Concepts''}
\label{sec:anchor-directions-via-concepts}
\belowsubsectionskip

Next, we consider a scenario where the user provides a ``concept'' in terms of a collection of example raw input data $\mathcal{C}\triangleq\{x_1^\dagger,\dots,x_{n^\dagger}^\dagger\}$, where all $n^\dagger$ of these examples exhibit the concept (e.g., to convey a concept corresponding to ``female'', the user can set $\mathcal{C}$ to be a collection of raw inputs from female patients). This is the same notion of concepts as used by \citet{kim2018interpretability}. %
We assume that the set $\mathcal{C}$ is collected in a manner that is independent of how the visualization raw inputs $\{x_i^{\viz}\}_{i=1}^{n^{\viz}}$ are collected (this assumption will be needed later for statistical tests).
Specifically, we use the anchor direction
\begin{equation}
\mu_{\text{concept }\mathcal{C}}
\triangleq
  \frac{1}{|\mathcal{C}|}\sum_{x^{\dagger}\in\mathcal{C}} \phi(x^{\dagger})
  -
  \overline{u}^{\anchor},
\label{eq:concept-vs-all}
\end{equation}
where the center of mass $\overline{u}^{\anchor}$ is defined in equation~\eqref{eq:cluster-vs-all}.

\abovesubsectionskip
\subsection{Key Visualization Quantity: Projections onto Anchor Directions}
\label{sec:anchor-projections}
\belowsubsectionskip

In this section, we use $\mu\in\mathbb{R}^d$ to denote any specific anchor direction that we aim to analyze, such as the ones from equations~\eqref{eq:cluster-vs-all} or~\eqref{eq:concept-vs-all}. %
Our visualization framework focuses on angular information in the embedding space. Specifically, for any raw input $x\in\mathcal{X}$, we define the following projection onto $\mu$:
\begin{equation}
\text{proj}_\mu(x)
\triangleq
\frac{\langle \phi(x) - \overline{u}^{\anchor}, \mu \rangle}
     {\|\phi(x) - \overline{u}^{\anchor}\| \|\mu\|},
\label{eq:projection}
\end{equation}
where $\langle\cdot,\cdot\rangle$ denotes the Euclidean dot product, $\|\cdot\|$ denotes taking the Euclidean norm, and $\overline{u}^{\anchor}$ is defined in equation~\eqref{eq:cluster-vs-all}. This projection is precisely the cosine similarity between vectors $(\phi(x) - \overline{u}^{\anchor})$ and $\mu$, which looks at how well-aligned these two vectors are, disregarding their magnitudes (since we divide by their norms in equation~\eqref{eq:projection}). We discuss implications of ignoring magnitude information and how to reduce the amount of ``information loss'' in Section~\ref{sec:information-loss}.
Since the cosine similarity of two vectors is always between $-1$ (the two vectors point in exactly opposite directions) and $1$ (the two vectors poin in exactly the same direction), we are guaranteed that $\text{proj}_\mu(x)\in[-1,1]$. %

To create visualizations, we plug in the visualization raw inputs $\{x_i^{\viz}\}_{i=1}^{n^{\viz}}$ into the projection operator $\text{proj}_\mu$. For notation, we write $p_i^{\viz}\triangleq\text{proj}_\mu(x_i^{\viz})$ to be the projection of the $i$-th visualization input along anchor direction $\mu$. Our 2D visualizations commonly have the x-axis correspond to these projection values while the y-axis will track quantities related to either raw input feature values or time-to-event outcomes.

\abovesubsectionskip
\subsection{Visualization Strategies for Tabular Data}
\label{sec:tabular}
\belowsubsectionskip

We begin by considering tabular data, where the raw inputs are feature vectors from $\mathcal{X}=\mathbb{R}^D$ and each of the $D$ features is assumed to be ``easy to interpret'' (e.g., age, gender, cancer status). We show how to relate projection values along an anchor direction to, at first, a single continuous feature (such as age) via a scatter plot (Section~\ref{sec:tabular-one-continuous-feature}). %
Next, we show how to relate projection values to any continuous or discrete raw feature using a heatmap (Section~\ref{sec:tabular-multiple-features}). The heatmap from Section~\ref{sec:tabular-multiple-features} reveals that some raw features might be more ``important'' for an anchor direction than others. We discuss statistical tests that can identify or help rank variables that are ``important'' for a specific anchor direction (Section~\ref{sec:tabular-statistical-tests}). Lastly, we show how to relate projection values to the neural survival analysis model's predicted survival time distributions (Section~\ref{sec:tabular-survival}).

\smallskip
\noindent
\textbf{Data and setup.}
As we progress through our visualization strategies, we apply them to the SUPPORT dataset \citep{knaus1995support}. This dataset has 8,873 data points (patients) and 14 features and is on predicting time until death for severely ill hospitalized patients with various diseases. We use a 70\%/30\% train/test split. We train a DeepSurv model, where the base neural network $f$ is a multilayer perceptron consisting of the following sequence of layers:
\begin{itemize}[leftmargin=*,itemsep=0pt,parsep=0pt,topsep=1pt]
\item Fully-connected layer (mapping $\mathbb{R}^D$ to $\mathbb{R}^d$)
\item Nonlinear activation: ReLU
\item Fully-connected layer (mapping $\mathbb{R}^d$ to $\mathbb{R}^d$)
\item Nonlinear activation: ReLU\vspace{-.75em}
\begin{center}$\bm{\vdots}$\end{center}\vspace{-.75em}
\item Fully-connected layer (mapping $\mathbb{R}^d$ to $\mathbb{R}^d$)
\item Nonlinear activation: Divide each vector by its Euclidean norm %
\item Fully-connected layer (mapping $\mathbb{R}^d$ to $\mathbb{R}$)
\end{itemize}
The second-to-last bullet point does not use ReLU activation and instead normalizes vectors to have Euclidean norm~1. We design the architecture in this manner since we shall set the embedding space that we visualize to be the representation immediately after this second-to-last bullet point's layer, i.e., the encoder $\phi$ consists of all layers except for the last fully-connected layer. Thus, the output of $\phi$ has information stored purely in terms of angles and not magnitudes (since $\|\phi(x)\|=1$ for all $x\in\mathcal{X}$). This helps reduce information loss in our visualizations (more details are in Section~\ref{sec:information-loss}). We also show visualizations in Appendix~\ref{sec:support-additional-results-no-hypersphere} where this second-to-last bullet point uses ReLU activation instead.

When training DeepSurv, we hold out 20\% of the training set to treat as validation data for hyperparameter tuning. The hyperparameter grid used (e.g., number of fully-connected layers, embedding space dimension $d$, optimizer learning rate and batch size) is stated in Appendix~\ref{sec:support-hyperparameter-grid}. After training DeepSurv (including hyperparameter tuning), the learned model achieves a test set concordance index \citep{harrell1982evaluating} of 0.617. We provide this accuracy metric for reference; our visualization framework can be used regardless of the accuracy of the model (ideally, an accurate model should have an embedding space that ``captures'' application-specific structure).

As stated above, %
we take the encoder~$\phi$ to be all the layers of $f$ prior to the last fully-connected layer. We split the test set so that 25\% of it is used as the anchor direction estimation data $\{(x_i^{\anchor},y_i^{\anchor},\delta_i^{\anchor})\}_{i=1}^{n^{\anchor}}$; the rest is used to obtain visualization raw inputs $\{x_i^{\viz}\}_{i=1}^{n^{\viz}}$.

Anchor directions are estimated via clustering as described in Section~\ref{sec:anchor-directions-via-clustering}. Since we have designed the neural network architecture so that all outputs of encoder $\phi$ have Euclidean norm~1, clustering can be done using a mixture of von Mises-Fisher distributions \citep{von1981uber}, which is the analogue of the Gaussian mixture model for Euclidean vectors with norm~1. We specifically fit this mixture model using the Expectation-Maximization algorithm implementation by \citet{kim2021pytorch} and choose the number of clusters $k$ based on the heuristic we presented in Section~\ref{sec:anchor-directions-via-clustering}. In fact, \figureref{fig:support-logrank-helper} is precisely the plot we get. Throughout this section, we set the number of clusters to be $k=5$ and, for illustrative purposes, we only show visualizations for the first cluster found. Visualizations for all~5 clusters and interpretations of these visualizations are in Appendix~\ref{sec:support-additional-results}, where we also discuss results when using other numbers of clusters and a different clustering algorithm altogether.

We separately also apply our visualization framework to tabular data on survival times of breast cancer patients. Specifically, we visualize an embedding space of a DeepSurv model trained using the Rotterdam dataset \citep{foekens2000urokinase}. We treat the GBSG dataset \citep{schumacher_1994} as the test data (that we split into anchor estimation and visualization data). Due to space contraints, we defer the visualization results for this setup to Appendix~\ref{sec:rotterdam-gbsg}, where we also explain why our framework remains valid when training and test data are independent of each other and come from different distributions.

\subsubsection{Visualizing an Anchor Direction with a Single Continuous Raw Feature}
\label{sec:tabular-one-continuous-feature}

Let $\mu$ be one of the anchor directions estimated, and let $j\in\{1,2,\dots,D\}$ be the raw feature that we want to visualize, where we assume that this feature is continuous-valued. Using the visualization raw inputs $\{x_i^{\viz}\}_{i=1}^{n^{\viz}}$, we compute the projections $p_1^{\viz}\triangleq\text{proj}_{\mu}(x_1^{\viz}),~\!\dots,~\!p_{n^{\viz}}^{\viz}\triangleq\text{proj}_{\mu}(x_{n^{\viz}}^{\viz})$, and we write $(x_i^{\viz})_j$ to mean the \mbox{$j$-th} coordinate of vector $x_i^{\viz}$. Then we can make a scatter plot of $(p_1^{\viz}, (x_1^{\viz})_j), (p_2^{\viz}, (x_2^{\viz})_j), \dots, (p_{n^{\viz}}^{\viz}, (x_{n^{\viz}}^{\viz})_j)$. As a concrete example of this, for the DeepSurv model trained on the SUPPORT dataset, using the anchor direction corresponding to the first cluster found, and using ``age'' as the raw continuous feature to be visualized, we obtain the plot in \figureref{fig:support-age-vs-proj-anchor1-hypersphere}. We see that as the projection value increases (where a value of~1 maximally aligns with the anchor direction), the age distribution tends to shift upward, suggesting that this anchor direction is associated with older patients. %

\begin{figure}[p]
\centering
\vspace{-1em}
\includegraphics[width=.68\linewidth]{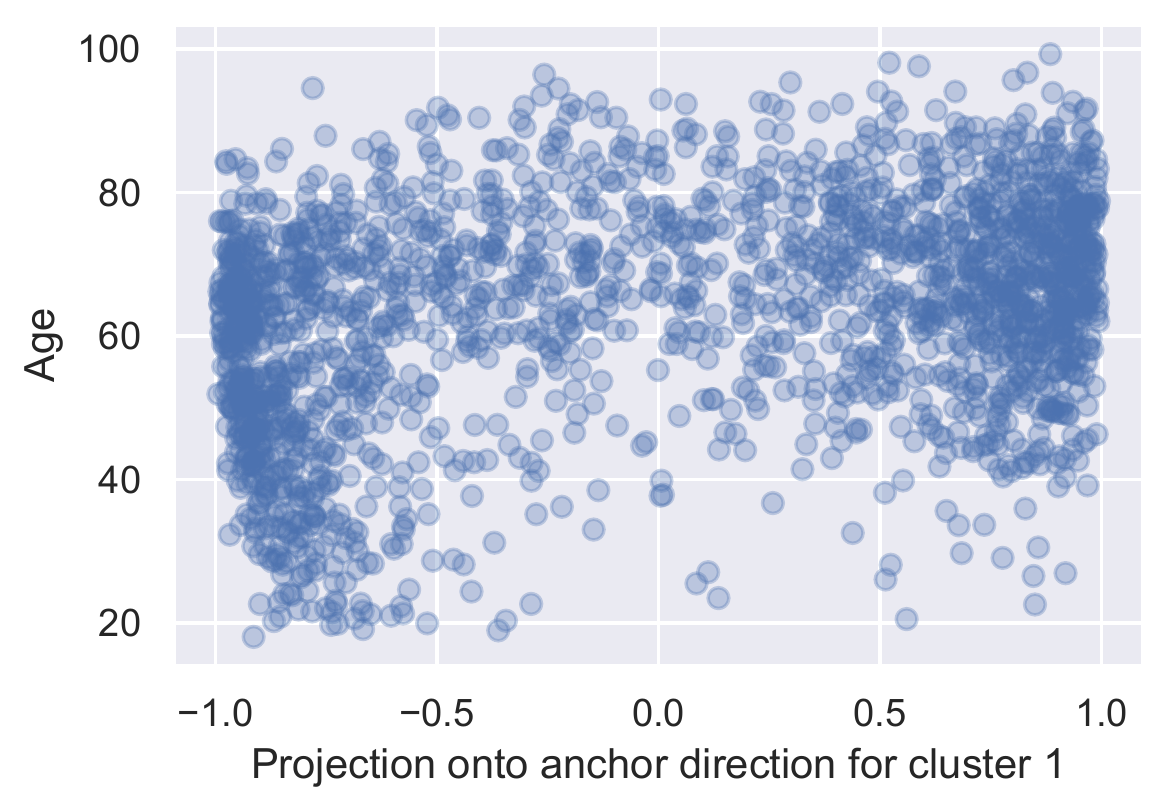}
\vspace{-1.5em}
\caption{Scatter plot of a single continuous raw feature (age) vs projection values along cluster~1's anchor direction. Plots for all clusters are in Appendix~\ref{sec:support-additional-results} (\figureref{fig:support-age-vs-proj-anchor1through5-hypersphere}).}
\label{fig:support-age-vs-proj-anchor1-hypersphere}
\vspace{-3.4em}
\end{figure}

\begin{figure}[p]
\centering
\includegraphics[width=.99\linewidth, trim={7pt 0pt 7pt 0pt}, clip]{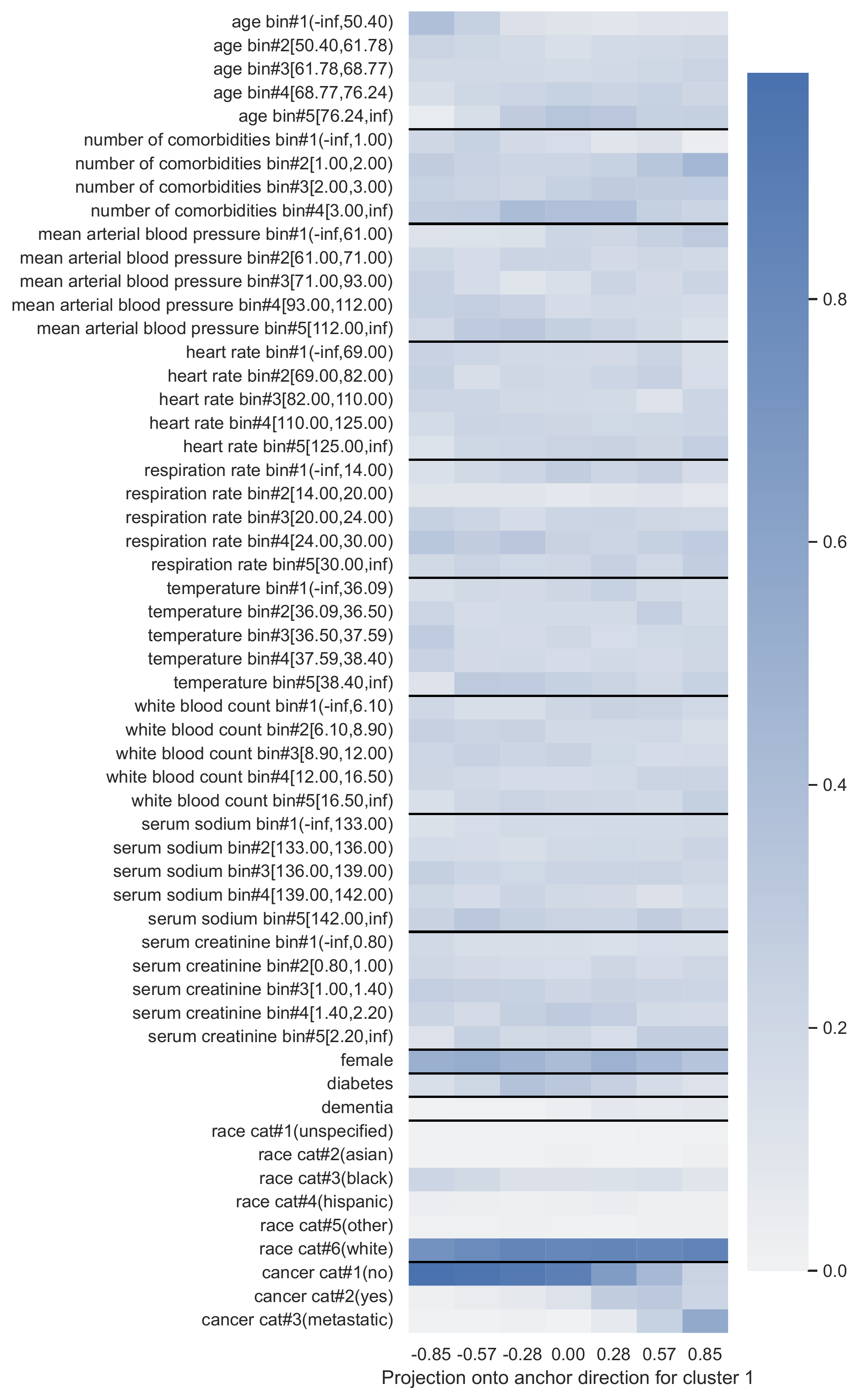}
\vspace{-2.6em}
\caption{Raw feature probability heatmap: %
the intensity of the $i$-row, \mbox{$j$-th} column indicates the fraction of visualization data in the \mbox{$j$-th} projection bin that have the $i$-th row's feature value.
Heatmaps for all clusters are in Appendix~\ref{sec:support-additional-results} (\figureref{fig:support-anchor1through5-raw-feature-heatmap}).\label{fig:support-anchor1-raw-feature-heatmap}}
\vspace{-2.5em}
\end{figure}

\subsubsection{Visualizing an Anchor Direction with Discrete or Continuous Raw Features}
\label{sec:tabular-multiple-features}

We can modify the above visualization idea to handle a discrete feature by having the y-axis of the plot correspond to specific discrete values that the feature can take on, and separately discretizing the x-axis into a user-specified number of bins (e.g., evenly spaced bins from the minimum to maximum observed projection values). In doing so, we replace the scatter plot with what we call a \emph{raw feature probability heatmap}, where the intensity at the $i$-th row and \mbox{$j$-th} column is the fraction of visualization data patients in the \mbox{$j$-th} projection value bin who have the $i$-th row's feature value. Even for a continuous feature, we can discretize it based on a user-specified discretization strategy (e.g., based on quartiles) so that all features (continuous or discrete) can be visualized together as a large heatmap. Using the same anchor direction as in \figureref{fig:support-age-vs-proj-anchor1-hypersphere}, we get the resulting heatmap in \figureref{fig:support-anchor1-raw-feature-heatmap}, where the different underlying features are separated by black horizontal lines in the heatmap.

Along the x-axis of the heatmap, the projection bins in this case are 7 evenly spaced bins between the minimum and maximum observed projection values $-0.99$ and $0.99$ respectively. The first (leftmost) bin corresponds to the interval $\mathcal{P}_1 = [-0.99, -0.71)$ with midpoint value $-0.85=\frac{1}{2}(-0.99-0.71)$, the second bin corresponds to the interval $\mathcal{P}_2 = [-0.71, -0.43)$ with midpoint value $-0.57$, and so forth. The last (rightmost) bin corresponds to the interval $\mathcal{P}_7 = [0.71, 0.99]$ with midpoint value $0.85$; only this last bin's interval includes the right endpoint.

We can readily see some trends in \figureref{fig:support-anchor1-raw-feature-heatmap}. As already revealed in \figureref{fig:support-age-vs-proj-anchor1-hypersphere}, age tends to increase as the projection value increases for cluster 1's anchor direction. However, we also see other trends as the projection value increases, such as the number of comorbidities tending to be at least~1 or cancer status tending to be ``metastatic''. In particular, this cluster seems to correspond to patients who are more ill. Indeed, these patients tend to have shorter predicted survival times, as we show later in Section~\ref{sec:tabular-survival}. %

\subsubsection{Statistical Tests to Find ``Important'' Variables for an Anchor Direction}
\label{sec:tabular-statistical-tests}

In \figureref{fig:support-anchor1-raw-feature-heatmap}, some features have noticeable trends as the projection value increases whereas others do not. %
We may want to focus on features that are the ``most important'' as they relate to anchor direction $\mu$ (especially for datasets with a large number of features, displaying all features would be impractical).
We now show how to rank raw features using statistical tests of association between two variables, for which one of the variables we take to be the projection value along $\mu$ (or a discretized version of this projection value), and the other variable will be one of the $D$ raw features (or a discretized version of it).

The first test we could use to rank features is based
on the heatmap visualization from \figureref{fig:support-anchor1-raw-feature-heatmap}: consider a single raw feature (such as ``white blood count'') and note that the heatmap restricted to that raw feature (i.e., the heatmap that only looks at the discretized white blood count vs discretized projections) is a contingency table, for which we can run Pearson's chi-squared test to assess whether white blood count and the projection value are independent. We could repeat this for all the different raw features (note that for raw features that are indicator random variables, we would have to add a row corresponding to one minus the indicator before running the statistical test) and rank the raw features based on the p-values obtained. Doing so, we obtain the ranking shown in Table~\ref{tab:support-anchor1-feature-ranking}. %
Again, this ranking could be used in constructing raw feature probability heatmaps, where we choose to only visualize a few top-ranked features.

A limitation of this chi-squared approach is that it requires projection values and raw features to be discretized. If the raw features are all continuous or ordinal, then one could use a different statistical test to compare projection values (without discretization) to each raw feature (also without discretization), such as using Kendall's tau test \citep{kendall1938new} (which checks for a monotonic relationship between a pair of variables). If the raw features are all categorical, then instead the Kruskal-Wallis test by ranks could be used \citep{kruskal1952use}.

Importantly, when using any of these statistical tests mentioned above, we suggest that the modeler check the assumptions of the test to see whether the test is appropriate for the particular dataset under examination. One of the common assumptions the tests have are that the input data points given to the tests are independent, which we have taken care to achieve by having the visualization data be separate from the training and anchor direction estimation~data. %

Separately, note that we have not stated how to pick a threshold for how small a p-value should be to flag a raw feature as ``important''. From a visualization standpoint, we think that providing a ranking is sufficient; the modeler can arbitrarily decide on how many top features to focus on or to visualize, which is equivalent to choosing an arbitrary p-value threshold. We discuss how to choose a threshold that controls for a desired false discovery rate in Appendix~\ref{sec:p-value-thresholding}.

\begin{table}[t!]
\vspace{-1em}
\caption{Ranking of raw features based on the \mbox{p-value} of Pearson's chi-squared test (for cluster~1, the same cluster visualized in Figures~\ref{fig:support-age-vs-proj-anchor1-hypersphere} and~\ref{fig:support-anchor1-raw-feature-heatmap}); rankings for all clusters are provided in Appendix~\ref{sec:support-additional-results} (Table~\ref{tab:support-anchor1through5-feature-ranking}).\label{tab:support-anchor1-feature-ranking}}
\vspace{-.75em}
\centering
\adjustbox{width=.68\linewidth}{
\begin{tabular}{rll}
\toprule
Rank & Feature & p-value \\ \midrule
1 & cancer & $2.86\times 10^{-225}$ \\
2 & age & $1.04\times 10^{-45}$ \\
3 & number of comorbidities & $1.91\times 10^{-24}$ \\
4 & mean arterial blood pressure & $9.25\times 10^{-19}$ \\
5 & diabetes & $2.46\times 10^{-14}$ \\
6 & dementia & $4.15\times 10^{-11}$ \\
7 & temperature & $1.58\times 10^{-10}$ \\
8 & heart rate & $3.34\times 10^{-9}$ \\
9 & female & $1.07\times 10^{-6}$ \\
10 & serum creatinine & $1.21\times 10^{-6}$ \\
11 & race & $3.42\times 10^{-5}$ \\
12 &white blood count & $7.02\times 10^{-5}$ \\
13 & respiration rate & $4.54\times 10^{-4}$ \\
14 & serum sodium & $6.14\times 10^{-2}$ \\
\bottomrule
\end{tabular}
}
\vspace{-1em}
\end{table}

Lastly, an important limitation of the general strategy we have stated for ranking raw features is that each statistical test is applied in a manner where we do \emph{not} account for possible interactions between different raw features. We mention two methods that could help determine interactions in Appendix~\ref{sec:interactions}.

\subsubsection{Visualizing an Anchor Direction with Time-to-Event Outcomes}
\label{sec:tabular-survival}

To relate an anchor direction to time-to-event outcomes, we again use a heatmap. We set the x-axis to be the same as in \figureref{fig:support-anchor1-raw-feature-heatmap}. As stated in Section~\ref{sec:tabular-multiple-features}, the x-axis (corresponding to projection values along the anchor direction for cluster~1) has been discretized into projection value bins that are intervals. We specifically had $\mathcal{P}_1 = [-0.99, -0.71), \mathcal{P}_2 = [-0.71, -0.43), \dots, \mathcal{P}_7=[0.71, 0.99]$. Then note that the \mbox{$j$-th} projection interval $\mathcal{P}_j$ corresponds to the following visualization data points:
\begin{equation}
{\mathcal{I}_j
\triangleq\{ i \in \{1,2,\dots,n^{\viz}\}\text{ s.t.~}\text{proj}_\mu(x_i^{\viz}) \in \mathcal{P}_j\}}.
\label{eq:projection-bin-points}
\end{equation}
Then letting $\widehat{S}(t|x)$ denote the neural survival analysis model's prediction of the conditional survival function for raw input $x$ (e.g., for DeepSurv, we use equation~\eqref{eq:deepsurv-surv}), we can compute the following average predicted survival function for the \mbox{$j$-th} projection~bin:
\begin{equation}
{\widehat{S}_j(t)
\triangleq \frac{1}{|\mathcal{I}_j|}\sum_{i\in\mathcal{I}_j} \widehat{S}(t|x_i^{\viz})}.
\label{eq:survival-curve-bin}
\end{equation}
We plot the function $\widehat{S}_j$ as the \mbox{$j$-th} column of the heatmap, where the y-axis uses a discretized time grid (e.g., evenly spaced time points between the minimum and maximum observed times in the training data). The resulting heatmap (which we call a \emph{survival probability heatmap}) is in~\figureref{fig:support-anchor1-survival-heatmap}. %
For high projection values (e.g., looking at the rightmost column), the survival probability decays quickly as time increases (starting from the bottom and going upward in the heatmap), suggesting that the patients whose embedding vectors align the most with this anchor direction tend to have short survival times. %

\begin{figure}[t]
\centering
~~\includegraphics[scale=.5]{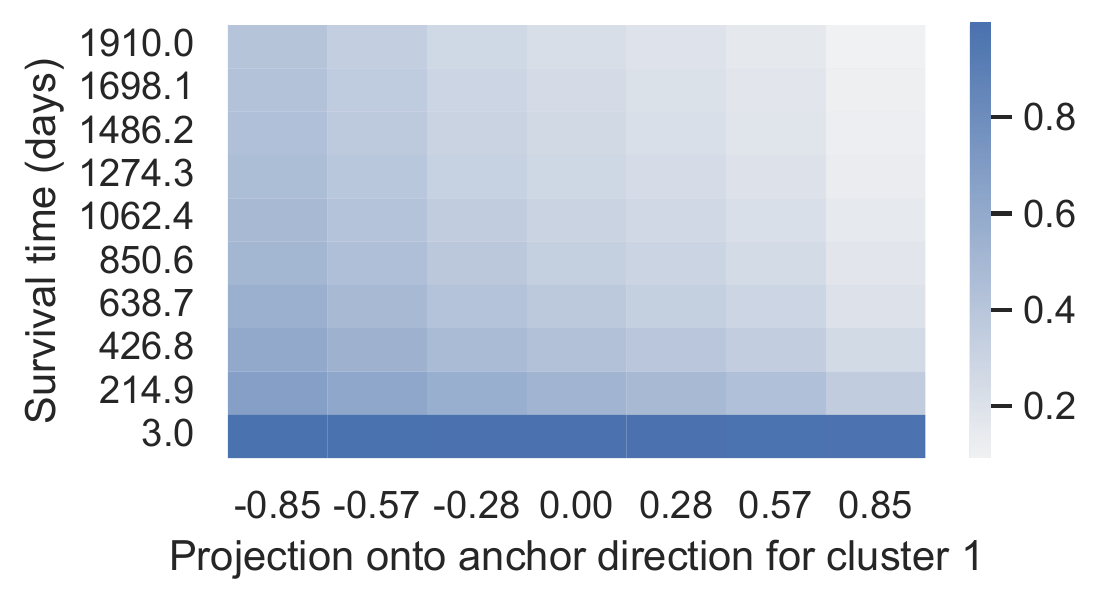}
\vspace{-1em}
\caption{Survival probability heatmap: using the same anchor direction as in \figureref{fig:support-anchor1-raw-feature-heatmap}, we show how projection values along this anchor direction relate to predicted survival probabilities over time; the intensity at the $i$-th row and \mbox{$j$-th} column is the survival probability for the \mbox{$j$-th} projection bin at the $i$-th discretized time. Heatmaps for all clusters are in Appendix~\ref{sec:support-additional-results} (\figureref{fig:support-anchor1through5-survival-heatmap}).\label{fig:support-anchor1-survival-heatmap}}
\vspace{-1em}
\end{figure}

\abovesubsectionskip
\subsection{Handling Images as Raw Inputs}
\label{sec:images}
\belowsubsectionskip

We now turn to working with images as raw inputs, %
where we intentionally examine a dataset that has known ground truth structure for what the embedding space should capture. This helps us see to what extent the learned embedding space we examine recovers this ground truth structure.
Specifically, we use the Survival MNIST dataset, which is the MNIST handwritten digit dataset \citep{lecun2010mnist} modified by \citet{polsterl2019survival} to have survival labels, i.e., observed times and event indicators. %
How P\"{o}lsterl generates these labels depends on some distributional settings, for which we use the same settings as \citet{goldstein2020x}. 
In particular, each of the 10 digits has a mean survival time shown in \figureref{fig:survival-mnist-ground-truth-mean-survival-times}. %
Each image's true survival time is sampled based on the digit of the image (e.g., all images of digit~0 have true survival times sampled i.i.d.~from a Gamma distribution with mean 11.25 and variance $10^{-3}$). The censoring times are sampled from a uniform distribution so that the overall censoring rate is roughly 50\%. Also, all images of digit~0 have observed times that are censored. %
More dataset details are in Appendix~\ref{sec:survival-mnist-dataset}. %

Using the training set %
(60,000 data points: each consists of an image, observed time, and event indicator),
we learn a DeepSurv model where the base neural network is a convolutional neural network (architecture and training details are in Appendix~\ref{sec:survival-mnist-encoder}). Importantly, the digit labels (which digit each image corresponds to) are \emph{not} available during training. We split the test set (10,000 data points), using 25\% of it as anchor direction estimation data and the rest as visualization data.
For test data, we assume that we have access to their digit labels, which helps us assess whether the embedding space learns what digits are.

First, we use anchor directions defined by concepts of digits. For example, anchor direction estimation data corresponding to digit~0 represents the concept ``digit~0''. For digit 0's anchor direction (computed using equation~\eqref{eq:concept-vs-all}), we can discretize the projection values of visualization raw inputs into projection bins $\mathcal{P}_1,\dots,\mathcal{P}_m$ where $m$ is the number of bins, just as we did with tabular data. %
Each projection bin $\mathcal{P}_j$ corresponds to a subset $\mathcal{I}_j$ of visualization data (see equation~\eqref{eq:projection-bin-points}). For projection bin $\mathcal{P}_j$, we can randomly sample, for instance, 10 raw input images corresponding to points in $\mathcal{I}_j$ and display these images along the y-axis. The resulting \emph{random input vs projection plot} is shown in \figureref{fig:survival-mnist-concept0-random-samples-vs-projection}.\footnote{When the encoder uses a convolutional layer, it is also possible to make a variant of \figureref{fig:survival-mnist-concept0-random-samples-vs-projection} where instead of displaying random raw inputs per projection bin, we display a convolution filter's outputs of these random raw inputs instead.} We see that as the projection values get large, the images that get sampled tend to be of digits 0, 7, and 9. These digits have the highest ground truth mean survival times (see \figureref{fig:survival-mnist-ground-truth-mean-survival-times}).

\begin{figure}[t]
\centering
\includegraphics[width=.9\linewidth, trim={0pt 7em 0pt 7em}, clip]{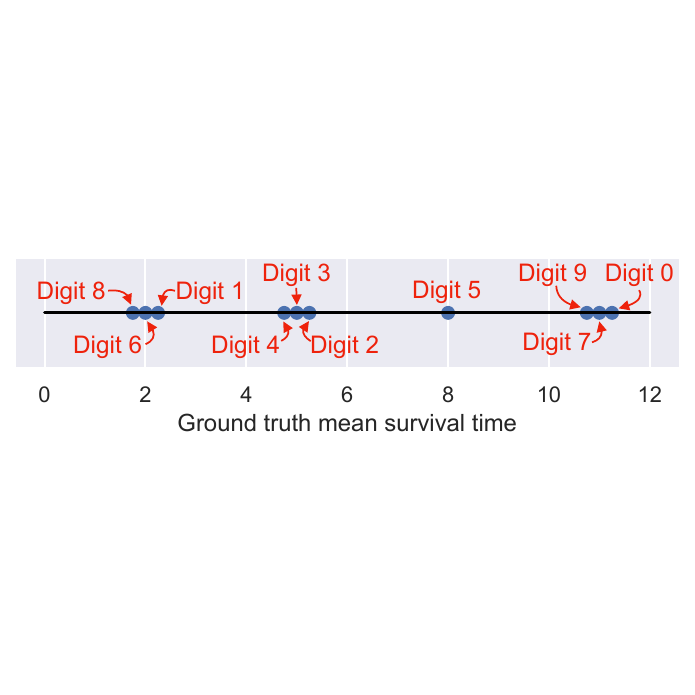}
\vspace{-1em}
\caption{Survival MNIST dataset: each digit has a different ground truth mean survival time.}
\label{fig:survival-mnist-ground-truth-mean-survival-times}
\vspace{-1em}
\end{figure}

\begin{figure}[t]
\centering
~\includegraphics[width=.97\linewidth, trim={0pt 7pt 0pt 0pt}, clip]{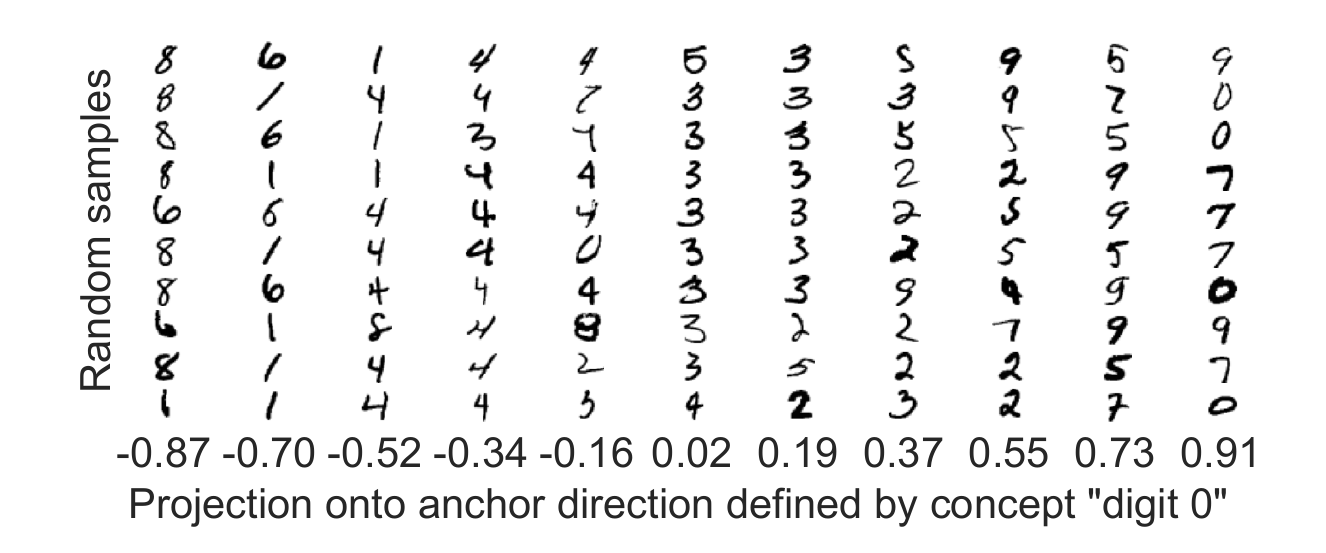}
\vspace{-.5em}
\caption{Survival MNIST random input vs projection plot: we display 10 random visualization raw inputs per projection bin.%
\label{fig:survival-mnist-concept0-random-samples-vs-projection}} %
\vspace{-1em}
\end{figure}

Due to space constraints, we defer additional Survival MNIST visualizations to Appendix~\ref{sec:survival-mnist-additional-results}. The key findings are as follows.
First, note that the digits have mean survival times ranked as 8, 6, 1, 4, 3, 2, 5, 9, 7, 0 (see \figureref{fig:survival-mnist-ground-truth-mean-survival-times}). We refer to two digits as ``adjacent'' if they are ranked next to each other (e.g., digits 1 and 4 are adjacent). We find that the learned embedding space tends to have the \mbox{$j$-th} digit's anchor direction align well with embedding vectors of the \mbox{$j$-th} digit's images as well as those of other adjacent digits (e.g., digit~1 images tend to have high projection values for digit~4's anchor direction). Because the embedding space is not learned in a manner that knows what the different digits are, the 10 digits do not get ``disentangled'' in the embedding space. Treating data that are censored as a ``concept'', we find that the embedding space recognizes which digits are more censored than others. Meanwhile, we also show that if we estimate anchor directions using clustering, the violin plot we use to help select the number of clusters sharply increases in p-values after 9 clusters, as expected (digit~0 is the only one that is always censored making it difficult to learn).

We point out that survival probability heatmaps like the one in \figureref{fig:support-anchor1-survival-heatmap} are not specific to tabular data as they do not depend on raw features; they can be created the same manner when raw inputs are images. %
Furthermore, if raw images are converted into a tabular format (e.g., by running object detectors and representing each image as a feature vector specifying how often each object appears), then our tabular data visualization strategies could of course be used.

\abovesubsectionskip
\subsection{Loss of Magnitude Information}
\label{sec:information-loss}
\belowsubsectionskip

Our visualization framework is based on angular information: projection values are cosine similarities, which measure angles between vectors, disregarding their norms (or magnitudes).
In the extreme case where the ``information content'' of the embedding vectors are all in magnitudes and not angles, the only possible projection values are $-1$ and $1$; projection values within the open interval $(-1,1)$ are not possible. Thus, all our visualizations where the x-axis is based on projection values would only need two projection bins for $-1$ and $1$. We formally state this theoretical result and provide its proof in Appendix~\ref{sec:proposition-proof}.

When working with real data, the information in the embedding space will typically not be entirely in angles or entirely in magnitudes. As more information is stored in magnitudes, our visualizations based on projection values will start exhibiting this phenomenon where %
the projection values ``clump up'' at $-1$ and~$1$. An example of this visualization artifact for DeepSurv trained on the SUPPORT dataset (without the nonlinear activation that normalizes the embedding vectors to have norm~1) is in Appendix~\ref{sec:support-additional-results-no-hypersphere}. %

\smallskip
\noindent
\textbf{Reducing information loss.}
Since the modeler can often choose the encoder $\phi$ when designing the neural network architecture of $f$, the encoder~$\phi$ could be chosen as to avoid storing information in magnitudes, which would reduce the information lost by using our framework. We had precisely done this %
in Section~\ref{sec:tabular} when we constrained the output of $\phi$ to have Euclidean norm~1.
This ``norm~1'' constraint alone could still sometimes not lead to enough angular information stored (e.g., if the embedding space $\mathbb{P}_U$ is highly concentrated so that nearly all embedding vectors randomly sampled from it point in almost the same direction). A recent theoretical and empirical insight when working with a ``norm~1'' constraint (technically referred to as working with vectors on a hypersphere)
is to add regularization that encourages the embedding vectors to have angles that are ``diverse'' (closer to uniformly distributed in all directions). Adding this regularization improves prediction accuracy for a variety of neural network architectures \citep{wang2020understanding,liu2021learning}. This diversity would, in our visualization context, lead to to having the projections onto anchor directions be more dispersed across the interval $[-1,1]$ instead of being ``clumped up'' around specific points within~$[-1,1]$.

\section{Discussion}

We have presented a visualization framework that is meant to help developers of neural survival analysis models better understand what their models have learned. Importantly, %
the visualizations we have proposed only reveal possible associations related to an embedding space. No causal claims are made. In focusing on examining one embedding space at a time,
our framework %
is not designed
to explain how the overall neural survival analysis model actually makes predictions.
We believe that our visualization strategies would be helpful in assessing whether a model has internally learned associations that agree with existing clinical literature, or to see if the model surfaces new associations that warrant further investigation.

While we have developed our framework for survival analysis, it can be modified to support other prediction tasks. For example, to support classification, it suffices to make two changes. First, the clustering approach for estimating anchor directions would be unnecessary since we could take the anchor directions to be the average embedding vector for each class, minus the center of mass across embedding vectors. Second, to relate an embedding space to predicted class distributions instead of survival time distributions, we could estimate and visualize the probability of different classes per projection bin instead of using our proposed survival probability heatmaps. Although our framework can easily be adapted to classification, whether it offers any advantages over the many existing visualization tools for classification (e.g., \citealt{selvaraju2016grad,zhou2016learning,dabkowski2017real,shrikumar2017learning,smilkov2017smoothgrad,kim2018interpretability}) is unclear. Better understanding the advantages and disadvantages of our framework in prediction tasks beyond survival analysis would be an interesting direction for future research.

\acks{This work was supported by NSF CAREER award \#2047981. The author thanks the anonymous reviewers for very helpful feedback.}

\bibliography{anchor-vis}

\appendix
\counterwithin{figure}{section}
\counterwithin{table}{section}

\abovesectionskip
\section{Statistical Considerations}
\belowsectionskip

\subsection{What Happens if Anchor Direction Estimation Data were the Same as the Training Data}
\label{sec:statistical-considerations-anchor-direction-estimation}
\belowsubsectionskip

Suppose that the anchor direction estimation data $\{(x_i^{\anchor},y_i^{\anchor},\delta_i^{\anchor})\}_{i=1}^{n^{\anchor}}$ were actually the same as the training data $\{(x_i,y_i,\delta_i)\}_{i=1}^{n}$, so that $n^{\anchor}=n$ and $x_i^{\anchor}=x_i$ for $i=1,\dots,n$. Since the training data were used to learn $\phi$, then $\phi$ itself depends on all the training data. Thus, the embedding vectors of the anchor direction estimation data would be $u_i^{\anchor}=\phi(x_i^\anchor)=\phi(x_i)$ for each $i=1,\dots,n$. This means that $u_1^{\anchor},\dots,u_{n^{\anchor}}^{\anchor}$ would no longer be guaranteed to be independent since they each depend on $\phi$ which in turn depends on all of the training data (and thus all of the anchor direction estimation data).

\abovesubsectionskip
\subsection{Comments on the Log-rank Test}
\label{sec:statistical-considerations-logrank}
\belowsubsectionskip

The log-rank test, like other statistical hypothesis tests, is designed under certain assumptions, where checking these assumptions is important if one wants the resulting p-values computed to be statistically valid.
One case these assumptions hold is if in addition to the survival analysis setup stated in Section~\ref{sec:survival-analysis-background}, we further assume that:
\begin{itemize}[leftmargin=1.8em,itemsep=0pt,parsep=0pt,topsep=1pt]
\item[(i)] the conditional censoring time distribution $\mathbb{P}_{C|X}$ is independent of raw inputs and is thus equal to the marginal censoring time distribution $\mathbb{P}_C$;
\item[(ii)] the true underlying hazard function $h(t|x)$ satisfies the proportional hazards assumption \eqref{eq:proportional-hazards}.
\end{itemize}
To provide some intuition, condition (i) ensures that when comparing any two clusters using the log-rank test, the censoring patterns for the two clusters ``look the same''. To see why this is important, consider an extreme example where two clusters truly have the same underlying survival time distribution but their censoring patterns are so different that one of the clusters always has all its observations censored whereas the other has no observations censored. In this case, we would not be able to tell that these two clusters have the same survival time distribution.

The justification for condition (ii) is more technical. One observation is that the log-rank test for comparing two clusters can be shown to be equivalent to a statistical test for the Cox proportional hazards model (specifically the so-called ``score test'') that checks for association between the survival time and an indicator variable stating which of the two clusters a data point is in \citep[Section 20.4]{harrell2001regression}. For more theoretical justification, see the books by \citet[Chapter~7]{fleming1991counting} and \citet[Chapter~V]{andersen1993statistical}.

\abovesubsectionskip
\subsection{P-value Thresholding to Control for a Desired False Discovery Rate}
\label{sec:p-value-thresholding}
\belowsubsectionskip

For a specific anchor direction $\mu$, we rank raw features by computing p-values of a statistical test (such as the chi-squared test of independence) that quantifies the strength of the association between each raw feature and projections along $\mu$. These tests are not independent of each other because all tests use the same projection values along the same direction $\mu$. To determine a p-value threshold that appropriately controls for false discovery rate across multiple statistical tests with arbitrary dependence between them, one could use, for instance, the method by \citet{benjamini2001control}.

\abovesectionskip
\section{SUPPORT Dataset Experiment}
\belowsectionskip

\subsection{Hyperparameter Grid and Optimization Details}
\label{sec:support-hyperparameter-grid}
\belowsubsectionskip

We train the neural network using minibatch gradient descent with at most 100 epochs and early stopping (no improvement in the validation concordance index after 10 epochs). We specifically use the Adam optimizer \citep{kingma2014adam}. We sweep over the following hyperparameters:
\begin{itemize}[leftmargin=*,itemsep=0pt,parsep=0pt,topsep=1pt]
\item Batch size: 64, 128
\item Learning rate: 0.01, 0.001
\item Number of fully-connected layers in encoder $\phi$: 1, 2, 3, 4
\item Embedding dimension $d$: 5, 6, 7, 8, 9, 10
\end{itemize}
Our code is written using PyTorch \citep{paszke2019pytorch}.

\smallskip
\noindent
\textbf{Compute instance.}
We ran our code on a Ubuntu 22.04.1 LTS machine with an Intel Core i9-10900K CPU (3.7GHz, 10 cores, 20 threads) with 64GB RAM and a Quadro RTX 4000 GPU (with 8GB GPU RAM).

\abovesubsectionskip
\subsection{Additional Visualizations Using an Encoder With the Euclidean Norm~1 Constraint}
\label{sec:support-additional-results}
\belowsubsectionskip

\begin{figure}[p!]
\centering
\includegraphics[width=.6\linewidth]{figures/support-age-vs-proj-anchor1-hypersphere}
\\
\includegraphics[width=.6\linewidth]{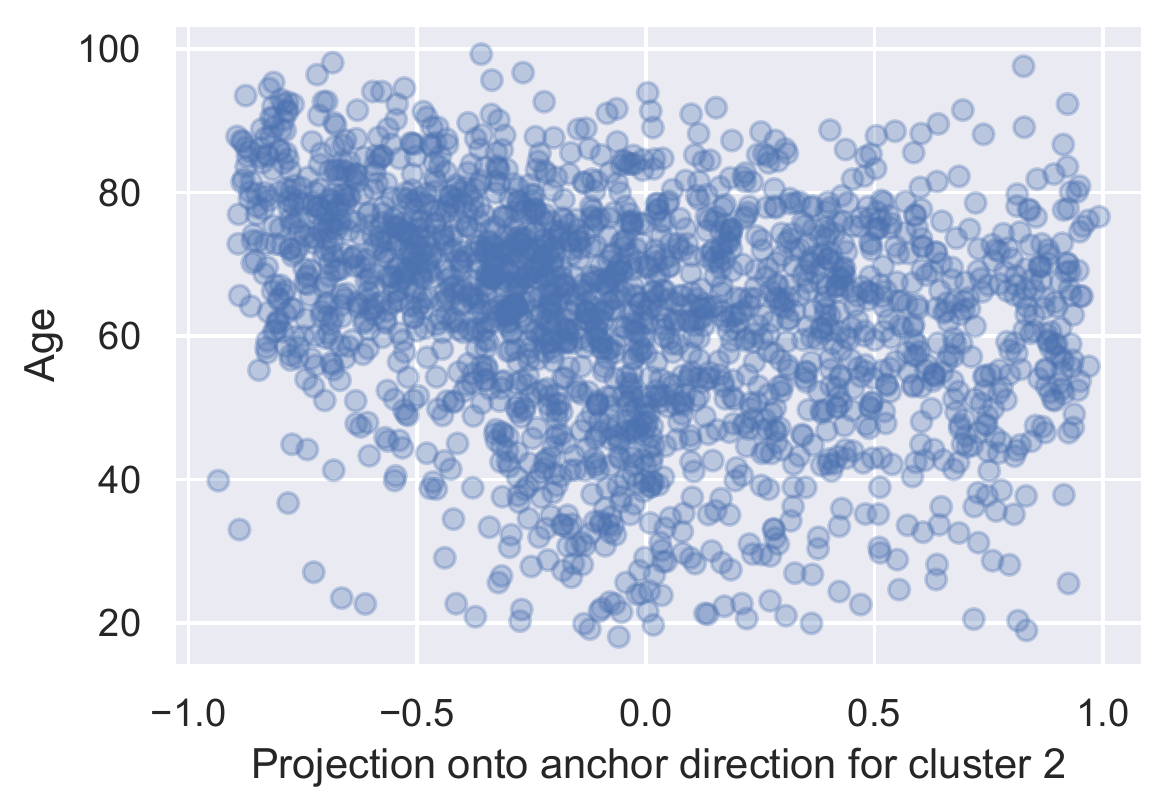}
\\
\includegraphics[width=.6\linewidth]{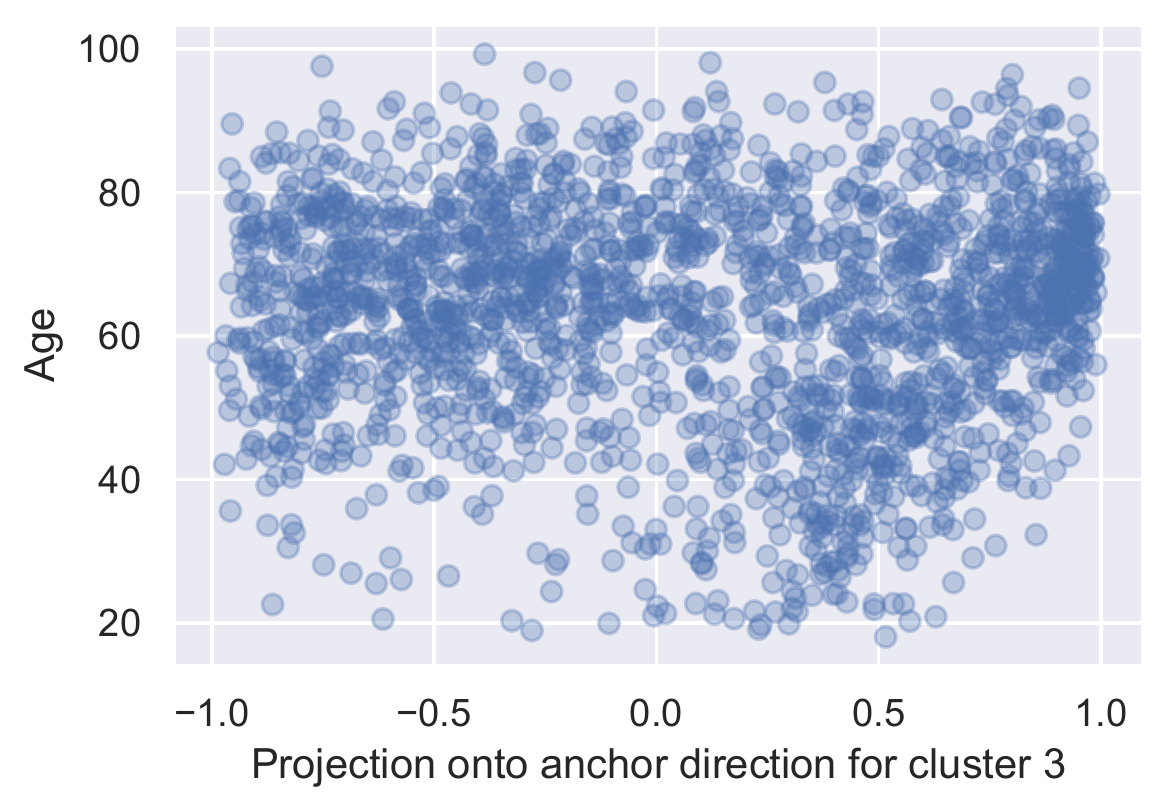}
\\
\includegraphics[width=.6\linewidth]{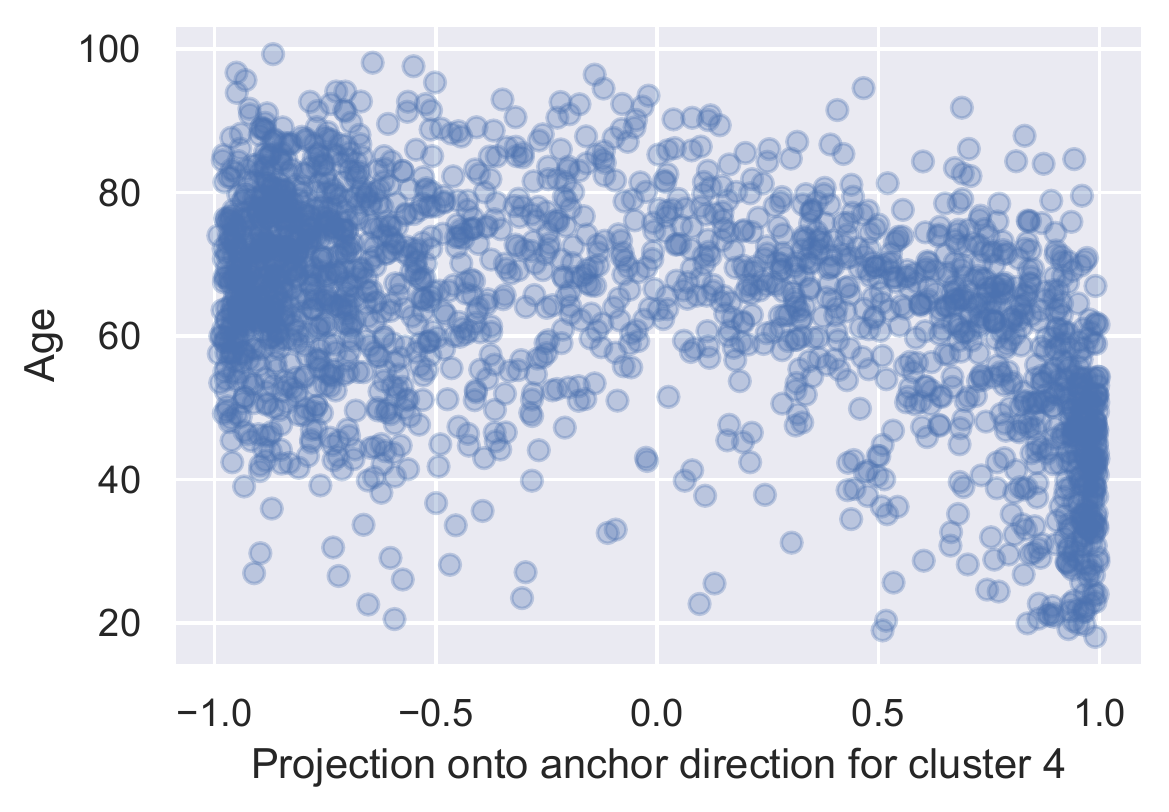}
\\
\includegraphics[width=.6\linewidth]{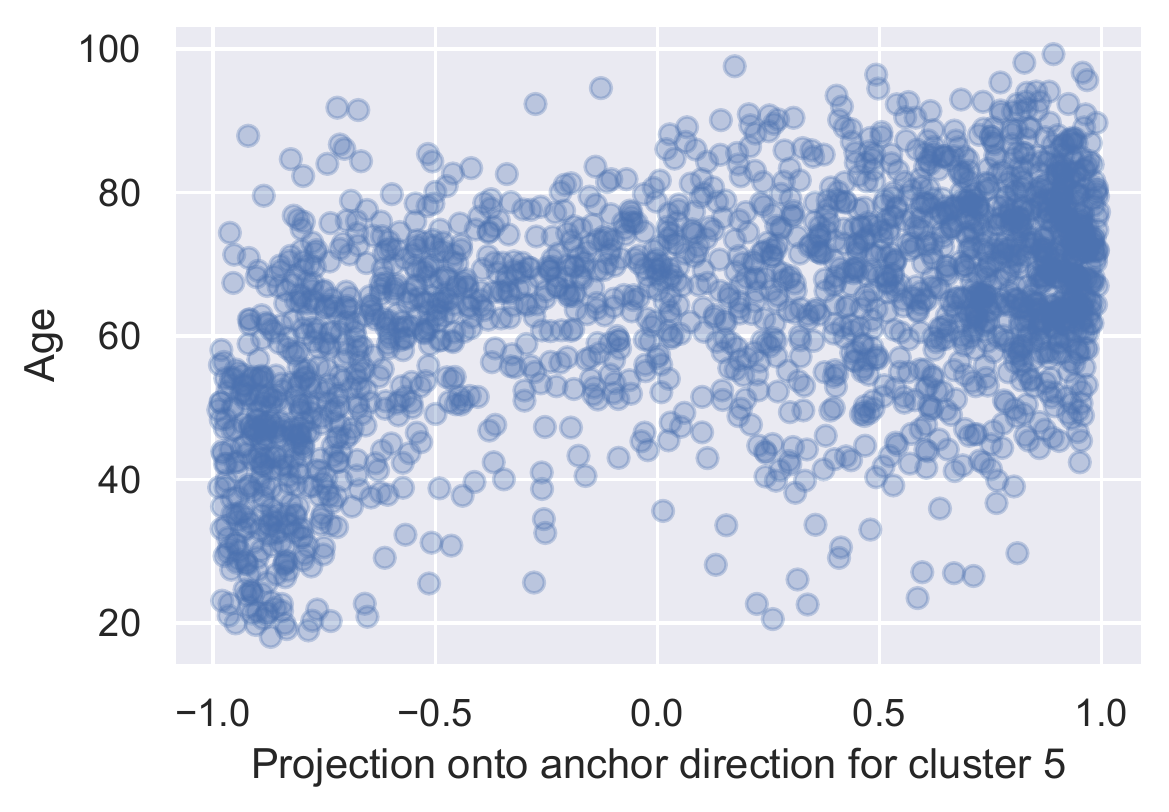}
\vspace{-1em}
\caption{SUPPORT dataset: scatter plots of age vs projection values for a DeepSurv model where the encoder has a Euclidean norm~1 constraint. The projection values are along each cluster's anchor direction (for all 5 clusters in a 5-component mixture of von Mises-Fisher distributions). The plot only for cluster~1 is in \figureref{fig:support-age-vs-proj-anchor1-hypersphere}.}
\label{fig:support-age-vs-proj-anchor1through5-hypersphere}
\end{figure}

In the main paper, we only showed visualizations for the first cluster found out of the 5 clusters used in the 5-component mixture of von Mises-Fisher distributions. We include scatter plots of age vs anchor projections for all 5 clusters in \figureref{fig:support-age-vs-proj-anchor1through5-hypersphere}, raw feature probability heatmaps for all 5 clusters in \figureref{fig:support-anchor1through5-raw-feature-heatmap}, raw feature rankings for all 5 clusters in Table~\ref{tab:support-anchor1through5-feature-ranking}, and survival probability heatmaps for all 5 clusters in \figureref{fig:support-anchor1through5-survival-heatmap}.

\smallskip
\noindent
\textbf{Interpreting the visualizations.}
From the visualizations, we can see some patterns, where for simplicity we mention only a few per cluster (e.g., looking at the top few features per cluster in Table~\ref{tab:support-anchor1through5-feature-ranking} already provides insight); we list these clusters in order of how fast their survival probability heatmap's rightmost column decays (starting from the fastest decay, indicative of survival times that tend to be the shortest):
\begin{itemize}[leftmargin=*,itemsep=0pt,parsep=0pt,topsep=1pt]
\item (Fastest survival probability decay) Cluster 1 is, as already stated in Sections~\ref{sec:tabular-multiple-features} and~\ref{sec:tabular-survival}, associated with patients being more elderly and having metastatic cancer and at least one comorbidity. %
\item Cluster 5 is associated with patients who are elderly, have low or normal temperatures, and often have cancer (non-metastatic or metatstatic). Similar to cluster 1, cluster 5 is largely also associated with patients having at least one comorbidity.
\item Cluster 2 is associated with patients having high temperatures (indicative of a fever), high sodium levels, and lower ages.
\item Cluster 3 is associated with patients without cancer, with low or normal temperatures, and ages that are neither low nor high.
\item (Slowest survival probability decay) Cluster 4 is associated with patients who are young, do not have cancer, and (compared to patients with high projection values for the other clusters) often do not have any comorbidities.
\end{itemize}
These interpretations are not surprising in that being elderly, having cancer, and having at least one comorbidity intuitively should be associated with a patient being more ill and tending to have shorter survival times. Similar findings for the same dataset but using a different neural survival analysis model have been reported previously by \citet{chen2022survival}. Note that the above ranking of clusters was determined qualitatively by looking at the survival probability heatmaps. In fact, an approach we suggest for ranking clusters/anchor directions by median survival time estimates in Appendix~\ref{sec:handling-many-anchor-directions} yields the same ranking.

\smallskip
\noindent
\textbf{Using different numbers of clusters and a different clustering algorithm.} We have also separately tried using different numbers of clusters (aside from $k=5$) with the mixture of von Mises-Fisher distributions. As the resulting visualizations do not convey much more insight than what we have already presented, we defer these to our code repository. Qualitatively, we found the following: using $k<5$ results in ``coarser-grain'' clusters, each of which look like a combination of the clusters we found with $k=5$, and using $k>5$ results in ``finer-grain'' clusters although some of these finer-grain clusters have raw feature probability (and, separately, survival probability heatmaps) that look very similar (so that some of these clusters should probably be merged into a single cluster as they correspond to similar raw feature patterns and survival time distributions).

We also repeated this exercise of trying different numbers of clusters where we cluster using a Gaussian mixture model instead. Again, we defer the resulting visualizations to our code repository except for the violin plot, which we show in \figureref{fig:support-logrank-helper-hypersphere-gmm}. Qualitatively, the clusters found for the same choice of $k$ was somewhat similar to what we get using a mixture of von Mises-Fisher distributions, although of course the clusters are not entirely the same. The violin plot ends up looking a bit different: we see in \figureref{fig:support-logrank-helper-hypersphere-gmm} that the p-values tend to be very small for $k=2$ and $k=3$, and then they increase a bit at $k=4$, and even more at $k=5$ (the highest point in the violin plot significantly increases from $k=4$ to $k=5$ and then it stays very high for all values of $k>5$ that we tried).

Ultimately, we have left the choice of which clustering algorithm to use up to the user. We suspect that a ``good'' choice of clustering algorithm would be able to find a larger number of clusters while keeping the p-values low in the violin plot. For instance, using a mixture of von Mises-Fisher distributions, the violin plot has very low p-values for up to $k=5$ in \figureref{fig:support-logrank-helper}. In contrast, the violin plot we get using a Gaussian mixture model for the same embedding vectors has low p-values only up to $k=3$ as shown in \figureref{fig:support-logrank-helper-hypersphere-gmm}. This suggests that the clusters found for the Gaussian mixture model do not distinguish the embedding vectors as well in terms of survival outcomes compared to the mixture of von Mises-Fisher distributions.

\begin{figure*}[p!]
\centering
\includegraphics[width=.95\linewidth]{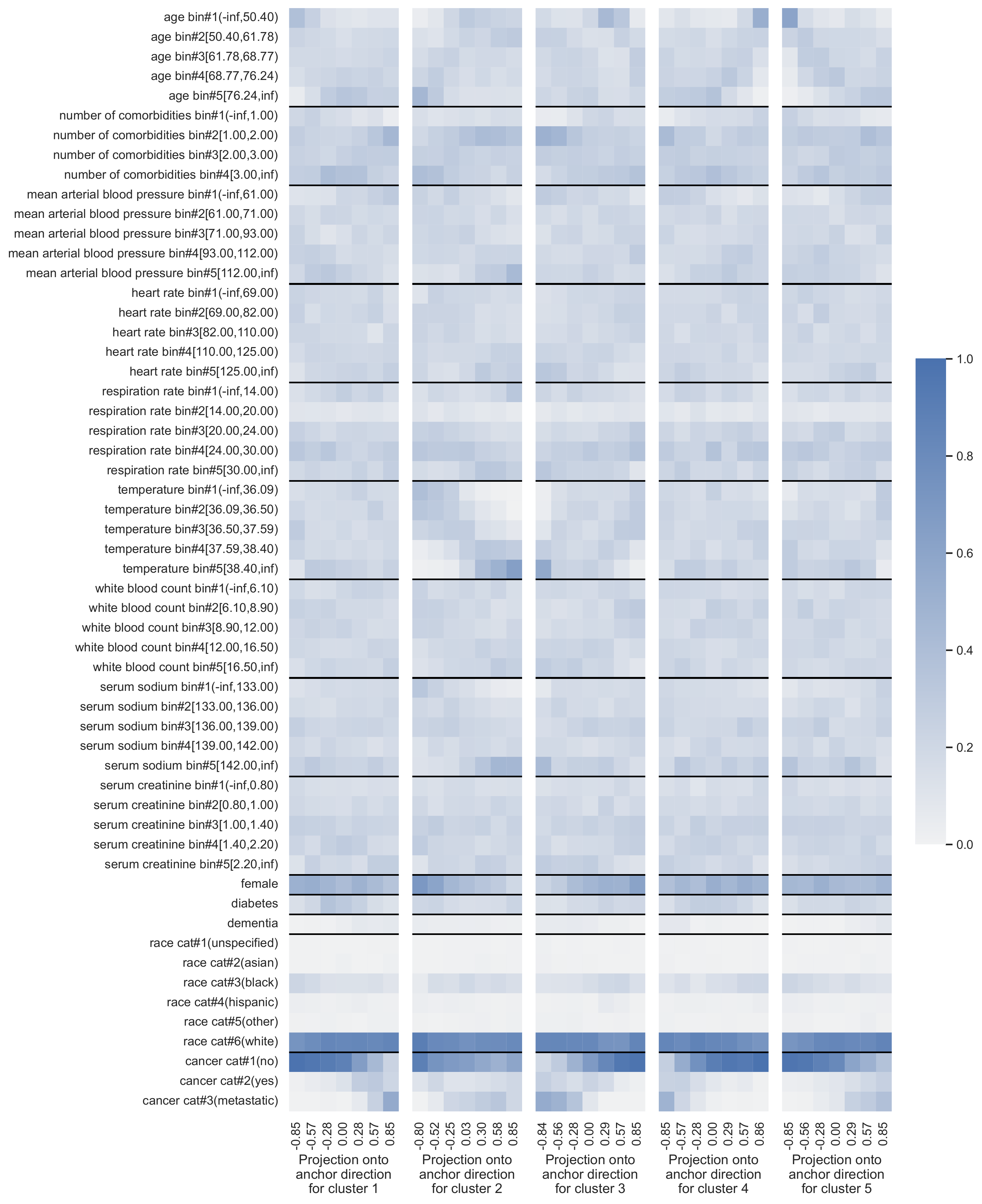}
\vspace{-1.5em}
\caption{SUPPORT dataset: raw feature probability heatmaps for a DeepSurv model where the encoder has a Euclidean norm~1 constraint. These heatmaps are for all 5 clusters' anchor directions (clusters are from a 5-component mixture of von Mises-Fisher distributions). The heatmap only for cluster~1 is also shown in \figureref{fig:support-anchor1-raw-feature-heatmap}.\label{fig:support-anchor1through5-raw-feature-heatmap}}
\vspace{-1em}
\end{figure*}

\begin{table*}[p!]
\caption{SUPPORT dataset: rankings of raw features based on the \mbox{p-value} of Pearson's chi-squared test for a DeepSurv model where the encocder has a Euclidean norm~1 constraint. These tables are for all 5 clusters in a 5-component mixture of von Mises-Fisher distributions. The ranking table only for cluster~1 is also in Table~\ref{tab:support-anchor1-feature-ranking}.\label{tab:support-anchor1through5-feature-ranking}}
\vspace{-.75em}
\centering
\adjustbox{width=.425\linewidth}{
\begin{tabular}{rll}
\toprule
\multicolumn{3}{c}{Cluster 1} \\
\midrule
Rank & Feature & p-value \\ \midrule
1 & cancer & $2.86\times 10^{-225}$ \\
2 & age & $1.04\times 10^{-45}$ \\
3 & number of comorbidities & $1.91\times 10^{-24}$ \\
4 & mean arterial blood pressure & $9.25\times 10^{-19}$ \\
5 & diabetes & $2.46\times 10^{-14}$ \\
6 & dementia & $4.15\times 10^{-11}$ \\
7 & temperature & $1.58\times 10^{-10}$ \\
8 & heart rate & $3.34\times 10^{-9}$ \\
9 & female & $1.07\times 10^{-6}$ \\
10 & serum creatinine & $1.21\times 10^{-6}$ \\
11 & race & $3.42\times 10^{-5}$ \\
12 &white blood count & $7.02\times 10^{-5}$ \\
13 & respiration rate & $4.54\times 10^{-4}$ \\
14 & serum sodium & $6.14\times 10^{-2}$ \\
\bottomrule
\end{tabular}
}
~
\adjustbox{width=.425\linewidth}{
\begin{tabular}{rll}
\toprule
\multicolumn{3}{c}{Cluster 2} \\
\midrule
Rank & Feature & p-value \\ \midrule
1 & temperature & $6.73\times 10^{-181}$ \\
2 & age & $2.42\times 10^{-40}$ \\
3 & serum sodium & $4.06\times 10^{-37}$ \\
4 & female & $6.42\times 10^{-32}$ \\
5 & cancer & $2.09\times 10^{-27}$ \\
6 & mean arterial blood pressure & $3.01\times 10^{-18}$ \\
7 & respiration rate & $2.43\times 10^{-12}$ \\
8 & number of comorbidities & $3.80\times 10^{-8}$ \\
9 & heart rate & $4.04\times 10^{-8}$ \\
10 & white blood count & $2.87\times 10^{-5}$ \\
11 & serum creatinine & $2.89\times 10^{-3}$ \\
12 & race & $1.78\times 10^{-2}$ \\
13 & diabetes & $3.56\times 10^{-2}$ \\
14 & dementia & $6.41\times 10^{-1}$ \\
\bottomrule
\end{tabular}
} \\[.75em]
\adjustbox{width=.425\linewidth}{
\begin{tabular}{rll}
\toprule
\multicolumn{3}{c}{Cluster 3} \\
\midrule
Rank & Feature & p-value \\ \midrule
1 & cancer & $6.26\times 10^{-183}$ \\
2 & temperature & $2.93\times 10^{-75}$ \\
3 & age & $1.75\times 10^{-37}$ \\
4 & female & $1.35\times 10^{-24}$ \\
5 & number of comorbidities & $3.05\times 10^{-20}$ \\
6 & white blood count & $6.74\times 10^{-16}$ \\
7 & heart rate & $1.58\times 10^{-12}$ \\
8 & respiration rate & $1.04\times 10^{-10}$ \\
9 & mean arterial blood pressure & $1.28\times 10^{-9}$ \\
10 & serum sodium & $1.39\times 10^{-9}$ \\
11 & serum creatinine & $1.19\times 10^{-8}$ \\
12 & race & $2.44\times 10^{-7}$ \\
13 & dementia & $6.06\times 10^{-5}$ \\
14 & diabetes & $6.16\times 10^{-3}$ \\
\bottomrule
\end{tabular}
}
~
\adjustbox{width=.425\linewidth}{
\begin{tabular}{rll}
\toprule
\multicolumn{3}{c}{Cluster 4} \\
\midrule
Rank & Feature & p-value \\ \midrule
1 & cancer & $6.12\times 10^{-185}$ \\
2 & age & $1.48\times 10^{-99}$ \\
3 & number of comorbidities & $1.09\times 10^{-18}$ \\
4 & mean arterial blood pressure & $4.24\times 10^{-15}$ \\
5 & dementia & $1.29\times 10^{-13}$ \\
6 & diabetes & $1.71\times 10^{-8}$ \\
7 & temperature & $1.85\times 10^{-6}$ \\
8 & serum sodium & $1.37\times 10^{-4}$ \\
9 & serum creatinine & $1.65\times 10^{-4}$ \\
10 & respiration rate & $1.69\times 10^{-4}$ \\
11 & race & $4.96\times 10^{-4}$ \\
12 & female & $7.54\times 10^{-4}$ \\
13 & white blood count & $4.61\times 10^{-3}$ \\
14 & heart rate & $1.53\times 10^{-2}$ \\
\bottomrule
\end{tabular}
} \\[.75em]
\adjustbox{width=.425\linewidth}{
\begin{tabular}{rll}
\toprule
\multicolumn{3}{c}{Cluster 5} \\
\midrule
Rank & Feature & p-value \\ \midrule
1 & age & $4.47\times 10^{-130}$ \\
2 & cancer & $4.88\times 10^{-114}$ \\
3 & temperature & $1.09\times 10^{-42}$ \\
4 & serum sodium & $2.35\times 10^{-17}$ \\
5 & mean arterial blood pressure & $6.81\times 10^{-17}$ \\
6 & number of comorbidities & $9.90\times 10^{-14}$ \\
7 & dementia & $1.66\times 10^{-13}$ \\
8 & race & $6.59\times 10^{-4}$ \\
9 & heart rate & $2.06\times 10^{-3}$ \\
10 & respiration rate & $2.05\times 10^{-2}$ \\
11 & serum creatinine & $7.25\times 10^{-2}$ \\
12 & diabetes & $8.99\times 10^{-2}$ \\
13 & white blood count & $2.60\times 10^{-1}$ \\
14 & female & $3.98\times 10^{-1}$ \\
\bottomrule
\end{tabular}
}
\end{table*}

\begin{figure*}[t!]
\centering
\includegraphics[width=\linewidth]{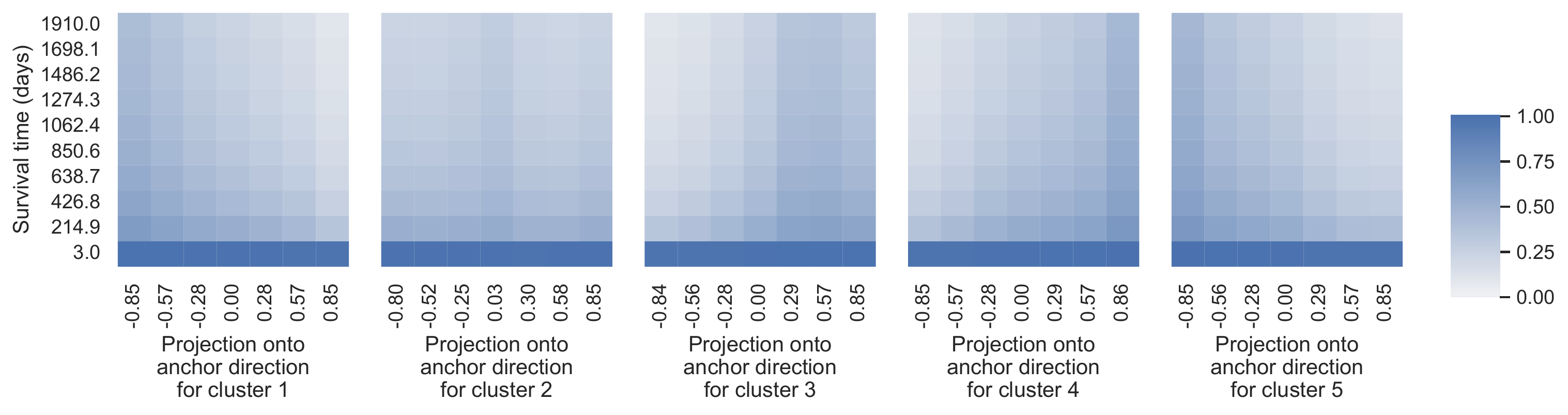}
\vspace{-2.5em}
\caption{SUPPORT dataset: survival probability heatmaps for a DeepSurv model where the encoder has a Euclidean norm~1 constraint. These heatmaps are for all 5 clusters' anchor directions (clusters are from a 5-component mixture of von Mises-Fisher distributions). The heatmap only for cluster~1 is also in \figureref{fig:support-anchor1-survival-heatmap}.\label{fig:support-anchor1through5-survival-heatmap}}
\vspace{-1.5em}
\end{figure*}

\begin{figure}[t!]
\centering
\includegraphics[width=.825\linewidth]{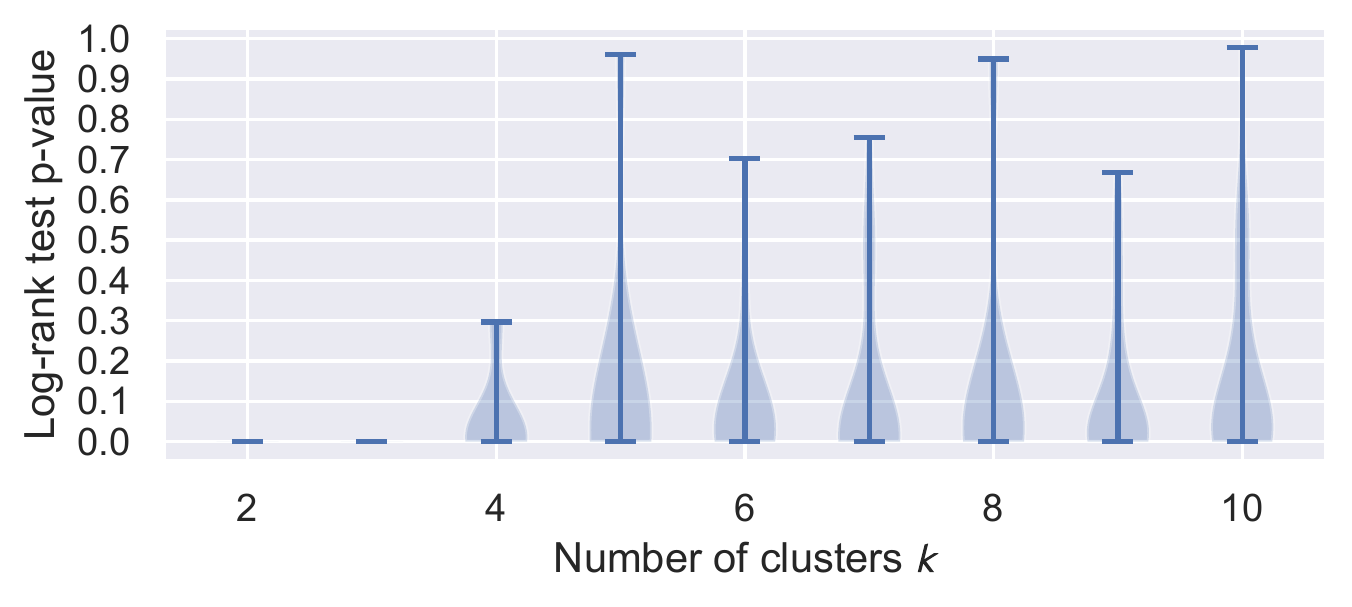}
\vspace{-1.25em}
\caption{SUPPORT dataset: a violin plot to help select the number of clusters (and thus the number of anchor directions) to use with a clustering model. Here, the encoder used is from a DeepSurv model that has a Euclidean norm~1 constraint, and the clustering model is a Gaussian mixture~model.}
\label{fig:support-logrank-helper-hypersphere-gmm}
\vspace{-2em}
\end{figure}

\begin{figure}[p!]
\centering
\includegraphics[width=.825\linewidth]{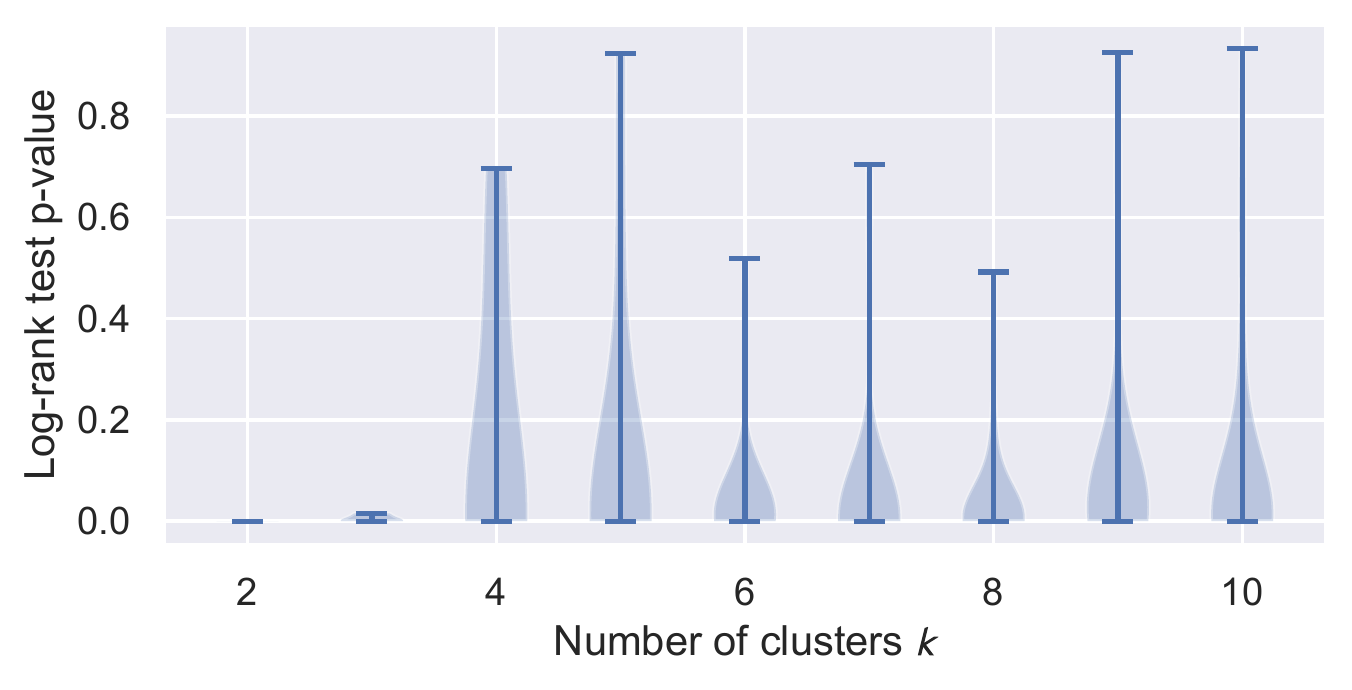}
\vspace{-1.25em}
\caption{SUPPORT dataset: a violin plot to help select the number of clusters (and thus the number of anchor directions) to use with a clustering model. Here, the encoder used is from a DeepSurv model that does \emph{not} have a Euclidean norm~1 constraint, and the clustering model is a Gaussian mixture model.}
\label{fig:support-logrank-helper-no-hypersphere}
\end{figure}

\begin{figure}[p!]
\centering
\includegraphics[width=.62\linewidth, trim={5pt 0pt 0pt 5pt}, clip]{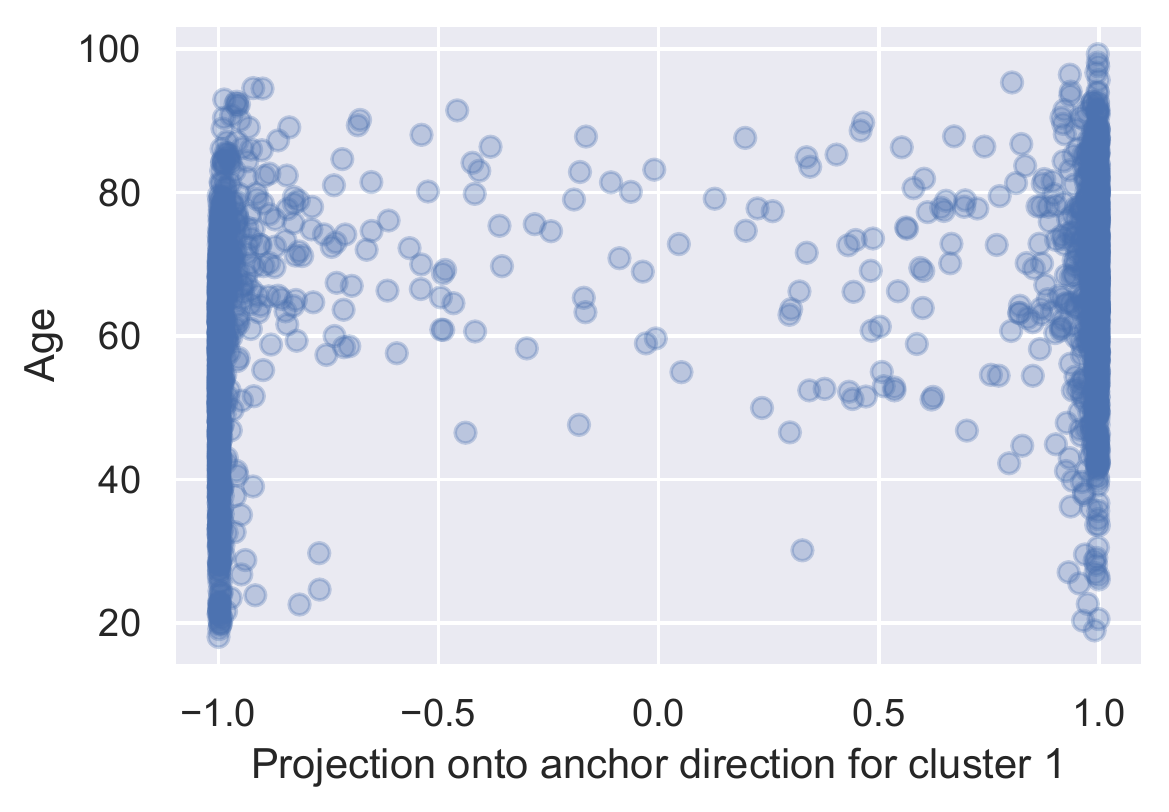}

\includegraphics[width=.62\linewidth, trim={5pt 0pt 0pt 5pt}, clip]{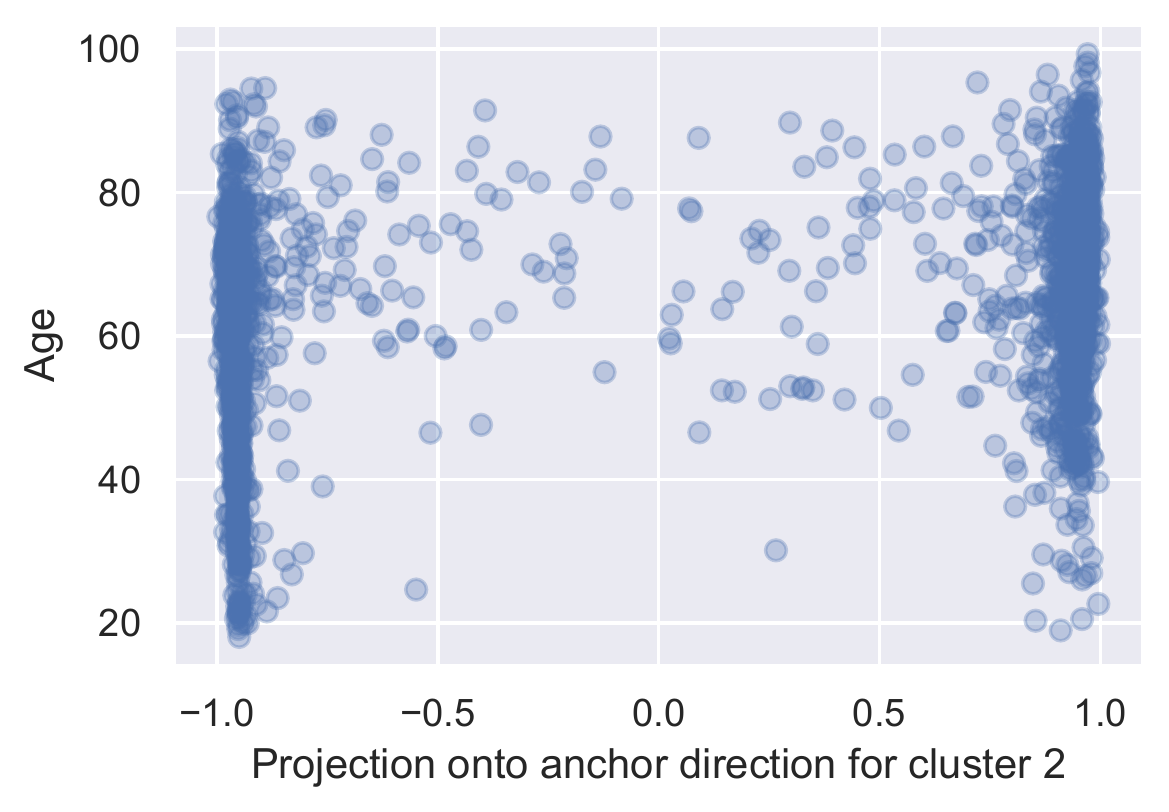}

\includegraphics[width=.62\linewidth, trim={5pt 0pt 0pt 5pt}, clip]{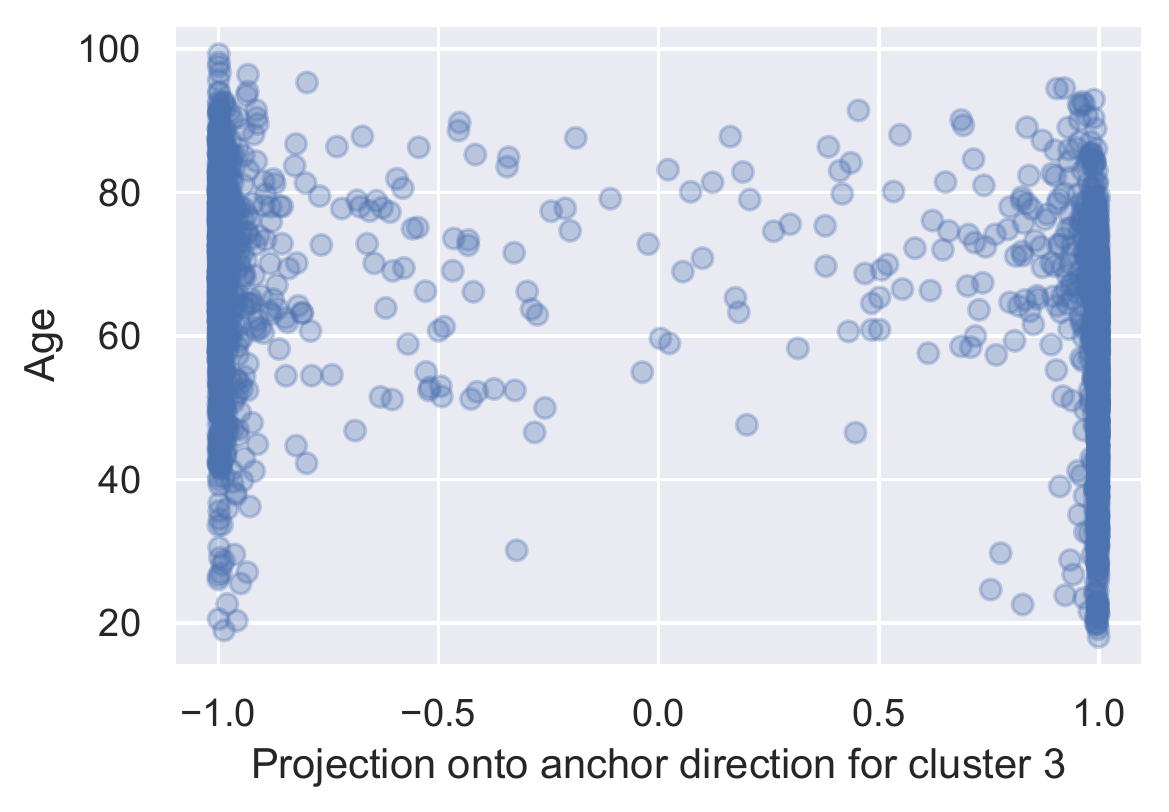}
\vspace{-1.25em}
\caption{SUPPORT dataset: scatter plots of age vs projection values for a DeepSurv model where the encoder does \emph{not} have a Euclidean norm~1 constraint. The projection values are along each cluster's anchor direction (for all 3 clusters in a 3-component Gaussian mixture model).}
\label{fig:support-age-vs-proj-no-hypersphere}
\end{figure}

\begin{figure*}[p]
\centering
\includegraphics[width=.95\linewidth]{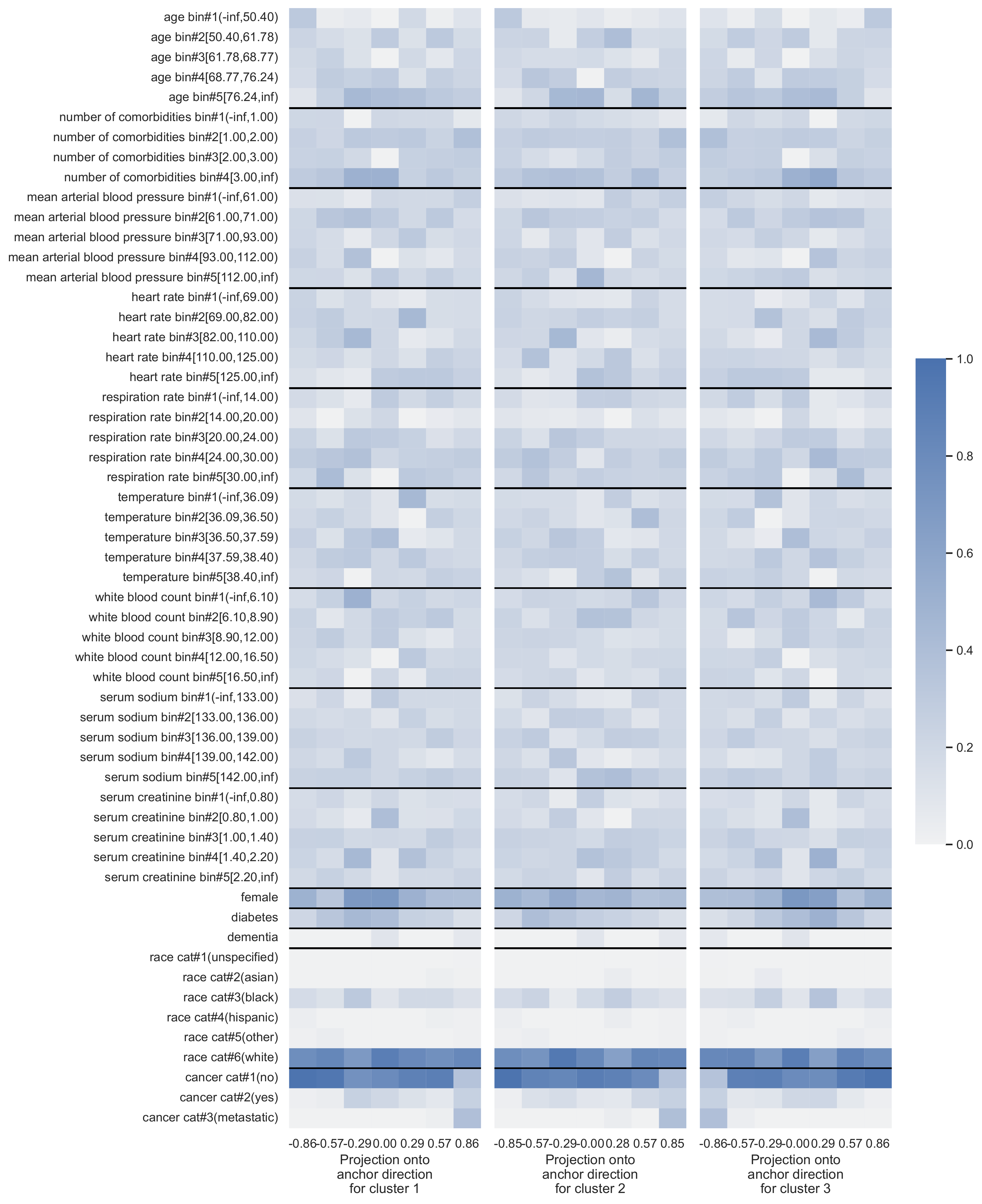}
\vspace{-1.5em}
\caption{SUPPORT dataset: raw feature probability heatmaps for a DeepSurv model where the encoder does \emph{not} have a Euclidean norm~1 constraint. These heatmaps are for each cluster's anchor direction (for all 3 clusters in a 3-component Gaussian mixture model).}
\label{fig:support-raw-feature-probability-heatmaps-no-hypersphere}
\end{figure*}

\begin{table}[p!]
\vspace{-2em}
\caption{SUPPORT dataset: rankings of raw features based on the \mbox{p-value} of Pearson's chi-squared test for a DeepSurv model where the encoder does \emph{not} have a Euclidean norm~1 constraint. These tables are for all 3 clusters in a 3-component Gaussian mixture model.\label{tab:support-feature-ranking-no-hypersphere}}
\vspace{-.75em}
\centering
\adjustbox{width=.85\linewidth}{
\begin{tabular}{rll}
\toprule
\multicolumn{3}{c}{Cluster 1} \\
\midrule
Rank & Feature & p-value \\ \midrule
1 & cancer & $4.35\times 10^{-176}$ \\
2 & age & $2.90\times 10^{-34}$ \\
3 & dementia & $9.15\times 10^{-15}$ \\
4 & number of comorbidities & $4.76\times 10^{-14}$ \\
5 & heart rate & $6.93\times 10^{-9}$ \\
6 & mean arterial blood pressure & $2.48\times 10^{-8}$ \\
7 & diabetes & $3.43\times 10^{-5}$ \\
8 & female & $4.77\times 10^{-5}$ \\
9 & white blood count & $5.63\times 10^{-4}$ \\
10 & temperature & $5.94\times 10^{-3}$ \\
11 & respiration rate & $1.31\times 10^{-1}$ \\
12 & serum creatinine & $3.13\times 10^{-1}$ \\
13 & serum sodium & $3.32\times 10^{-1}$ \\
14 & race & $5.28\times 10^{-1}$ \\
\bottomrule
\end{tabular}
}
\\[.75em]
\adjustbox{width=.85\linewidth}{
\begin{tabular}{rll}
\toprule
\multicolumn{3}{c}{Cluster 2} \\
\midrule
Rank & Feature & p-value \\ \midrule
1 & cancer & $3.41\times 10^{-175}$ \\
2 & age & $8.46\times 10^{-36}$ \\
3 & dementia & $6.14\times 10^{-15}$ \\
4 & number of comorbidities & $1.38\times 10^{-12}$ \\
5 & heart rate & $7.64\times 10^{-10}$ \\
6 & mean arterial blood pressure & $1.98\times 10^{-8}$ \\
7 & diabetes & $6.49\times 10^{-5}$ \\
8 & female & $9.97\times 10^{-4}$ \\
9 & temperature & $8.66\times 10^{-3}$ \\
10 & white blood count & $1.71\times 10^{-2}$ \\
11 & serum sodium & $2.14\times 10^{-2}$ \\
12 & race & $2.45\times 10^{-1}$ \\
13 & serum creatinine & $2.45\times 10^{-1}$ \\
14 & respiration rate & $4.42\times 10^{-1}$ \\
\bottomrule
\end{tabular}
} \\[.75em]
\adjustbox{width=.85\linewidth}{
\begin{tabular}{rll}
\toprule
\multicolumn{3}{c}{Cluster 3} \\
\midrule
Rank & Feature & p-value \\ \midrule
1 & cancer & $1.34\times 10^{-175}$ \\
2 & age & $4.95\times 10^{-34}$ \\
3 & dementia & $9.15\times 10^{-15}$ \\
4 & number of comorbidities & $3.29\times 10^{-14}$ \\
5 & heart rate & $1.48\times 10^{-8}$ \\
6 & mean arterial blood pressure & $8.29\times 10^{-8}$ \\
7 & diabetes & $6.40\times 10^{-6}$ \\
8 & female & $1.25\times 10^{-4}$ \\
9 & white blood count & $1.49\times 10^{-3}$ \\
10 & temperature & $1.31\times 10^{-2}$ \\
11 & respiration rate & $1.59\times 10^{-1}$ \\
12 & serum creatinine & $1.60\times 10^{-1}$ \\
13 & race & $2.18\times 10^{-1}$ \\
14 & serum sodium & $3.04\times 10^{-1}$ \\
\bottomrule
\end{tabular}
}
\end{table}

\begin{figure*}[p!]
\centering
\includegraphics[scale=.6]{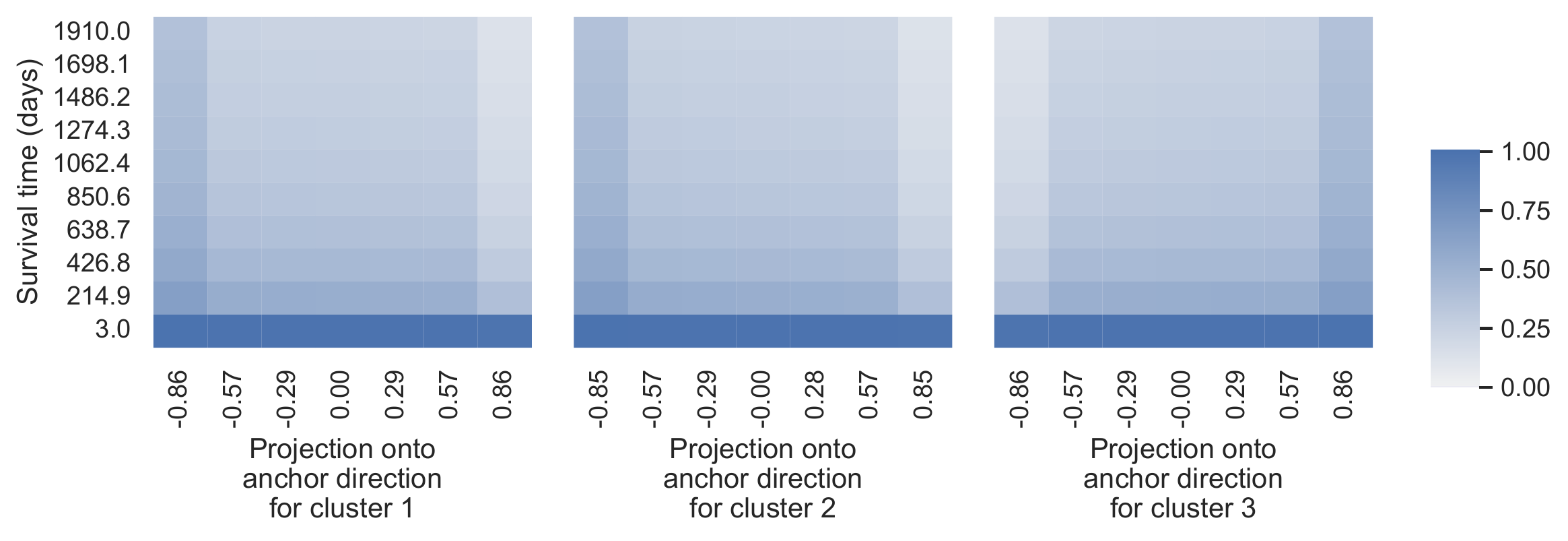}
\vspace{-1.5em}
\caption{SUPPORT dataset: survival probability heatmaps for a DeepSurv model where the encoder does \emph{not} have a Euclidean norm~1 constraint. These heatmaps are for each cluster's anchor direction (for all 3 clusters in a 3-component Gaussian mixture model).}
\label{fig:support-survival-probability-heatmaps-no-hypersphere}
\end{figure*}

\abovesubsectionskip
\subsection{Visualizations Using an Encoder \emph{Without} the Euclidean Norm~1 Constraint}
\label{sec:support-additional-results-no-hypersphere}
\belowsubsectionskip

We now present results using the exact same setup as described in Section~\ref{sec:tabular} and detailed in Appendices~\ref{sec:support-hyperparameter-grid} and~\ref{sec:support-additional-results}, where the only differences are that: (i) the final nonlinear activation layer in the encoder~$\phi$ is ReLU instead of dividing the intermediate representation by its Euclidean norm, and (ii) the clustering model used in the embedding space is a Gaussian mixture model. For reference, this model achieves a test set concordance index of 0.615, which is close to what was achieved with the model that includes the Euclidean norm~1 constraint.

The violin plot for selecting the number of clusters is shown in \figureref{fig:support-logrank-helper-no-hypersphere}, where we choose the number of clusters to be $k=3$. For this 3-component Gaussian mixture model, we find the anchor directions corresponding to its 3 clusters and then show scatter plots of age vs anchor projections of all 3 clusters in \figureref{fig:support-age-vs-proj-no-hypersphere}, raw feature probability heatmaps for all 3 clusters in \figureref{fig:support-raw-feature-probability-heatmaps-no-hypersphere}, raw feature rankings in Table~\ref{tab:support-feature-ranking-no-hypersphere}, and survival probability heatmaps for all 3 clusters in \figureref{fig:support-survival-probability-heatmaps-no-hypersphere}.

A key point we want to emphasize is that in the scatter plots (\figureref{fig:support-age-vs-proj-no-hypersphere}), we can see the ``clumping up'' artifact we mentioned in Section~\ref{sec:information-loss} that indicates that there is likely a lot of magnitude information lost (specifically, a lot of the points in the scatter plot ``clump up'' around $-1$ an $1$).

In this case, the top features across clusters are largely the same (Table~\ref{tab:support-feature-ranking-no-hypersphere}). From looking at the %
raw feature probability heatmaps (\figureref{fig:support-raw-feature-probability-heatmaps-no-hypersphere}), focusing on the largest projection bin per heatmap, we find that clusters 1 and 2 actually appear quite similar: both appear to be associated with older patients who often have cancer and at least one comorbidity. In contrast, cluster 3 appears to be associated with younger patients without cancer. Meanwhile, the survival probability heatmap (\figureref{fig:support-survival-probability-heatmaps-no-hypersphere}) indicates that indeed clusters 1 and 2 are associated with survival functions that decay quickly (indicative of the survival times tending to be shorter) whereas cluster 3 is associated with a survival function that decays slowly. These are the same main findings as for the version of the model that used the Euclidean norm~1 constraint. In fact, even if we used 2 clusters, we get the same main findings.

We point out that the major challenge when there is a lot of information loss due to magnitudes being ignored is that our visualization heatmaps will end up each consisting of essentially only two projection bins along the x-axis (that have enough data in them: one that contains projection values around $-1$ and the other that contains projection values around $1$). In this case, we could still of course find interesting relationships of how a raw feature changes with respect to an anchor direction but we would only be seeing what this change looks like (if there is any) at two x-axis values. %
We would only be able to check for monotonic trends between how a raw feature relates to two projection values along a specific anchor direction.

\abovesectionskip
\section{Rotterdam/GBSG Experiment}
\label{sec:rotterdam-gbsg}
\belowsectionskip

\begin{table*}[p!]
\caption{Rotterdam/GBSG datasets: rankings of raw features based on the \mbox{p-value} of Pearson's chi-squared test for a DeepSurv model where the encoder has a Euclidean norm~1 constraint. These tables are for all 3 clusters from a 3-component mixture of von Mises-Fisher distributions.\label{tab:rotterdam-gbsg-feature-ranking}}
\vspace{-.75em}
\centering
\adjustbox{width=.425\linewidth}{
\begin{tabular}{rll}
\toprule
\multicolumn{3}{c}{Cluster 1} \\
\midrule
Rank & Feature & p-value \\ \midrule
1 & age & $6.22\times 10^{-50}$ \\
2 & postmenopausal & $1.36\times 10^{-40}$ \\
3 & number of positive nodes & $3.51\times 10^{-27}$ \\
4 & estrogen receptor & $2.67\times 10^{-14}$ \\
5 & progesterone receptor & $1.99\times 10^{-7}$ \\
6 & tumor size & $4.11\times 10^{-5}$ \\
7 & hormonal therapy & $8.39\times 10^{-3}$ \\
\bottomrule
\end{tabular}
} %
~
\adjustbox{width=.425\linewidth}{
\begin{tabular}{rll}
\toprule
\multicolumn{3}{c}{Cluster 2} \\
\midrule
Rank & Feature & p-value \\ \midrule
1 & number of positive nodes & $1.58\times 10^{-98}$ \\
2 & hormonal therapy & $4.32\times 10^{-14}$ \\
3 & tumor size & $2.52\times 10^{-9}$ \\
4 & progesterone receptor & $6.66\times 10^{-6}$ \\
5 & age & $1.47\times 10^{-5}$ \\
6 & estrogen receptor & $5.02\times 10^{-4}$ \\
7 & postmenopausal & $6.31\times 10^{-2}$ \\
\bottomrule
\end{tabular}
} \\[.75em]
\adjustbox{width=.425\linewidth}{
\begin{tabular}{rll}
\toprule
\multicolumn{3}{c}{Cluster 3} \\
\midrule
Rank & Feature & p-value \\ \midrule
1 & age & $1.07\times 10^{-35}$ \\
2 & postmenopausal & $5.56\times 10^{-33}$ \\
3 & number of positive nodes & $2.61\times 10^{-30}$ \\
4 & hormonal therapy & $2.47\times 10^{-10}$ \\
5 & estrogen receptor & $2.05\times 10^{-3}$ \\
6 & tumor size & $2.61\times 10^{-1}$ \\
7 & progesterone receptor & $2.62\times 10^{-1}$ \\
\bottomrule
\end{tabular}
}
\end{table*}

As mentioned in this main paper, we also have visualizations where we train on the Rotterdam dataset (using the exact same neural network architecture as we used for SUPPORT, including the Euclidean norm~1 constraint; we specifically use the same hyperparameter grid and optimization strategy as detailed in Appendix~\ref{sec:support-hyperparameter-grid}). For reference, when we test on the GBSG dataset, we get a concordance index of 0.677. For the visualizations to follow, we randomly choose 25\% of the GBSG dataset to treat as the anchor direction estimation data, and we use the feature vectors from the rest of the data as the visualization raw inputs $x_1^{\viz},\dots,x_{n^{\viz}}^{\viz}$. Note that technically how we have set up the DeepSurv model here violates the i.i.d.~assumption between training and anchor direction estimation data, as well as the assumption that the visualization raw inputs come from the same distribution as the raw inputs of the training data. However, our visualization framework still actually works when the training data are sampled differently. We discuss this in a bit more detail next before going over the resulting visualizations.

\smallskip
\noindent
\textbf{A different distribution for training data.}
In Section~\ref{sec:framework}, when we discussed statistical assumptions and sample splitting, we had, for simplicity, assumed that the training data and anchor direction estimation data were sampled i.i.d.~from the distribution (which technically is a joint distribution $\mathbb{P}_{X,Y,\Delta}$ defined for raw input $X$ with observed time $Y$ and event indicator $\Delta$; here, $X$, $Y$, and $\Delta$ are random variables), and that the visualization raw inputs are drawn from the marginal raw input distribution $\mathbb{P}_X$. In fact, the training data could be sampled differently so long as the anchor direction estimation data and visualization raw inputs are sampled in a manner that is independent of the training data, which ensures that we do not encounter the issue stated in Appendix~\ref{sec:statistical-considerations-anchor-direction-estimation}.

To be more precise, our visualization framework still holds if the anchor direction estimation data are sampled i.i.d.~from $\mathbb{P}_{X,Y,\Delta}$ and the visualization raw inputs are sampled from $\mathbb{P}_X$, but now the training data are sampled i.i.d.~from some other distribution $\mathbb{Q}_{X,Y,\Delta}$ and the training data are independent from the anchor direction estimation data and the visualization raw inputs. After all, the statistical analyses we conduct are all conditioned on the training data and the encoder $\phi$; we just needed to ensure that conditioning on the training data and~$\phi$ did not result in dependence between anchor direction estimation data or the visualization raw inputs.

\smallskip
\noindent
\textbf{Visualization results and interpretations.}
We show the violin plot for selecting the number of clusters for a mixture of von Mises-Fisher distributions in \figureref{fig:rotterdam-gbsg-logrank-helper}, where we choose the number of clusters to be $k=3$. For this 3-component mixture model, we find the anchor directions corresponding to its 3 clusters and then show raw feature probability heatmaps for all 3 clusters in \figureref{fig:rotterdam-gbsg-raw-feature-prob-heatmaps}, raw feature rankings in Table~\ref{tab:rotterdam-gbsg-feature-ranking}, and survival probability heatmaps for all 3 clusters in \figureref{fig:rotterdam-gbsg-surv-prob-heatmaps}.

The main findings from our visualizations are as follows: clusters~1 and~3 are both associated with patients who tend to have very few lymph nodes that contain cancer, which in turn is associated with longer survival times (the survival probability functions in the rightmost of the survival probability heatmaps for clusters~1 and~3 decay slowly). Interestingly, clusters~1 and~3 differ largely in that cluster 1 is for older women (being postmenopausal is flagged as being very probable for cluster 1 which is of course related to age) and cluster 3 is for younger women (who have not undergone menopause). Meanwhile, the anchor direction for cluster~2 is associated with women who have a large number of lymph nodes that have cancer and the tumor sizes tend to be large as well. This of course means that cancer has advanced quite significantly and, unsurprisingly, cluster~2 is associated with shorter survival times (its survival probability heatmap's rightmost column decays quickly).

\begin{figure*}[p!]
\floatconts
  {fig:rotterdam-gbsg}
  {\caption{Rotterdam/GBSG datasets: visualizations for a DeepSurv model where the encoder has a Euclidean norm~1 constraint. Panel $(a)$ shows a violin plot to help select the number of clusters (and thus the number of anchor directions) for use with a mixture of von Mises-Fisher distributions, where we choose $k=3$ for the subsequent panels. Panel $(b)$ shows raw feature probability heatmaps for the three clusters' estimated anchor directions. Panel $(c)$ shows the survival probability heatmaps for the same anchor directions as panel $(b)$. Details on the dataset, encoder, and clustering model are in Appendix~\ref{sec:rotterdam-gbsg}.}}
  {
    \subfigure[]{\label{fig:rotterdam-gbsg-logrank-helper}
      \includegraphics[scale=.45, trim={0pt 9pt 0pt 0pt}, clip]{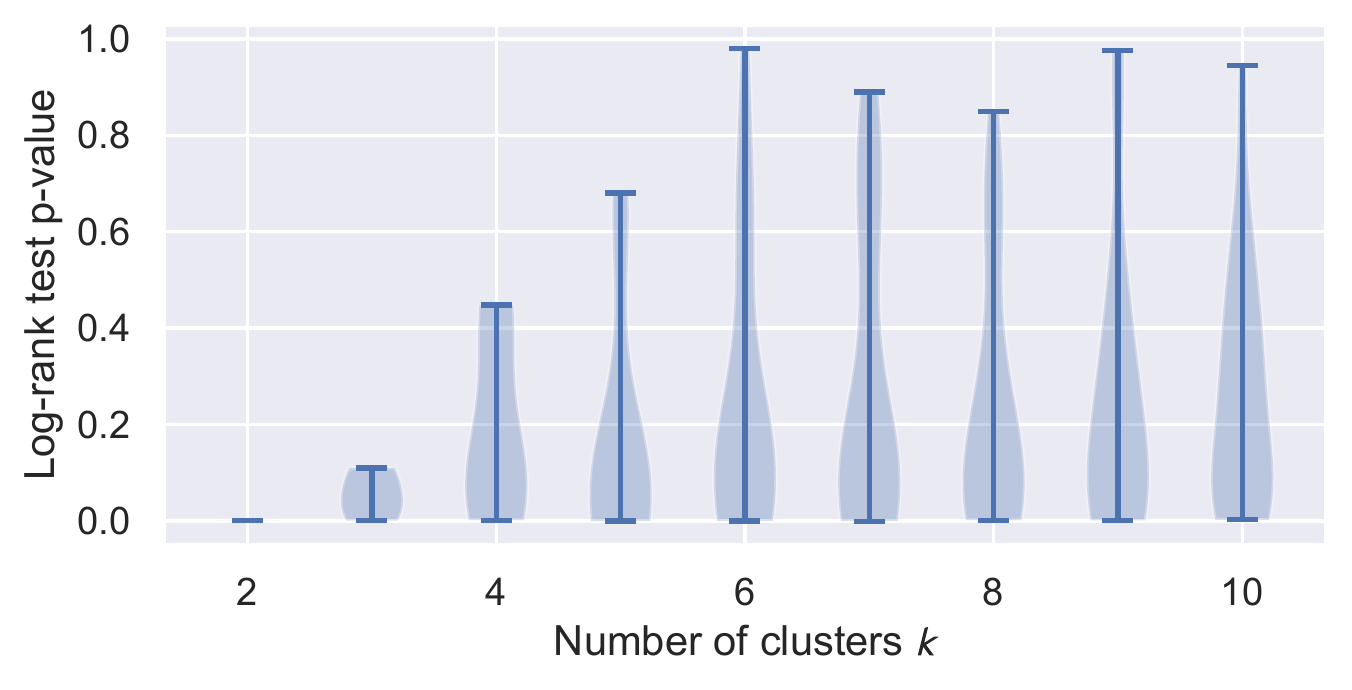}
    }
    \subfigure[]{\label{fig:rotterdam-gbsg-raw-feature-prob-heatmaps}
      \includegraphics[scale=.45, trim={0pt 9pt 0pt 0pt}, clip]{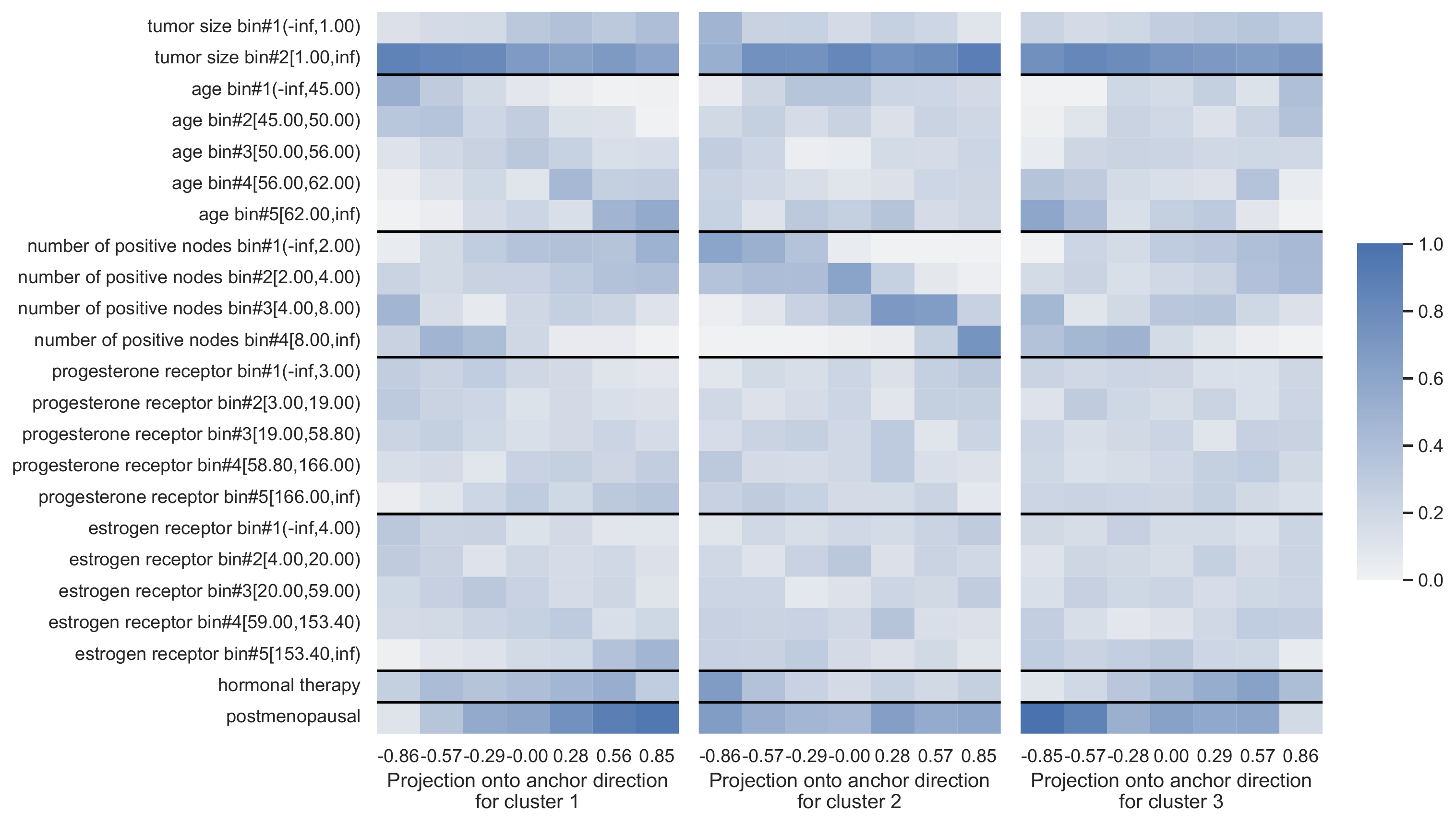}
    }
    \subfigure[]{\label{fig:rotterdam-gbsg-surv-prob-heatmaps}
      \includegraphics[scale=.45, trim={0pt 9pt 0pt 0pt}, clip]{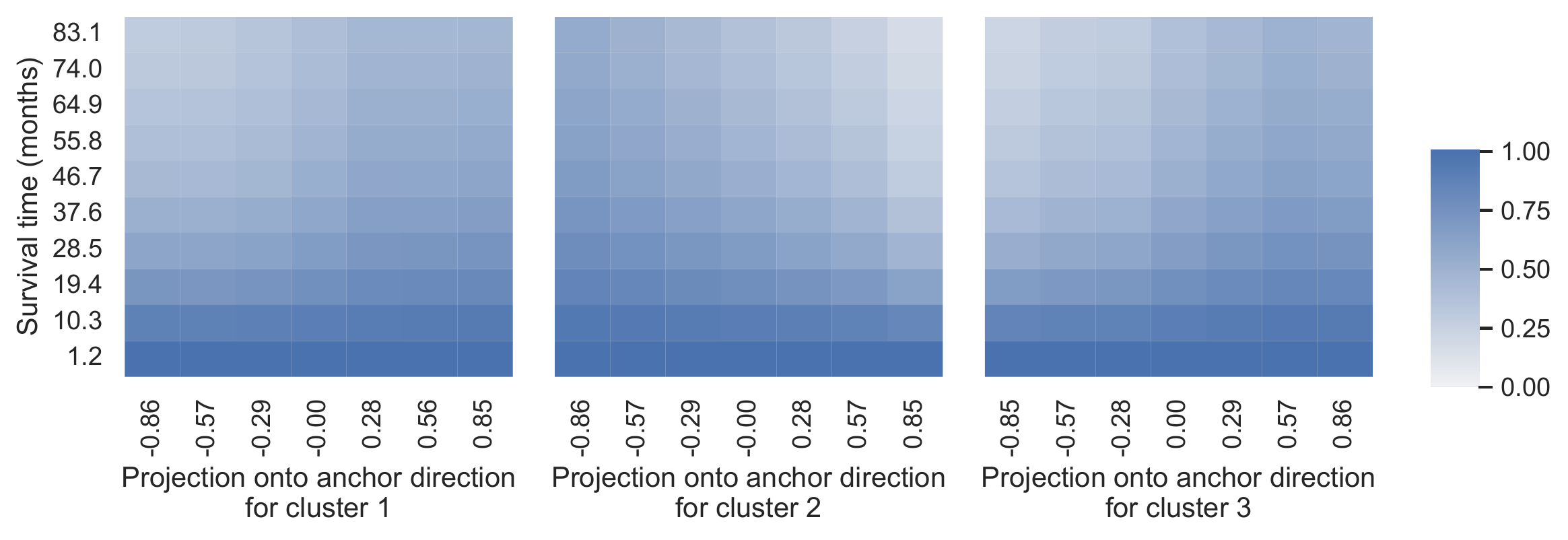}
    }
    \vspace{-1.75em}
  }
\end{figure*}

\abovesectionskip
\section{Tabular Data: Finding Interactions Between Raw Features in Predicting Projection Values Along an Anchor Direction}
\label{sec:interactions}
\belowsectionskip

\begin{figure*}[t!]
\centering
\includegraphics[width=.85\linewidth]{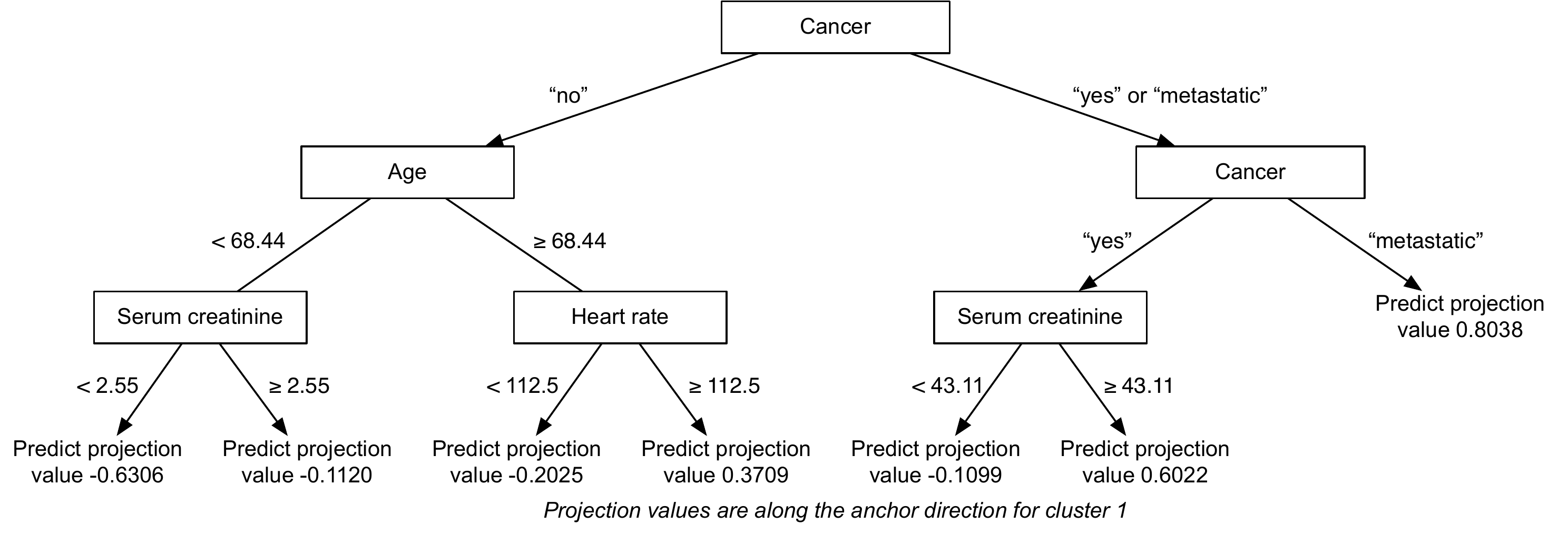}
\vspace{-1em}
\caption{SUPPORT dataset: using a DeepSurv model where the encoder has a Euclidean norm~1 constraint, we show an optimal regression tree trained using visualization raw inputs (feature vectors) to predict projection values (regression target labels) along cluster 1's anchor direction. Cluster~1 is the same cluster that we provided visualizations for throughout Section~\ref{sec:tabular} and in Appendix~\ref{sec:support-additional-results}.}
\label{fig:sparse-regression-tree}
\vspace{-1.5em}
\end{figure*}

We mention two methods that can be used to probe raw feature interactions in predicting projection values along an anchor direction~$\mu$.

\smallskip
\noindent
\textbf{Fitting a regression model that is ``easy to interpret'' and accounts for feature interactions.}
One way to find possible interactions is to fit any regression model that is straightforward to interpret and that can surface possible feature interactions, using feature vectors $x_1^{\viz},\dots,x_{n^{\viz}}^{\viz}$ and target regression labels $p_1^{\viz},\dots,p_{n^{\viz}}^{\viz}$. For example, we could fit a so-called \emph{optimal regression tree} using mixed-integer optimization \citep{dunn2018optimal}. An example of such a tree is shown in \figureref{fig:sparse-regression-tree}. The reason why this tree encodes feature interaction information is apparent when we look at any leaf. For example, the leftmost leaf with predicted projection value $-0.6306$ corresponds to the intersection of the constraints ``cancer $=$ no'', ``age $<$ 68.44'', and ``serum creatinine $<$ 2.55'', showing an interaction between the variables ``cancer'', ``age'', and ``serum creatinine'' that depends on them satisfying specific inequalities.

Note that this sort of approach should be used with care. Specifically, by using different splits of data (e.g., different train/validation splits) to train the tree, it is possible that the resulting trees look different and might suggest different raw features to matter, and the raw features that interact might vary across experimental repeats. Another issue is that such tree learning algorithms have hyperparameter(s) for controlling the tree complexity, such as the max tree depth, and for different such hyperparameter choices, the raw features that are found to be important or that are shown to interact might vary.

\smallskip
\noindent
\textbf{Archipelogo.}
As an alternative approach to finding possible feature interactions, we point out that one could use the \mbox{Archipelogo} framework \citep{tsang2020does} that is designed to find feature interactions of a black-box model (in this case, the encoder $\phi$) when provided with specific raw inputs (e.g., $x_1^{\viz},\dots,x_{n^{\viz}}^{\viz}$).

\abovesectionskip
\section{Survival MNIST Experiment}
\label{sec:survival-mnist}
\belowsectionskip

\subsection{Dataset}
\label{sec:survival-mnist-dataset}
\belowsubsectionskip

The Survival MNIST dataset builds off of the original MNIST classification dataset \citep{lecun2010mnist}, which consists of 60,000 training images and 10,000 test images. All images are 28-by-28 pixel grayscale images of handwritten digits. Each image has a target label corresponding to which of the 10 digits the image corresponds to. \citet{polsterl2019survival} modified the MNIST dataset so that the labels for training and test images are instead survival labels (observed times and event indicators) that are synthetically generated. There is some flexibility in this synthetic generation process. We specifically use the same synthetic survival label generation procedure as \citet{goldstein2020x}, who also use the Survival MNIST dataset. Specifically, for each digit $j\in\{0,1,\dots,9\}$, we let $m_j$ denote the true mean survival time for digit~$j$, where:
\begin{itemize}[leftmargin=*,itemsep=0pt,parsep=0pt,topsep=1pt]
\item $m_0 = 11.25$
\item $m_1 =  2.25$
\item $m_2 =  5.25$
\item $m_3 =  5.0$
\item $m_4 =  4.75$
\item $m_5 =  8.0$
\item $m_6 =  2.0$
\item $m_7 = 11.0$
\item $m_8 =  1.75$
\item $m_9 = 10.75$
\end{itemize}
These mean survival times are also shown in \figureref{fig:survival-mnist-ground-truth-mean-survival-times}. The digits are ranked (in increasing order of mean survival time) as: 8, 6, 1, 4, 3, 2, 5, 9, 7, 0. Note that the 10 digits are grouped into four ground truth risk groups: %
$\{0,7,9\},\{1,6,8\},\{2,3,4\},\{5\}$; within each risk group, the digits in the group have very similar ground truth mean survival times.

For each training image $x_i\in\mathcal{X}$ with digit label $\eta_i\in\{0,1,\dots,9\}$, we sample its true survival time $t_i$ from a Gamma distribution with mean $m_{\eta_i}$ and variance $10^{-3}$. After we generate survival times $t_1,\dots,t_n$ for all training data, we then sample the censoring times $c_1,\dots,c_n$ i.i.d.~from a uniform distribution between $\min\{t_1,\dots,t_n\}$ and the 90th percentile value of $t_1,\dots,t_n$. Finally, we set the training data's observed times and event indicators to be $y_i=\min\{t_i,c_i\}$ and $\delta_i=\ind\{t_i\le c_i\}$ respectively. This results in a censoring rate $(1 - \frac{1}{n}\sum_{i=1}^n \delta_i)$ of approximately 50\%. The test data are generated separately from the training data but in the same manner.

As a reminder, when training a survival analysis model with the Survival MNIST dataset, the training data are $\{(x_i,y_i,\delta_i)\}_{i=1}^n$. The true digit labels $\eta_1,\dots,\eta_n$ are not available to the training procedure. For test data, we also have access to their digit labels.

\begin{figure}[t!]
\centering
\includegraphics[width=.9\linewidth]{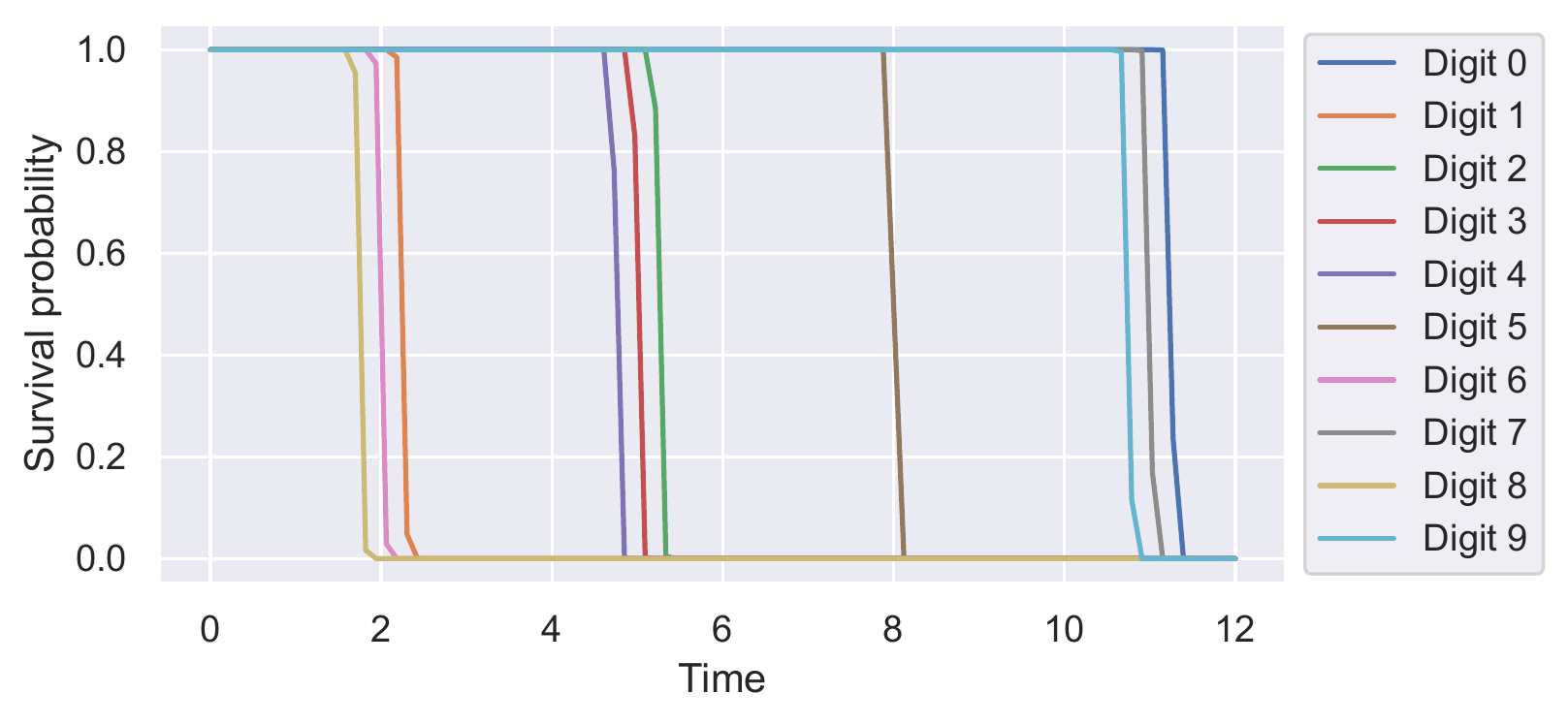}
\vspace{-1.5em}
\caption{Ground truth survival functions for the different digits.\label{fig:survival-mnist-ground-truth-survival-curves}}
\vspace{-2em}
\end{figure}

Some important remarks regarding this dataset are in order:
\begin{itemize}[leftmargin=*,itemsep=0pt,parsep=0pt,topsep=1pt]
\item \emph{The proportional hazards assumption does not hold for the underlying survival distributions of the different digits.} One way to see this is that by plotting the true survival functions of the different digits (per digit, we can get the true survival function by looking at 1~minus the CDF of the Gamma distribution associated with the digit), these survival functions are not powers of one another; we show these ground truth survival functions in \figureref{fig:survival-mnist-ground-truth-survival-curves}. (When the proportional hazards assumption holds, all survival functions are powers of an underlying ``baseline survival function''.) This means that if we fit a model with a proportional hazards assumption, such as the DeepSurv model, we should not expect the predicted conditional survival functions to be correct although these should be able to ``correctly order'' the survival times of the different digits. Recall that the digits are ordered by true mean survival time (in increasing order) as 8,~6,~1,~4,~3,~2,~5,~9,~7,~0. We would like the ``average predicted survival function for digit 8'' to be lower than that of digit 6, which should be lower than that of digit 1, and so forth. %
\item \emph{By how the censoring mechanism is set up, digits with higher mean survival times have higher censoring rates.} This is because the uniform distribution for censoring times has its maximum set to be the 90th percentile value of randomly generated survival times, so digits with high ground truth mean survival times get censored more often. For example, for the test data we generated, we have the following censoring rates for the different digits (we have ordered the digits in increasing order of ground truth mean survival time):
\begin{itemize}[leftmargin=*,itemsep=0pt,parsep=0pt,topsep=1pt]
\item Digit 8: 1.23\%
\item Digit 6: 2.92\%
\item Digit 1: 5.64\%
\item Digit 4: 34.42\%
\item Digit 3: 35.25\%
\item Digit 2: 37.98\%
\item Digit 5: 71.19\%
\item Digit 9: 96.53\%
\item Digit 7: 99.22\%
\item Digit 0: 100.00\%
\end{itemize}
Digits 0, 7, and 9 have censoring rates higher than 96\%, with digit~0's censoring rate at 100\% (in the training and test sets we generated, digit~0 is always censored). This means that that their randomly generated observed times and event indicators often look identical. Thus, when learning a neural survival analysis model using the training data, the learned embedding space (that we aim to visualize) would likely have trouble distinguishing between these three digits. Having part of the embedding space correspond only to digit~0 and not~7 or~9 would be particularly difficult.
\end{itemize}

\abovesubsectionskip
\subsection{Neural Survival Analysis Model and Encoder Setup}
\label{sec:survival-mnist-encoder}
\belowsubsectionskip

We set the base neural network $f$ to be a convolutional neural network (CNN) consisting of the following sequence of layers:
\begin{itemize}[leftmargin=*,itemsep=0pt,parsep=0pt,topsep=1pt]
\item Conv2D layer with 32 filters (each 3-by-3)
\item Nonlinear activation: ReLU
\item MaxPool2D layer (2-by-2)
\item Conv2D layer with 16 filters (each 3-by-3)
\item Nonlinear activation: ReLU
\item MaxPool2D layer (2-by-2)
\item Flatten
\item Fully-connected layer (that maps to $d$ outputs)
\item Nonlinear activation: Divide each vector by its Euclidean norm %
\item Fully-connected layer (map $d$ inputs to 1 output)
\end{itemize}
We take the encoder $\phi$ to be everything excluding the last fully-connected layer.

Note that the above choice of CNN is somewhat arbitrary. The goal of our paper here is not to provide visualizations for the very best CNN possible for Survival MNIST. Rather, we just aim to show that for a choice of CNN that is straightforward to implement, we can readily provide visualizations for one of its intermediate representations.

Just as in the tabular data setup, we train the neural network using minibatch gradient descent with at most 100 epochs and early stopping (no improvement in the validation concordance index after 10 epochs). We use Adam to optimize, and we sweep over the following hyperparameters:
\begin{itemize}[leftmargin=*,itemsep=0pt,parsep=0pt,topsep=1pt]
\item Batch size: 64, 128
\item Learning rate: 0.01, 0.001
\item Embedding dimension $d$: 10, 20, 30, 40, 50
\end{itemize}
We ran the experiments for this dataset on the same compute instance mentioned in Appendix~\ref{sec:support-hyperparameter-grid}.
After training the DeepSurv model, the model achieved a test set concordance index of 0.953.

\abovesubsectionskip
\subsection{Additional Visualizations}
\label{sec:survival-mnist-additional-results}
\belowsubsectionskip

\begin{figure}[t!]
\centering
\includegraphics[width=.9\linewidth]{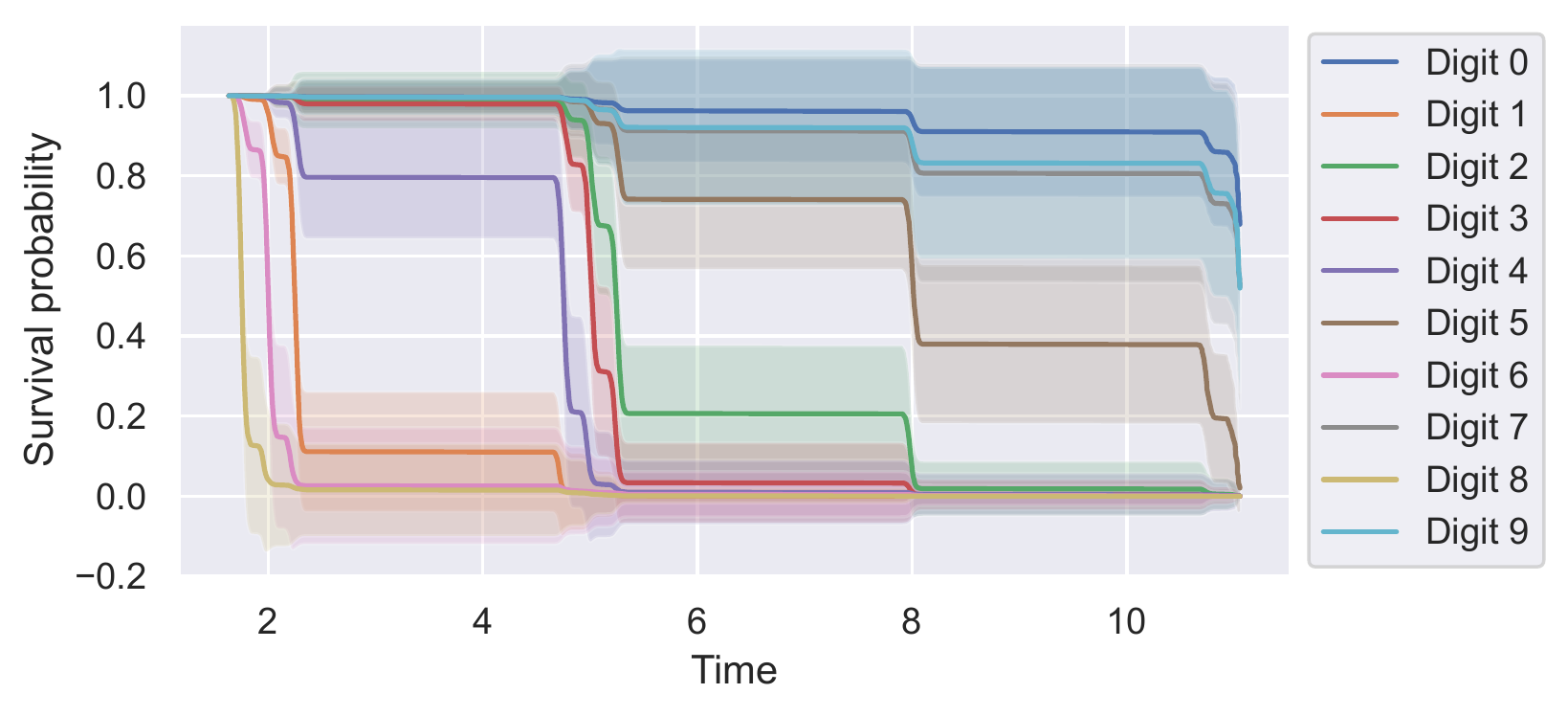}
\vspace{-1.5em}
\caption{Predicted survival functions for the different digits (mean $\pm$ standard deviation at each time point).\label{fig:survival-mnist-predicted-survival-curves}}
\vspace{-2em}
\end{figure}

\begin{figure*}[p!]
\centering
~\includegraphics[width=.49\linewidth, trim={0pt 7pt 0pt 0pt}, clip]{figures/survival-mnist-concept0-hypersphere}~\includegraphics[width=.49\linewidth, trim={0pt 7pt 0pt 0pt}, clip]{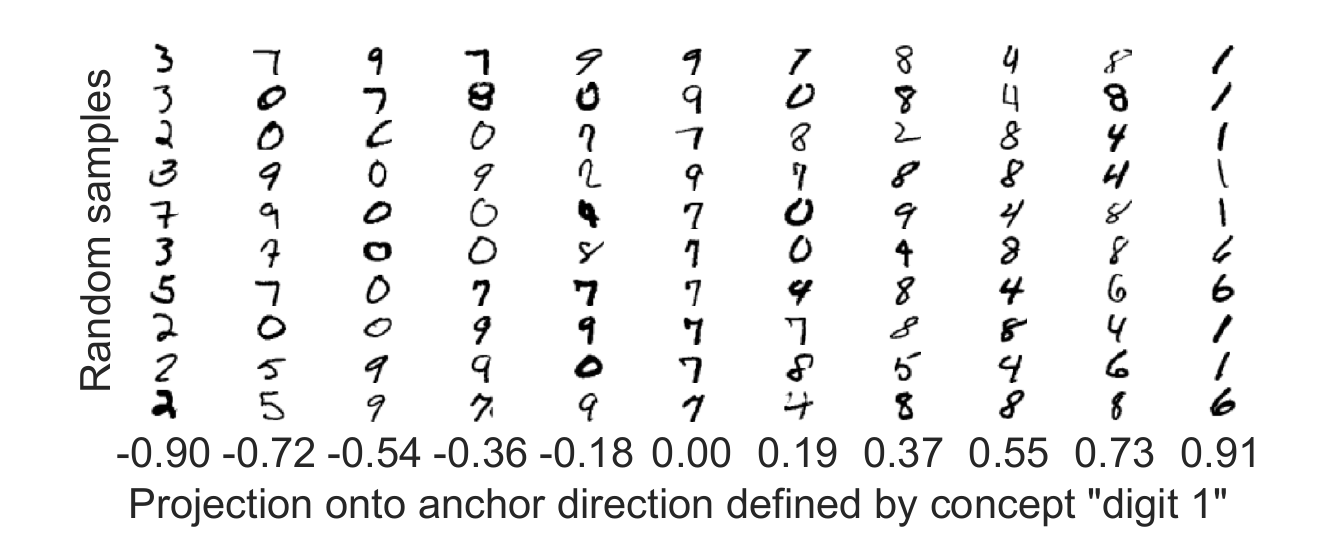} \\
~\includegraphics[width=.49\linewidth, trim={0pt 7pt 0pt 0pt}, clip]{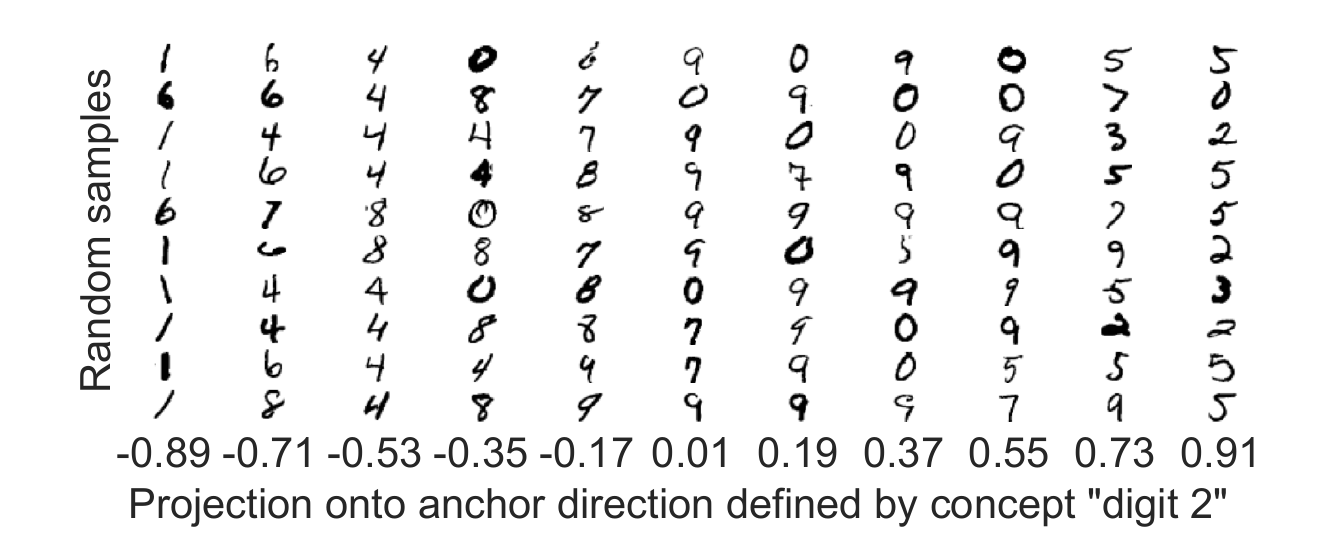}~\includegraphics[width=.49\linewidth, trim={0pt 7pt 0pt 0pt}, clip]{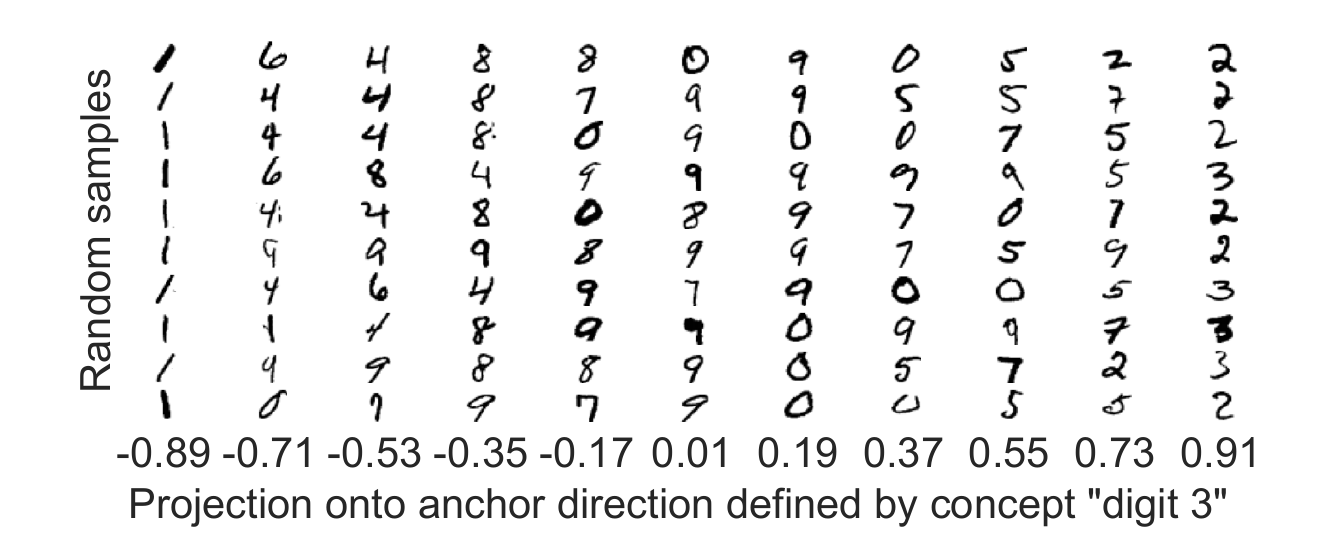} \\
~\includegraphics[width=.49\linewidth, trim={0pt 7pt 0pt 0pt}, clip]{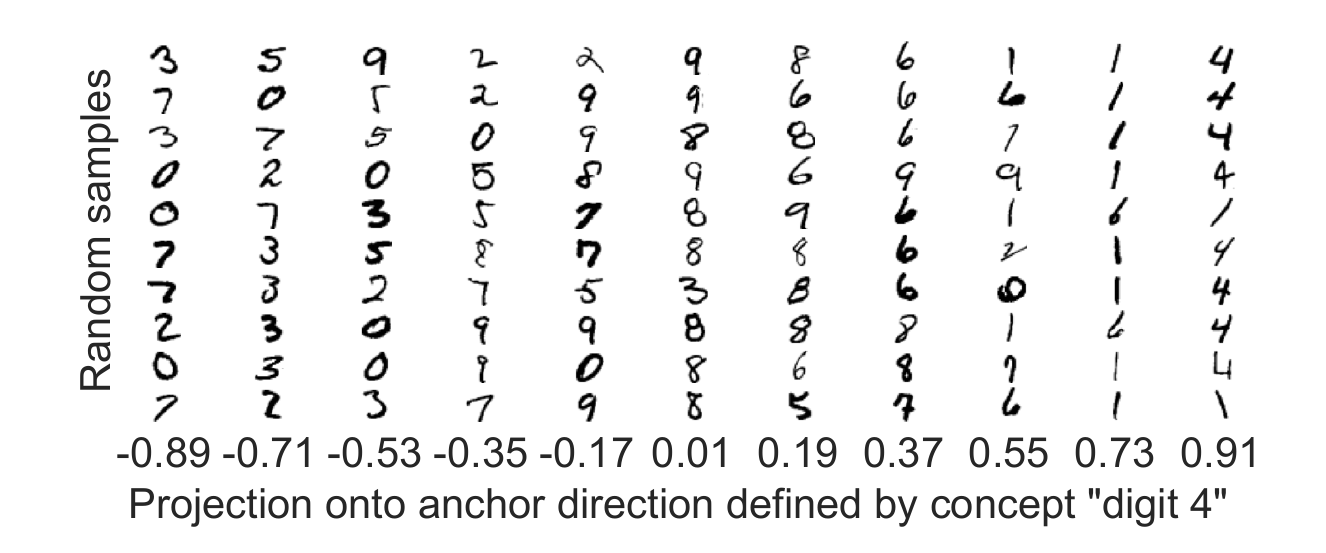}~\includegraphics[width=.49\linewidth, trim={0pt 7pt 0pt 0pt}, clip]{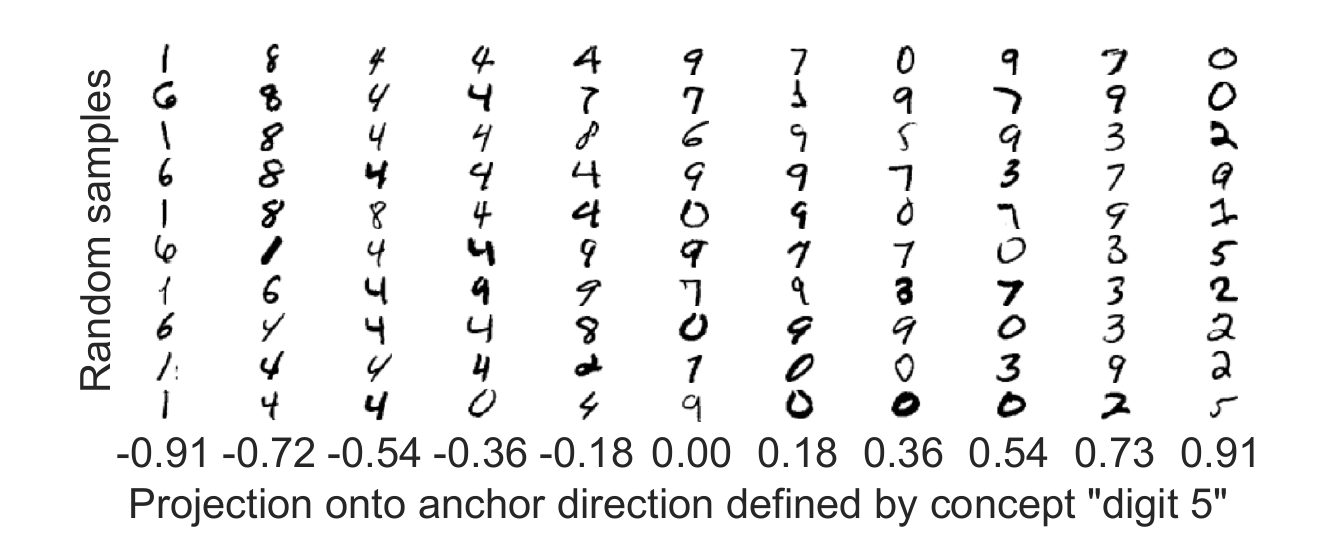} \\
~\includegraphics[width=.49\linewidth, trim={0pt 7pt 0pt 0pt}, clip]{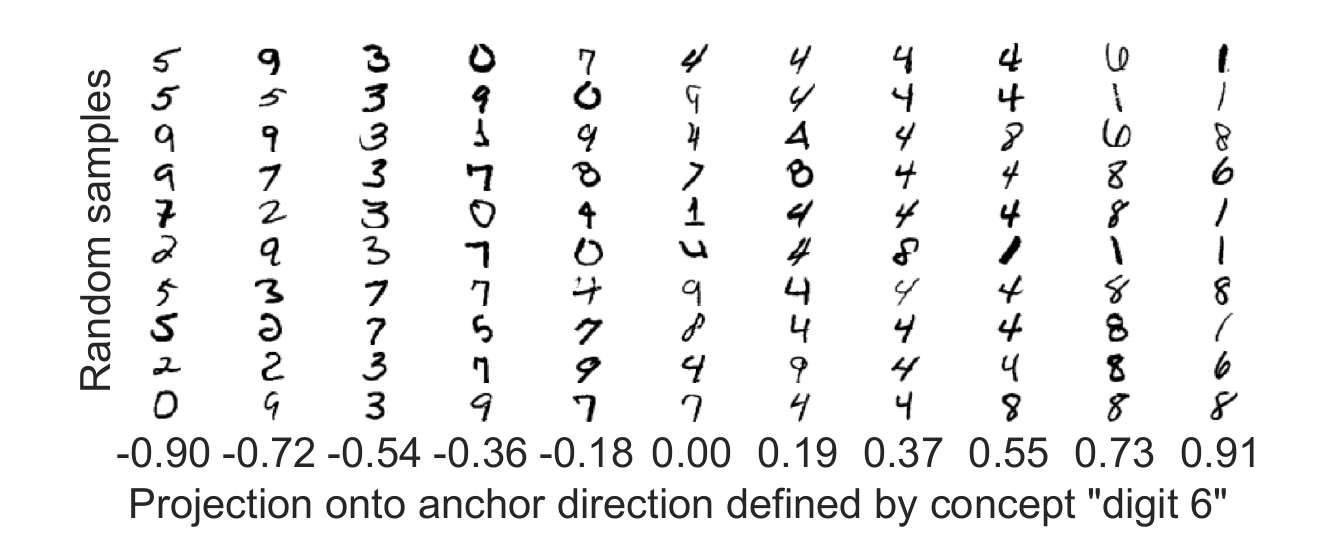}~\includegraphics[width=.49\linewidth, trim={0pt 7pt 0pt 0pt}, clip]{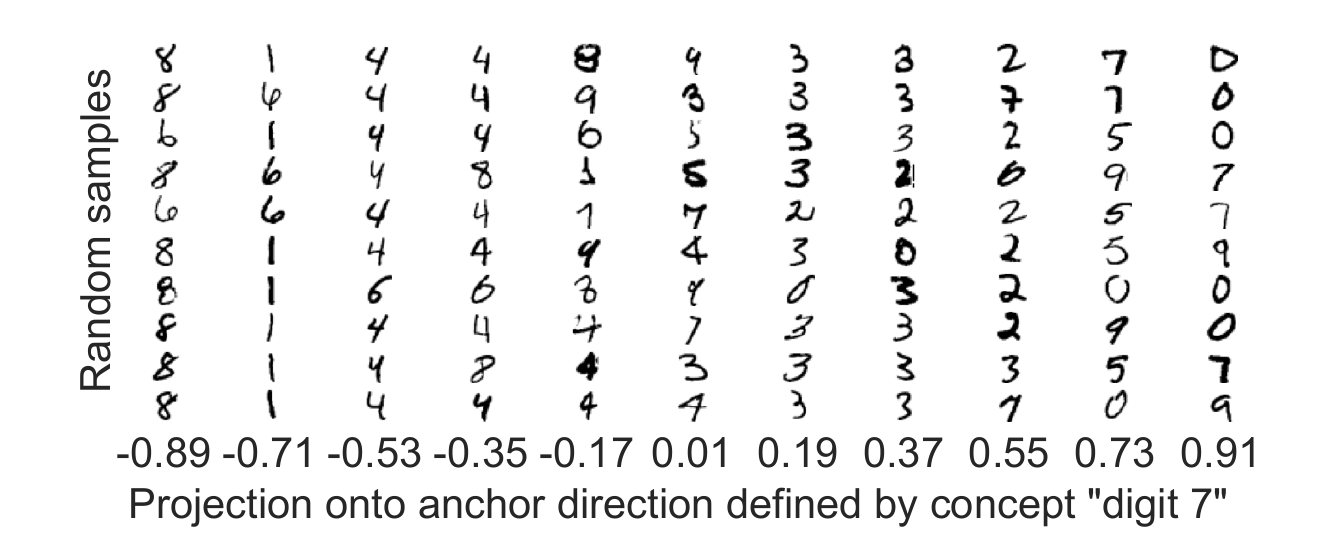} \\
~\includegraphics[width=.49\linewidth, trim={0pt 7pt 0pt 0pt}, clip]{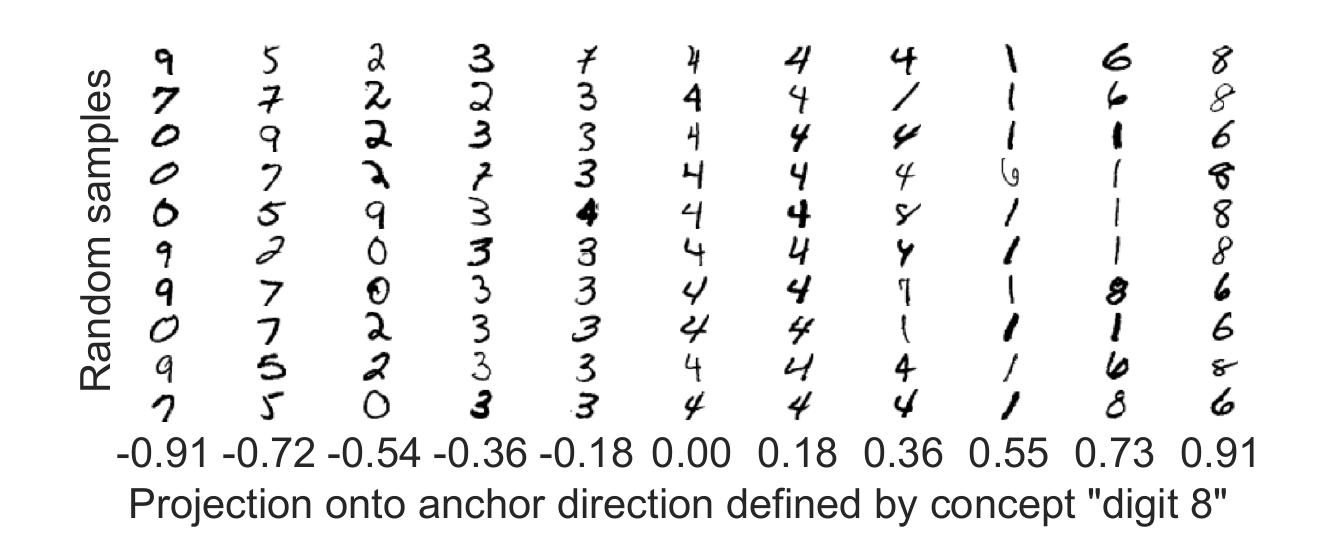}~\includegraphics[width=.49\linewidth, trim={0pt 7pt 0pt 0pt}, clip]{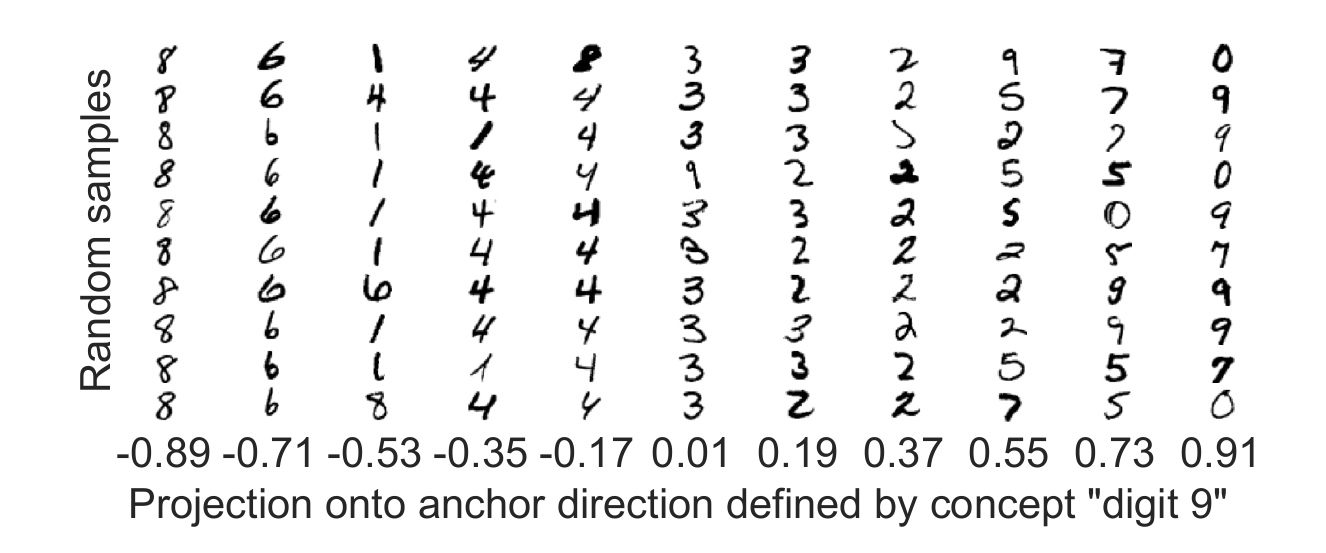}
\vspace{-.75em}
\caption{Survival MNIST: we treat each of the 10 digits as a concept that we compute an anchor direction for, and then we produce random input vs projection plots %
for the 10 anchor directions. For each plot, per projection bin, we sample 10 random visualization raw inputs.\label{fig:survival-mnist-random-samples-vs-projection}}
\vspace{-2.6em}
\end{figure*}

\begin{figure*}[t!]
\centering
\includegraphics[width=.97\linewidth, trim={0pt 7pt 0pt 0pt}, clip]{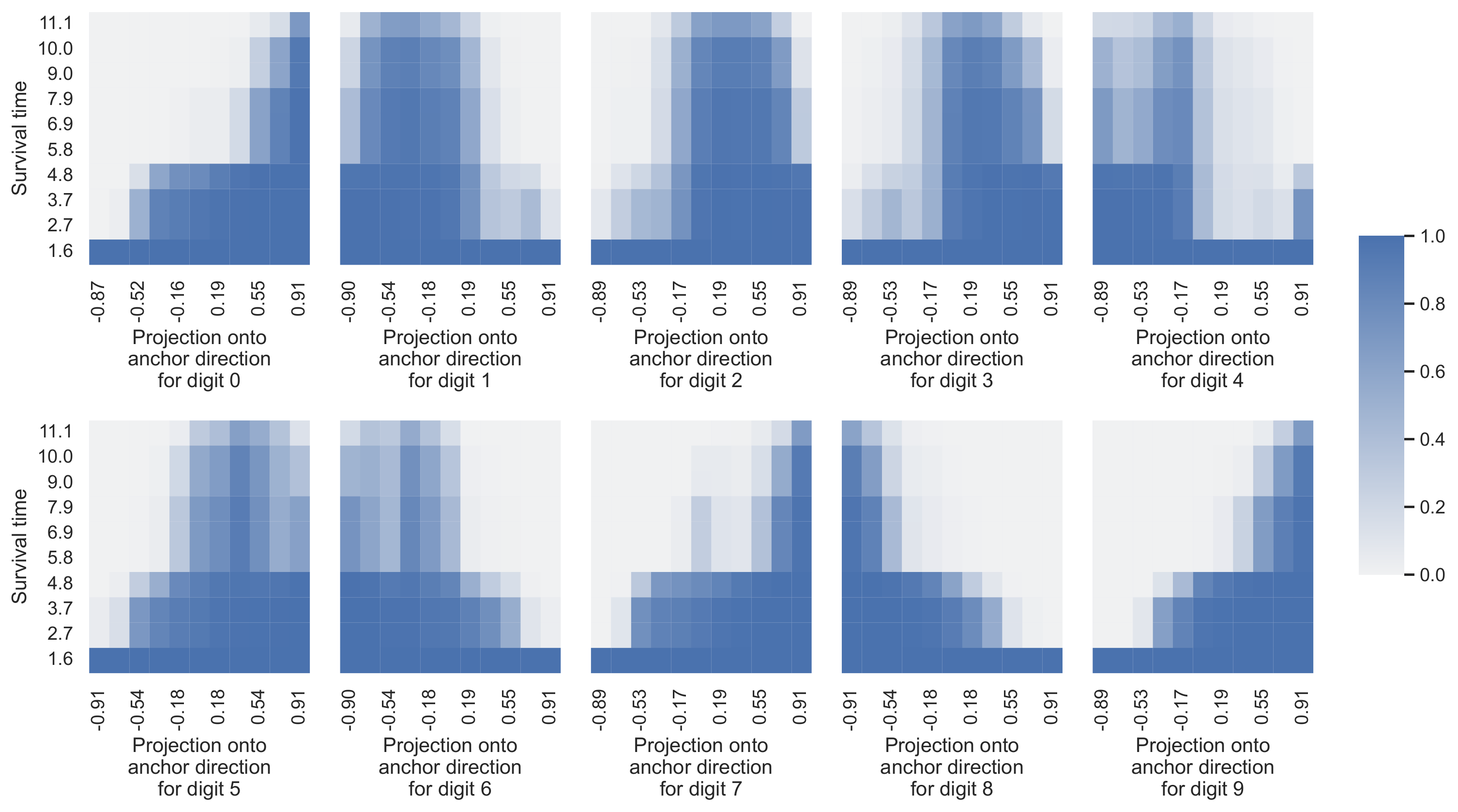}
\vspace{-.75em}
\caption{Survival MNIST: we treat each of the 10 digits as a concept that we compute an anchor direction for, and then we plot survival probability heatmaps for the 10 anchor directions.\label{fig:survival-mnist-digit-concepts-surv-prob-heatmaps}}
\vspace{-1.5em}
\end{figure*}

\subsubsection{Predicted Survival Functions}

We begin with a visualization that is not actually related to our anchor direction visualization framework and instead just looks at how well the trained DeepSurv model predicts survival functions. For the $i$-th visualization raw input $x_i^{\viz}$, we denote its true digit label as $\eta_i^{\viz}\in\{0,1,\dots,9\}$. Then for digit~$j$, we can compute the mean predicted survival function
\[
\widehat{S}_{\text{digit~\!}j}(t)
\triangleq
\frac{\sum_{i=1}^{n^{\viz}}\ind\{\eta_i^{\viz}=j\} \widehat{S}(t|x_i^{\viz})}
     {\sum_{i=1}^{n^{\viz}}\ind\{\eta_i^{\viz}=j\}}.
\]
We could also compute its standard deviation
\begin{align*}
&\widehat{S}_{\text{digit~\!}j}^{\text{std}}(t) \\
&\quad\triangleq
 \sqrt{
   \frac{\sum_{i=1}^{n^{\viz}}\ind\{\eta_i^{\viz}=j\} \big(\widehat{S}(t|x_i^{\viz}) - \widehat{S}_{\text{digit~\!}j}(t)\big)^2}{\sum_{i=1}^{n^{\viz}}\ind\{\eta_i^{\viz}=j\}}
}.
\end{align*}
We plot each mean predicted survival function $\widehat{S}_{\text{digit~\!}j}(t)$ with error bars given by $\widehat{S}_{\text{digit~\!}j}^{\text{std}}(t)$ in \figureref{fig:survival-mnist-predicted-survival-curves}. As  expected, these survival functions do not resemble the ground truth ones as the neural survival analysis model fitted assumes a proportional hazards model. However, the ranking of the digits is approximately correct: looking at the mean predicted survival functions, the ranking of these (going from lower to higher) is: 8,~6,~1,~4,~3,~2,~5,~7,~9,~0. The only error in this ranking is that digits 7 and 9 are swapped. As a reminder, digits~0, 7, and 9 are more difficult as they have censoring rates over 96\%.

Importantly, the survival functions $\widehat{S}_{\text{digit~\!}j}$ are estimated with the help of ground truth digit labels. The DeepSurv model in this case never received ground truth digit labels and, in particular, its embedding space (the output space of encoder $\phi$) was not explicitly trained to be able to distinguish between digits. That said, we can try to understand to what extent this embedding space captures information regarding the 10 digits. We proceed to do this next.

\subsubsection{Treating Each Digit as a Concept for Anchor Direction Estimation}
\label{sec:digit-as-concept}

We now show random input vs projection plots (like the one in \figureref{fig:survival-mnist-concept0-random-samples-vs-projection}) for all 10 digits in Figure~\ref{fig:survival-mnist-random-samples-vs-projection}. From these plots, we see that as the projection value gets large for digit $j\in\{0,1,\dots,9\}$, the raw inputs that achieve these large projection values for digit~$j$ tend to be of digit~$j$ itself or of other ``adjacent'' digit(s), where by ``adjacent'', we mean one(s) ordered next to digit~$j$ in terms of the ranking of ground truth mean survival times.

Using these same anchor directions, we produce the survival probability heatmaps (like the ones in \figureref{fig:support-anchor1-survival-heatmap} and \figureref{fig:support-anchor1through5-survival-heatmap}) in \figureref{fig:survival-mnist-digit-concepts-surv-prob-heatmaps}. Note that these heatmaps convey information similar to what is shown in \figureref{fig:survival-mnist-predicted-survival-curves}. For instance, when we look at the rightmost column of the survival probability heatmap for digit 0's anchor direction, we see that the survival function barely decays for all the observed times, indicative of the survival time tending to be large, as expected. This rightmost column's survival function resembles the predicted survival curve for digit~0 in \figureref{fig:survival-mnist-predicted-survival-curves}. A similar finding holds for the other digits.

\smallskip
\noindent
\textbf{The embedding space does not appear to capture the ground truth risk groups.}
We previously pointed out that the digits are grouped into four risk groups (each risk group has ground truth mean survival times that are very close by to each other; see \figureref{fig:survival-mnist-ground-truth-mean-survival-times}). It is not the case, however, that only the digits within the same risk group end up with high projection values for each other's anchor directions (see \figureref{fig:survival-mnist-concept0-random-samples-vs-projection}). We \emph{do} see this happen for the risk group with the lowest mean survival times (consisting of digits 1, 6, and 8) as well as the risk group with the highest mean survival times (consisting of 0, 7, and 9) but this does not entirely hold for the other risk groups.

To give a concrete example of how the embedding space does not correctly ``capture'' a risk group,
consider digit 5. %
The ground truth has digit~5 in its own risk group: no other digit's mean survival time is very close to that of digit~5. However, when we look at digit 5's random input vs projection plot in \figureref{fig:survival-mnist-random-samples-vs-projection}, when we look at the rightmost two projection bins, we see that many digits (that are not the digit~5) have high projection values for digit~5. %

We suspect that what is causing the problem is censoring. We said that the digits in risk group $\{1,6,8\}$ tend to have high projection values for each other's anchor directions, and similarly for the digits in the risk group $\{0,7,9\}$. Note that all digits in risk group $\{1,6,8\}$ have censoring rates below 6\%. All digits in risk group $\{0,7,9\}$ have censoring rates over 96\%. In contrast, the digit~5 has a censoring rate of about 71\%, which is neither very low nor very high. The digits with high projection values for digit~5 are ones that all have censoring rates over 35\%. %
Basically, although the embedding space does not appear to be capturing risk groups well, it seems to recognize what censoring means. We examine this next.

\subsubsection{Treating ``Censored'' as a Concept for Anchor Direction Estimation}

\begin{figure}[t!]
~\includegraphics[width=.98\linewidth, trim={0pt 7pt 0pt 0pt}, clip]{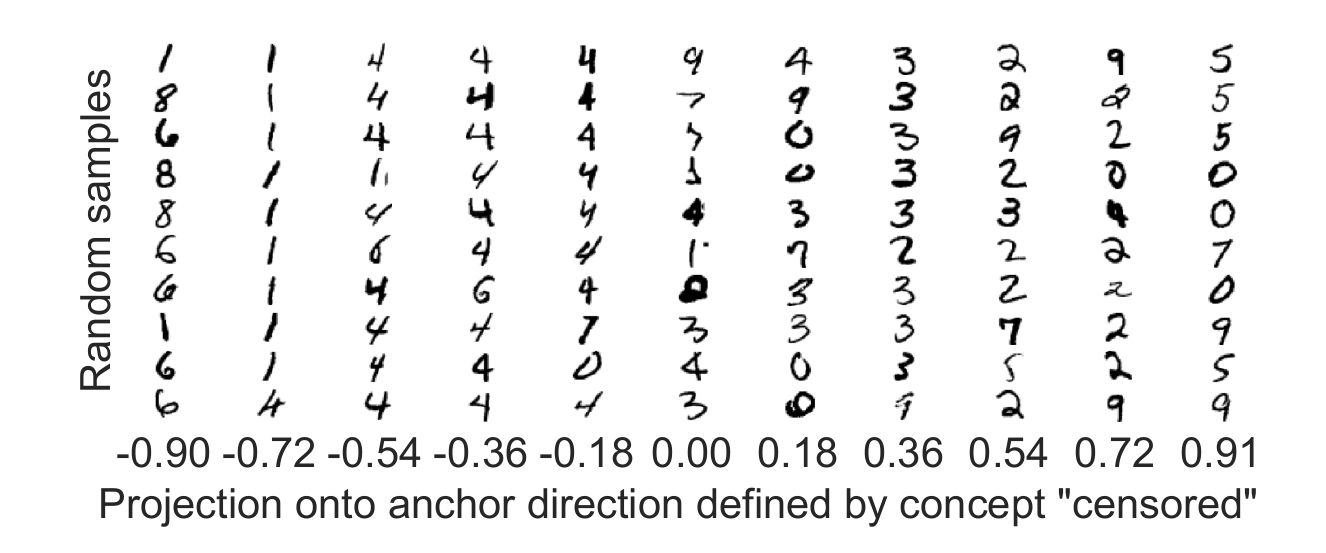}
\vspace{-2.2em}
\caption{Survival MNIST: after computing an anchor direction for the concept ``censored'', we produce a random input vs projection plot for this anchor direction.\label{fig:survival-mnist-censored-random-samples-vs-projection}}\vspace{-2em}
\end{figure}

We treat anchor direction estimation data that are censored (i.e., their event indicator variables are equal to 0) as a concept, which we then compute the anchor direction for using equation~\eqref{eq:concept-vs-all}. We produce a random input vs projection plot for this ``censored'' concept's anchor direction in \figureref{fig:survival-mnist-censored-random-samples-vs-projection}. From the plot, as we progress from the most negative projection values to the most positive, the random samples clearly correspond to digits going from the lowest to the highest ground truth mean survival times, which also corresponds to going from the lowest to the highest censoring rates.

\subsubsection{Estimating Anchor Directions via Clustering}

\begin{figure}[t!]
\centering
\includegraphics[width=.825\linewidth]{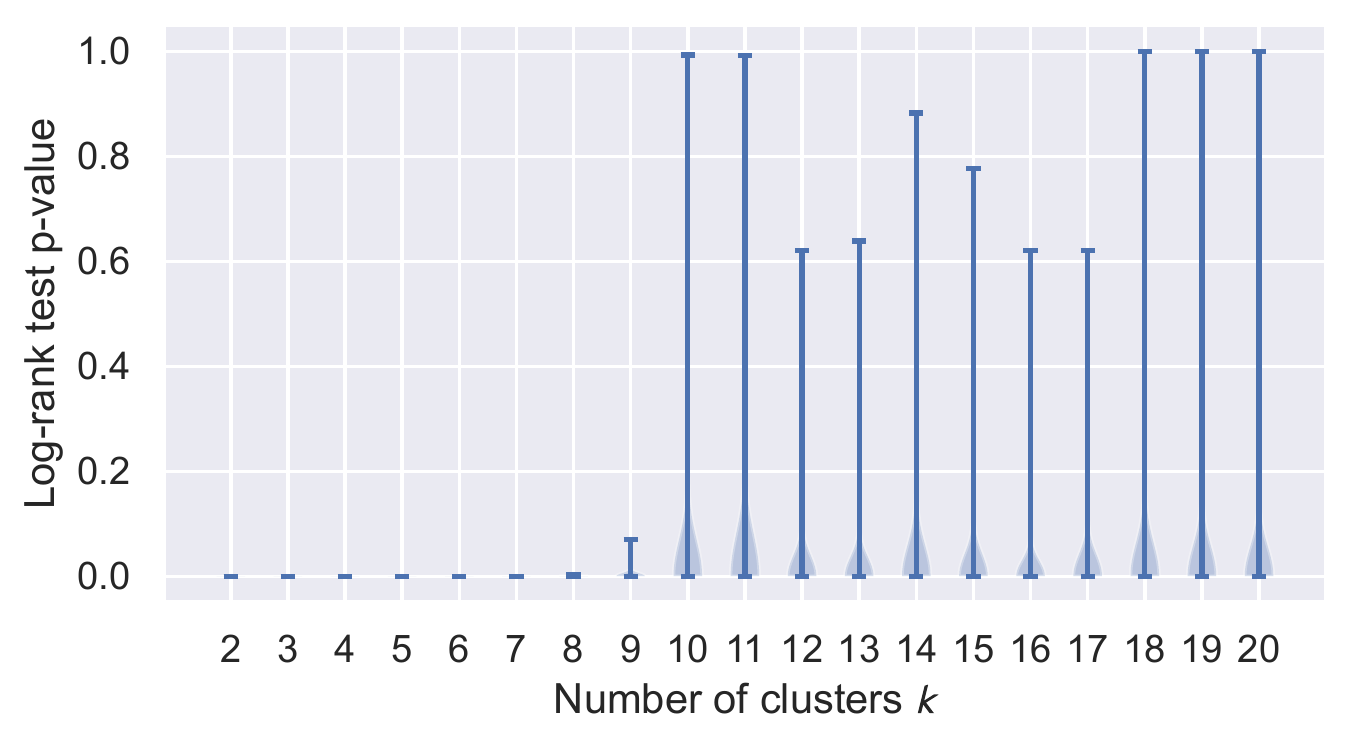}
\vspace{-1.5em}
\caption{Survival MNIST: a violin plot to help select the number of clusters for use with a mixture of von Mises-Fisher distributions (which determines the number of anchor directions to use).}
\label{fig:survival-mnist-logrank-helper}
\vspace{-2em}
\end{figure}

\begin{figure}[t!]
\centering
\includegraphics[width=\linewidth]{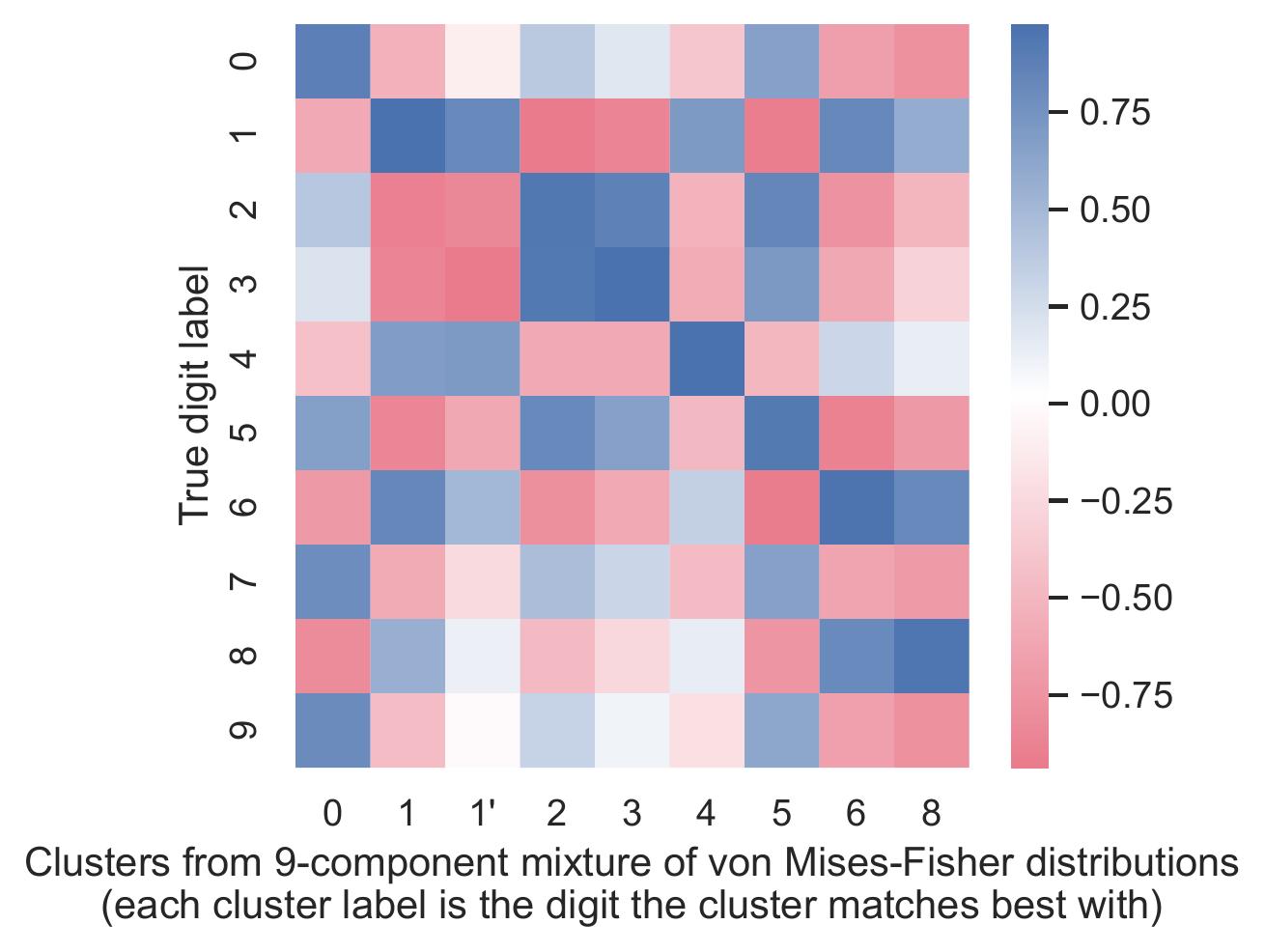}
\vspace{-2.5em}
\caption{Survival MNIST: average projection heatmap for seeing how well a clustering assignment with 9 clusters (using a mixture of von Mises-Fisher distributions) aligns with the 10 ground truth digit labels. The intensity at the $i$-th row and \mbox{$j$-th} column corresponds to average projection value along the \mbox{$j$-th} cluster's anchor direction across visualization data with ground truth digit label~$i$. Note that the clusters with labels 1 and 1' both match best with digit 1.}
\label{fig:survival-mnist-nclusters9-cluster-vs-digit-average-projection-heatmap}
\vspace{-2em}
\end{figure}

Lastly, we consider estimating anchor directions via clustering, where we use a mixture of von Mises-Fisher distributions as the clustering model. We show the violin plot for selecting the number of clusters in \figureref{fig:survival-mnist-logrank-helper}. From this violin plot, we see that the log-rank test p-values have a sharp increase after 9 clusters. We examine the clustering results using 9 clusters and, separately, also using 4 clusters and 10 clusters. The reason we look at the 4 clusters case is because there are 4 underlying risk groups, and we can check to what extent these can be recovered from a clustering solution with 4 clusters. As for looking at a model with 10 clusters, this is because we know that in reality there are 10 digits, each with its own ground truth survival function.

\begin{figure*}[p!]
\centering
~\includegraphics[width=.49\linewidth, trim={0pt 7pt 0pt 0pt}, clip]{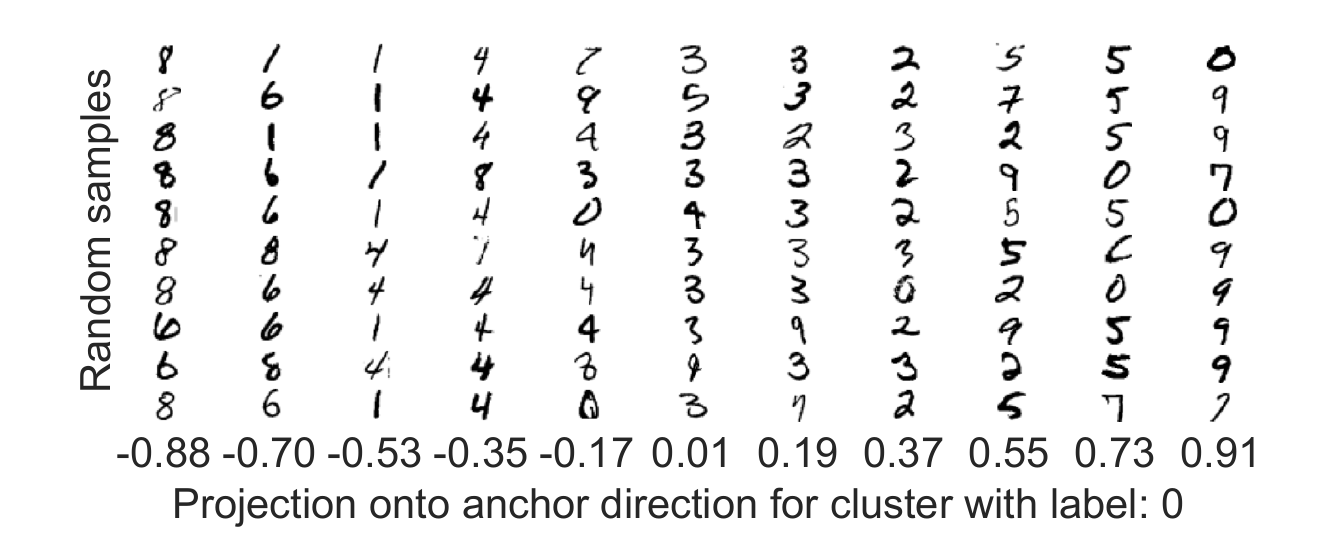}~\includegraphics[width=.49\linewidth, trim={0pt 7pt 0pt 0pt}, clip]{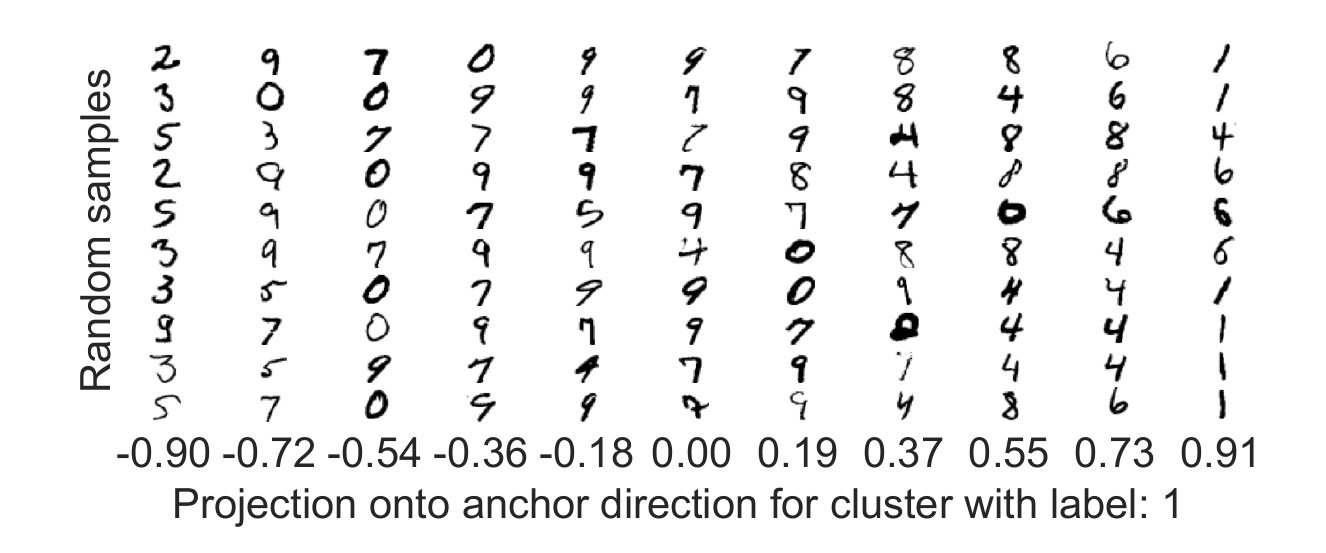} \\
~\includegraphics[width=.49\linewidth, trim={0pt 7pt 0pt 0pt}, clip]{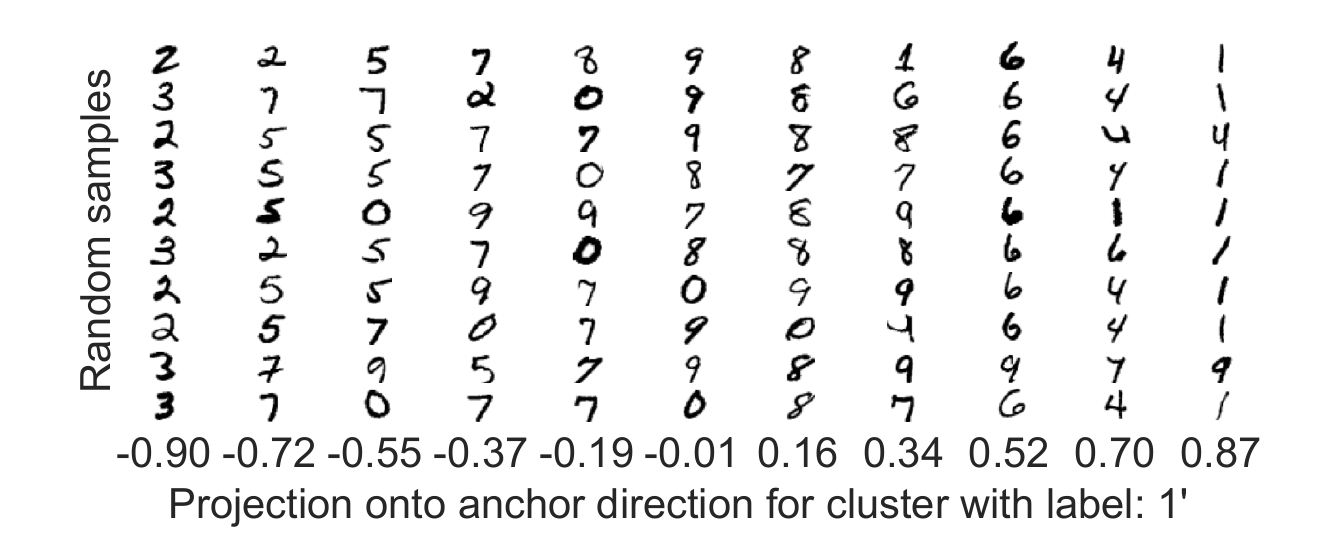}~\includegraphics[width=.49\linewidth, trim={0pt 7pt 0pt 0pt}, clip]{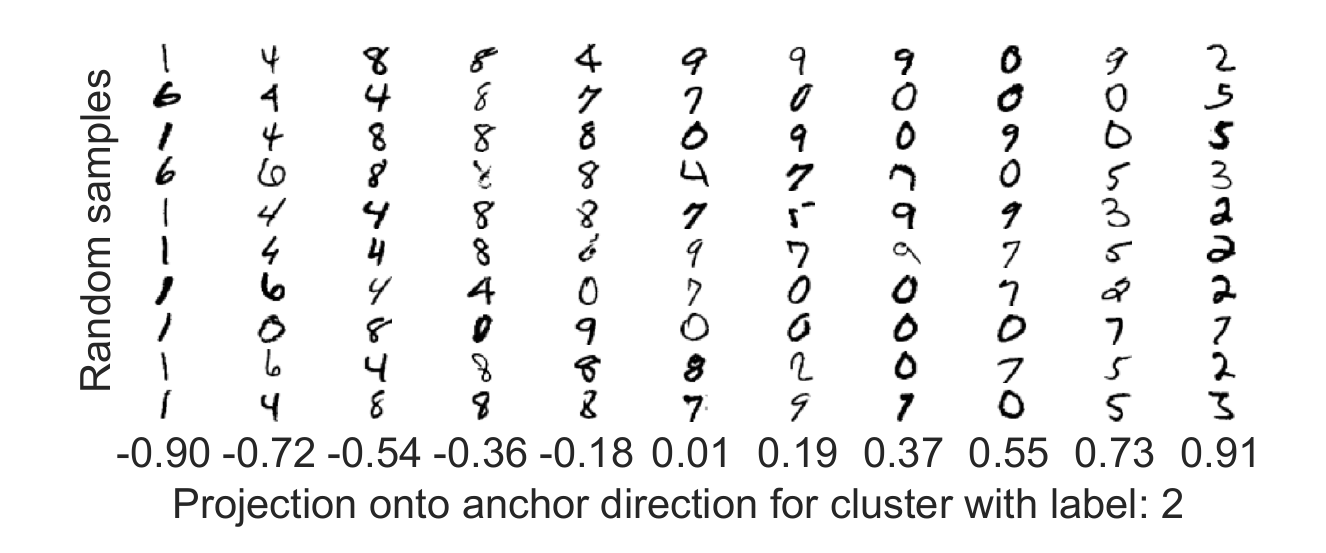} \\
~\includegraphics[width=.49\linewidth, trim={0pt 7pt 0pt 0pt}, clip]{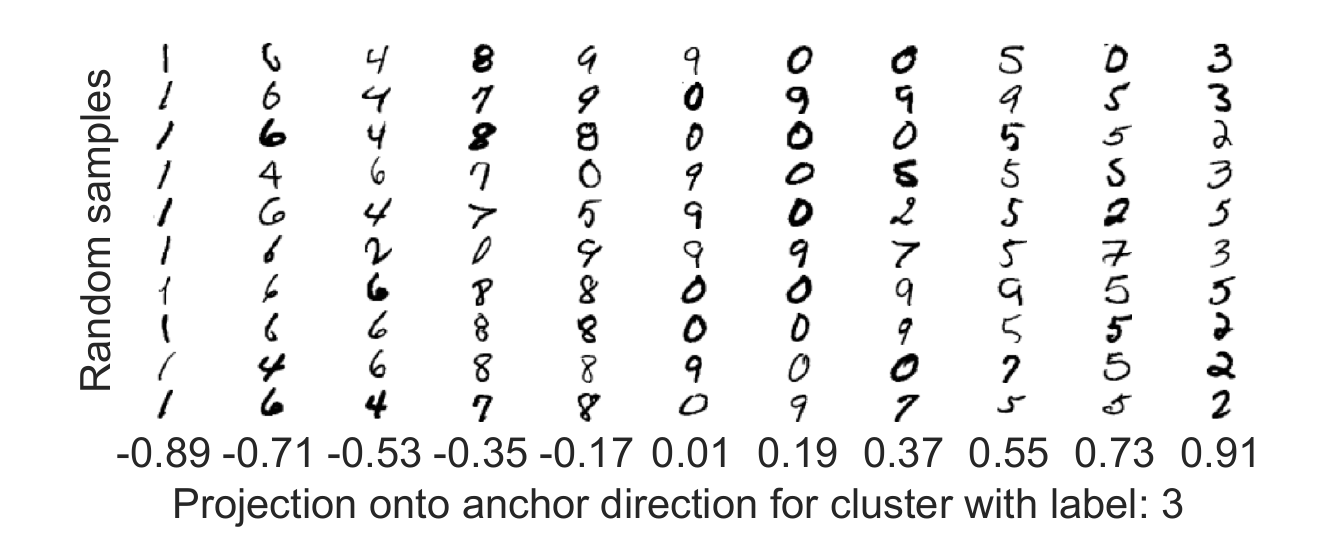}~\includegraphics[width=.49\linewidth, trim={0pt 7pt 0pt 0pt}, clip]{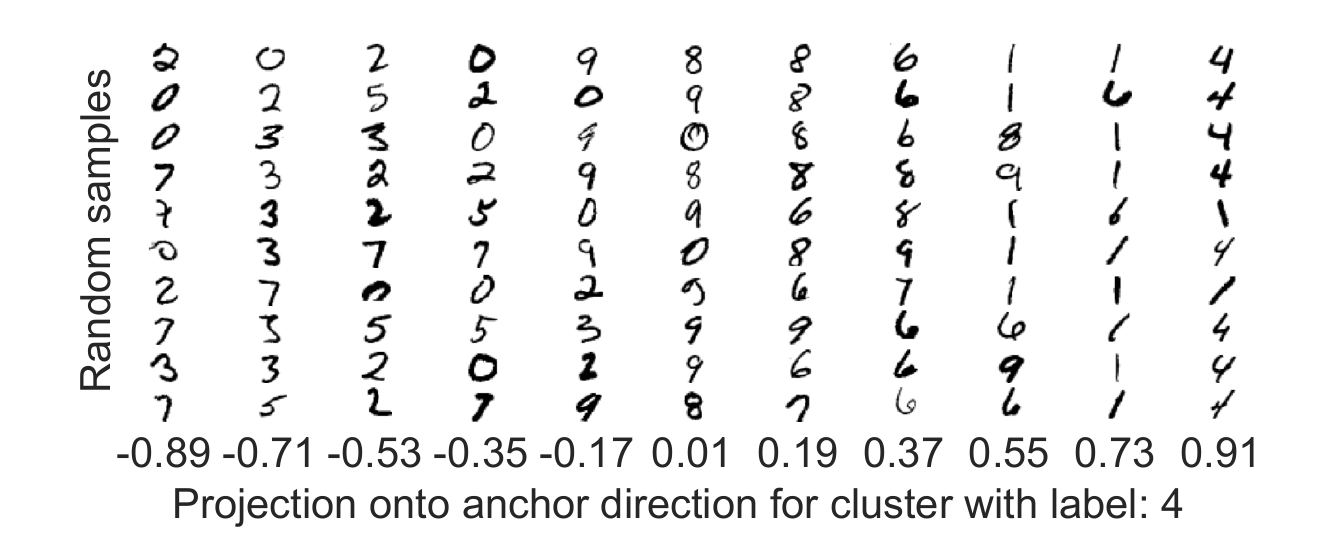} \\
~\includegraphics[width=.49\linewidth, trim={0pt 7pt 0pt 0pt}, clip]{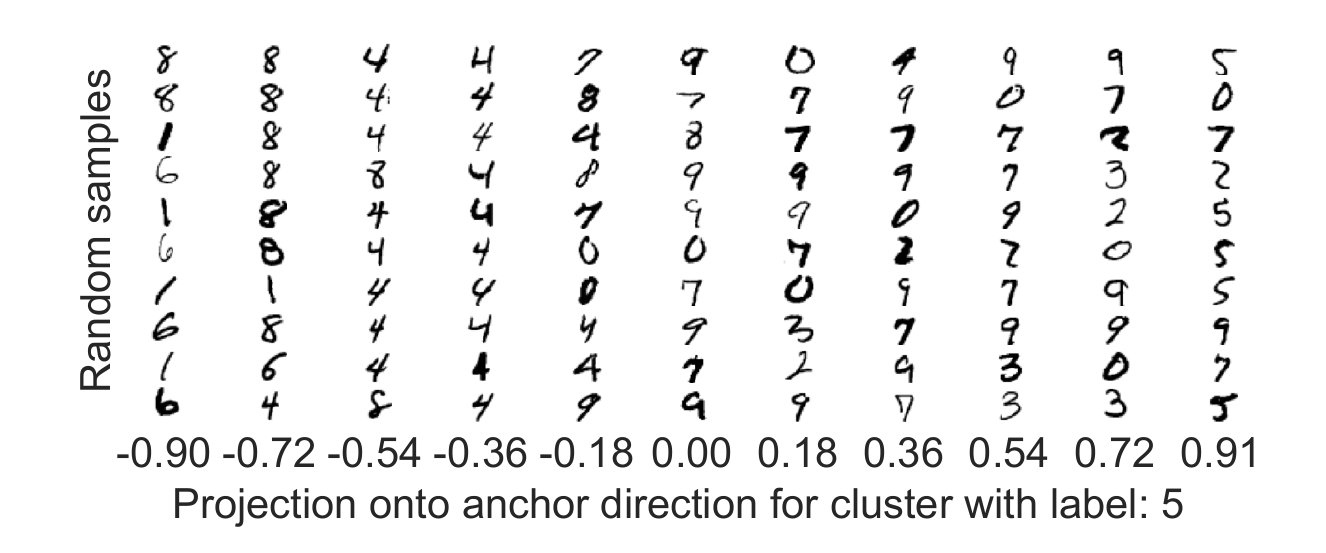}~\includegraphics[width=.49\linewidth, trim={0pt 7pt 0pt 0pt}, clip]{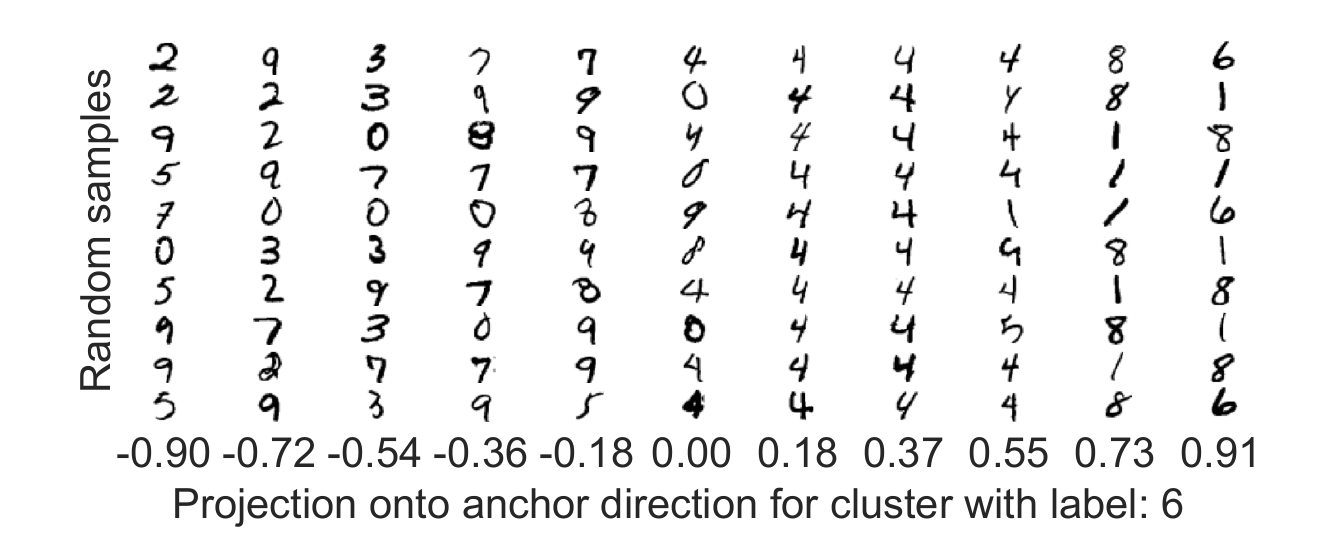} \\
~\includegraphics[width=.49\linewidth, trim={0pt 7pt 0pt 0pt}, clip]{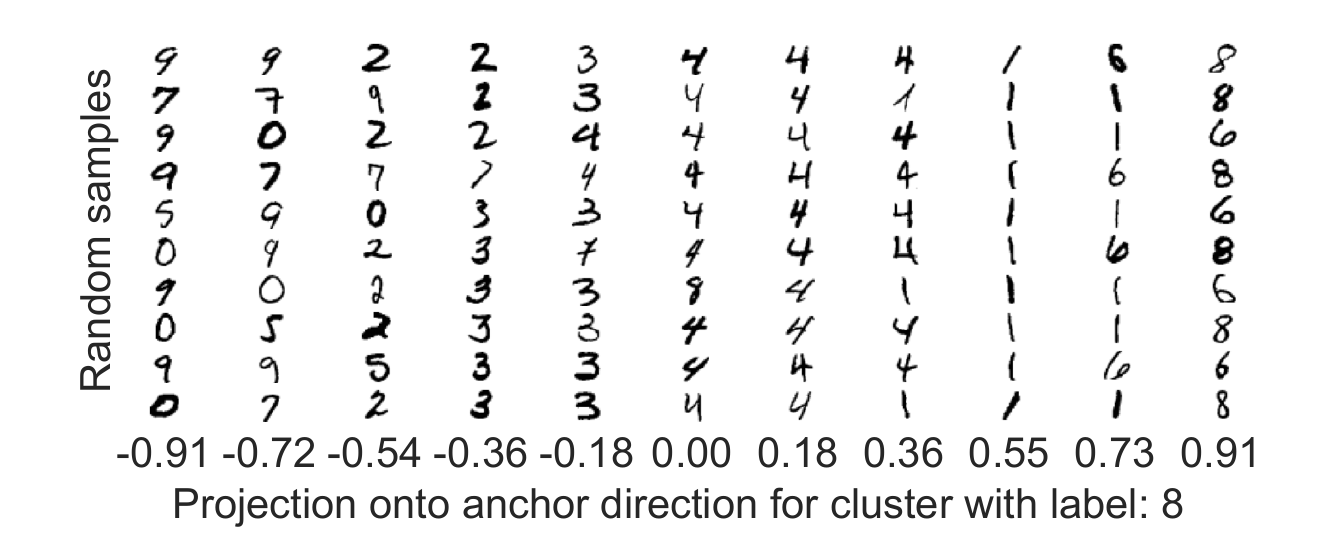}
\vspace{-.75em}
\caption{Survival MNIST: using anchor directions estimated from a clustering model with 9 clusters, we produce random input vs projection plots for the resulting 9 anchor directions. The cluster labels are the same as the ones along the x-axis of \figureref{fig:survival-mnist-nclusters9-cluster-vs-digit-average-projection-heatmap}.\label{fig:survival-mnist-nclusters9-random-samples-vs-projection}}
\vspace{-2.6em}
\end{figure*}

\begin{figure*}[p!]
\centering
\includegraphics[width=.6\linewidth]{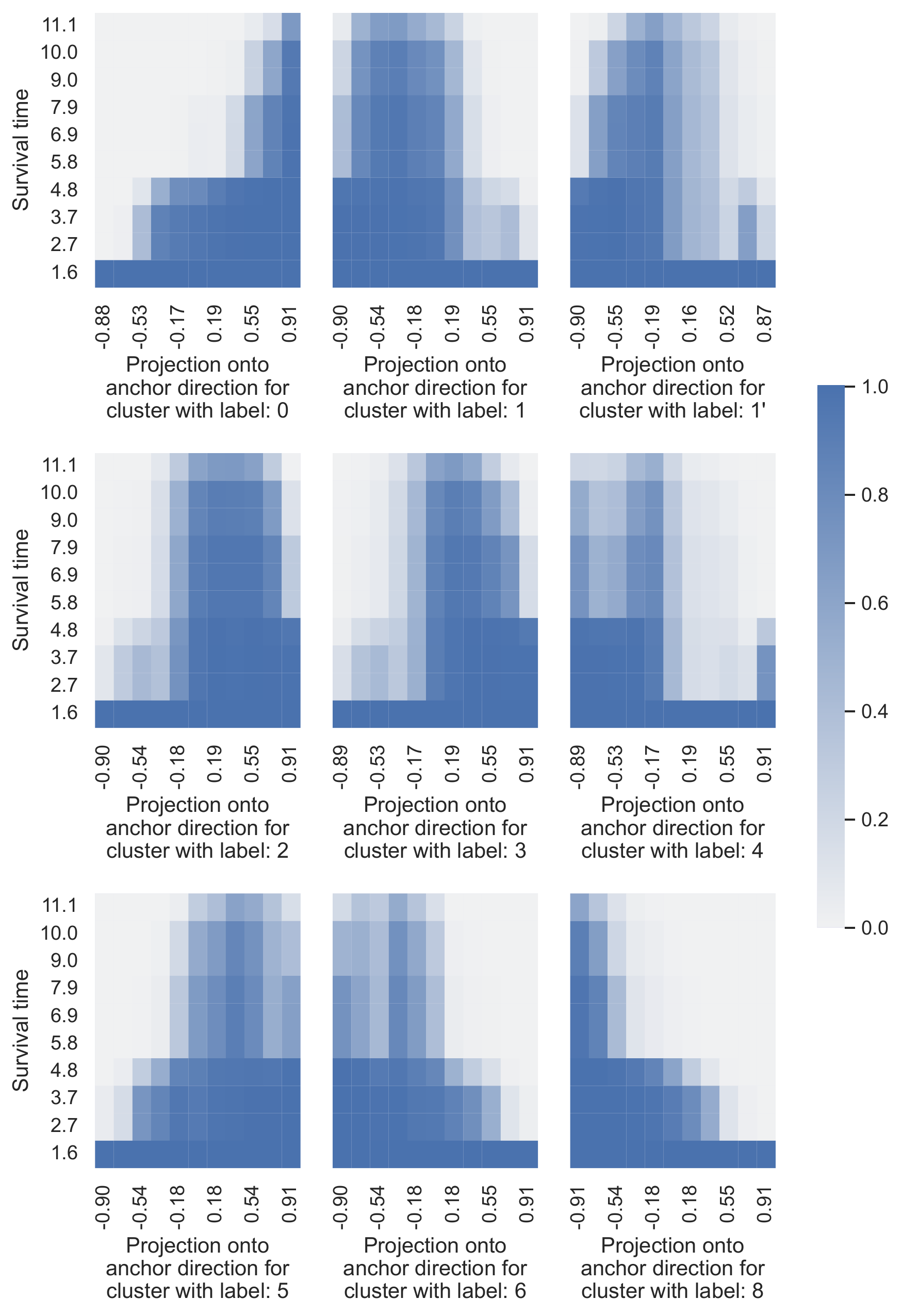}
\vspace{-1em}
\caption{Survival MNIST: using anchor directions estimated from a clustering model with 9 clusters, we produce survival probability heatmaps for the resulting 9 anchor directions. The cluster labels are the same as the ones along the x-axis of \figureref{fig:survival-mnist-nclusters9-cluster-vs-digit-average-projection-heatmap}.\label{fig:survival-mnist-nclusters9-surv-prob-heatmaps}}
\vspace{-2.6em}
\end{figure*}

\begin{figure}[p!]
\centering
\includegraphics[width=\linewidth]{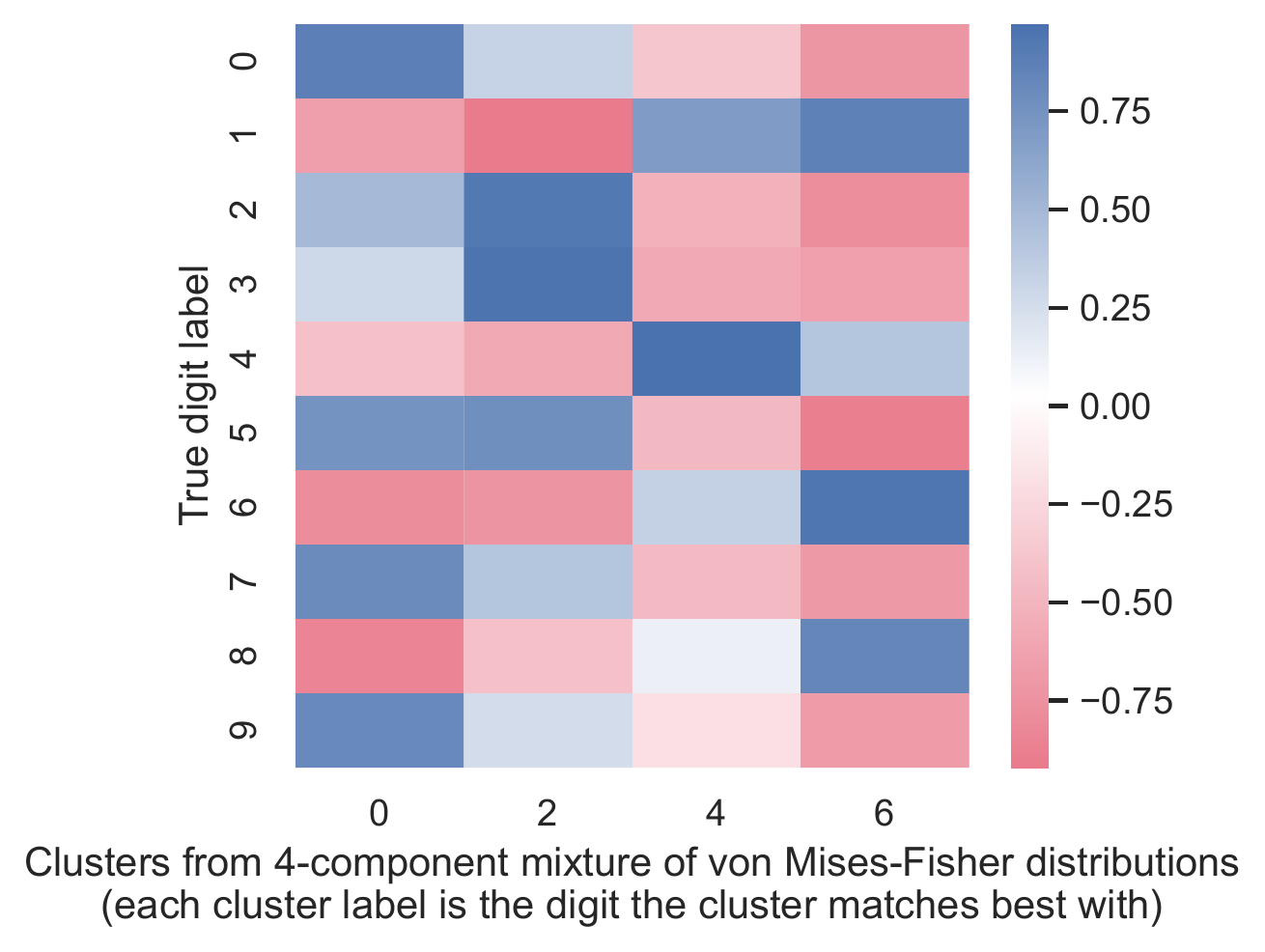}
\vspace{-2.5em}
\caption{Survival MNIST: average projection heatmap for a clustering assignment with 4 clusters (using a mixture of von Mises-Fisher distributions). The intensity at the $i$-th row and \mbox{$j$-th} column corresponds to average projection value along the \mbox{$j$-th} cluster's anchor direction across visualization data with ground truth digit label~$i$. Just as in \figureref{fig:survival-mnist-nclusters9-cluster-vs-digit-average-projection-heatmap}, we label each cluster based on the single digit that it best matches to.} %
\label{fig:survival-mnist-nclusters4-cluster-vs-digit-average-projection-heatmap}
\vspace{-1em}
\end{figure}

\begin{figure}[p!]
\centering
\includegraphics[width=\linewidth]{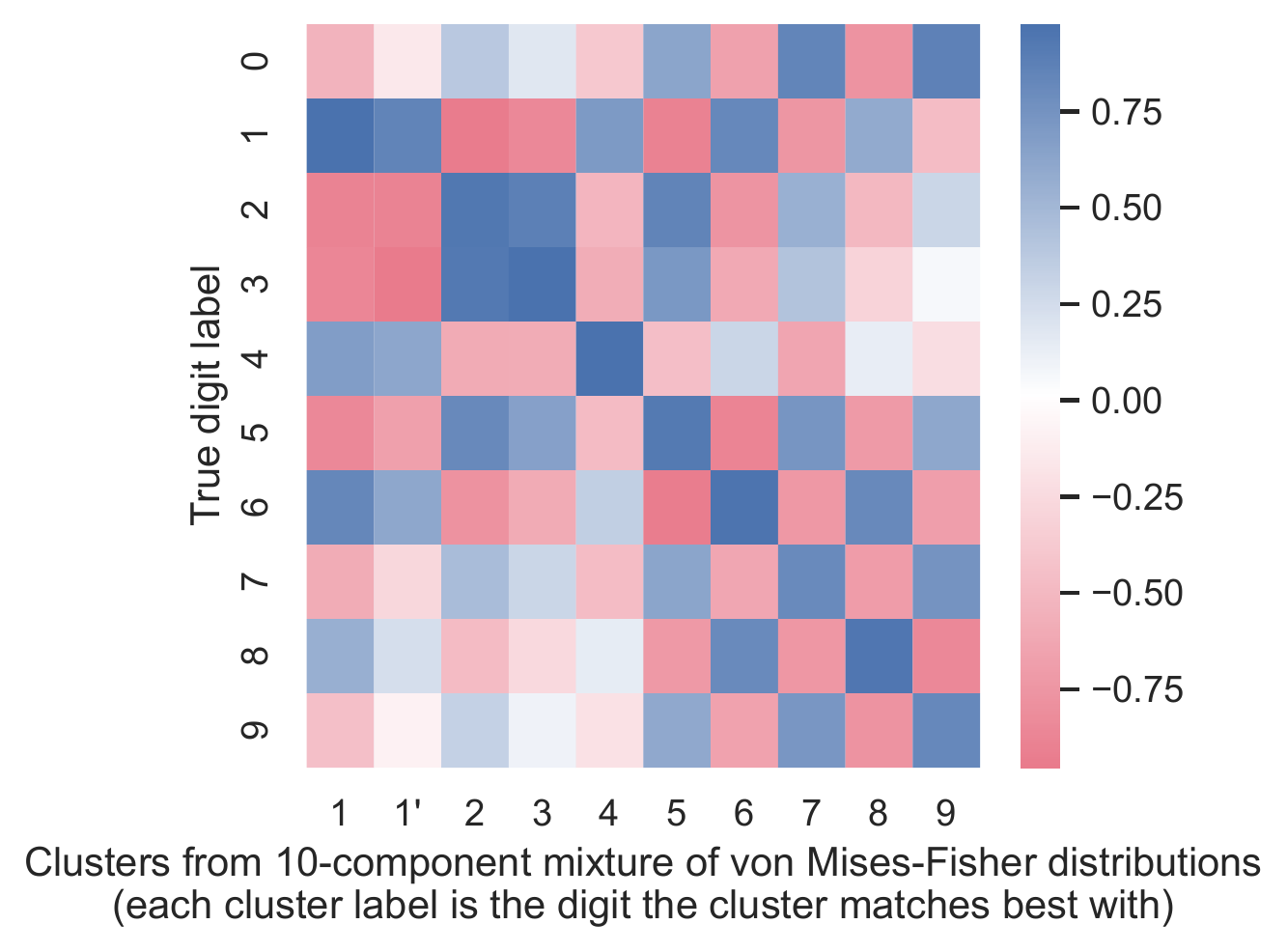}
\vspace{-2.5em}
\caption{Survival MNIST: average projection heatmap for a clustering assignment with 10 clusters (using a mixture of von Mises-Fisher distributions). The intensity at the $i$-th row and \mbox{$j$-th} column corresponds to average projection value along the \mbox{$j$-th} cluster's anchor direction across visualization data with ground truth digit label~$i$. Just as in \figureref{fig:survival-mnist-nclusters9-cluster-vs-digit-average-projection-heatmap}, we label each cluster based on the single digit that it best matches to.} %
\label{fig:survival-mnist-nclusters10-cluster-vs-digit-average-projection-heatmap}
\vspace{-1em}
\end{figure}

\smallskip
\noindent
\textbf{Results for a clustering model with 9 clusters.}
We first use anchor directions from a 9-component mixture of von Mises-Fisher distributions, where each component is treated as a cluster. We check how well the 9 clusters' anchor directions align with the anchor directions of the digit concepts. %
For visualization purposes, we label each cluster with the digit that the cluster matches best to, where we determine a match as follows: for the \mbox{$j$-th} cluster, we find whichever digit's anchor direction (computed using equation~\eqref{eq:concept-vs-all}; these were the anchor directions used in Appendix~\ref{sec:digit-as-concept}) is most similar to the \mbox{$j$-th} cluster's anchor direction (computed using equation~\eqref{eq:cluster-vs-all}) according to cosine similarity. It is possible that multiple clusters match best with the same digit. Then, we can create what we call an \emph{average projection heatmap} where the entry at the $i$-th row and \mbox{$j$-th} column corresponds to the average projection value along the \mbox{$j$-th} cluster's anchor direction across visualization data that have the ground truth digit label~$i$. We show the resulting heatmap in \figureref{fig:survival-mnist-nclusters9-cluster-vs-digit-average-projection-heatmap}. From this heatmap, we see that the cluster with label 0 has high projection values for digits 0, 5, 7, and 9, which are the most censored digits (digits 0, 7, and 9 in particular tend to have the highest projection values for cluster 0); digit 5, however, also has its own cluster that it is matched to whereas digits 7 and 9 do not. Another observation is that digits 1, 6, and 8 tend to have high projection values for clusters with labels 1, 6, and 8 (although the cluster with label 1 has highest projection values for digit 1, and similarly for the clusters with labels 6 and 8). %

We create a random input vs projection plot for each of these 9 clusters' anchor directions in \figureref{fig:survival-mnist-nclusters9-random-samples-vs-projection}, where we see the same sort of phenomenon we had pointed out in Appendix~\ref{sec:digit-as-concept}: when we look at an anchor direction roughly corresponding to digit~$j$, then images of digit~$j$ as well as images of digits that have a true mean survival time adjacent to that of digit~$j$ will often have high projection values. We also plot survival probability heatmaps for these anchor directions in \figureref{fig:survival-mnist-nclusters9-surv-prob-heatmaps}. Each cluster's survival probability heatmap is similar to the one for the digit that the cluster best matches to (see \figureref{fig:survival-mnist-digit-concepts-surv-prob-heatmaps} for comparison).

\smallskip
\noindent
\textbf{Results for a clustering model with 4 clusters.}
We next use anchor directions from a 4-component mixture of von Mises-Fisher distributions, again treating each component as a cluster. We show an average projection heatmap in \figureref{fig:survival-mnist-nclusters4-cluster-vs-digit-average-projection-heatmap}. Note that although we use the same way of labeling each cluster as we did when we used 9 clusters, here we would actually like each cluster to correspond to the ground truth risk groups (so that each cluster does not necessarily only correspond to a single digit). %

From \figureref{fig:survival-mnist-nclusters4-cluster-vs-digit-average-projection-heatmap}, we see that the cluster with label~0 is most like the ground truth risk group $\{0,7,9\}$ but also includes digit~5 (which also has a relatively high censoring rate among the different digits). Meanwhile, the cluster with label 6 is most like the risk group $\{1,6,8\}$. The clusters with labels 2 and 4 together correspond to the ground truth risk group $\{2,3,4\}$. Overall, the ground truth risk groups are not correctly recovered although the clusters found, qualitatively, pick up on some of the ground truth structure. We suspect that the difficulty in determining that digit 5 should be in its own cluster has to do with how often it is censored (over 70\%). The risk group corresponding to digits 1, 6, and 8 should be the easiest to recover as these three digits have the lowest censoring rates (all below 6\%).

We omit random input vs projection plots and survival probability heatmaps for the different clusters' anchor directions since the findings from these visualizations are qualitatively similar to the findings we just pointed out from looking at Figures~\ref{fig:survival-mnist-nclusters4-cluster-vs-digit-average-projection-heatmap} and~\ref{fig:survival-mnist-nclusters9-surv-prob-heatmaps} (note that for each cluster in this 4-cluster model, the digits that the cluster matches well with tend to have very similar survival probability heat maps).

\smallskip
\noindent
\textbf{Results for a clustering model with 10 clusters.}
When we use 10 clusters for the mixture of von Mises-Fisher distributions, we obtain the average projection heatmap in \figureref{fig:survival-mnist-nclusters10-cluster-vs-digit-average-projection-heatmap}. In this case, when we match each cluster to a digit, the only digit that does not get matched is digit~0, although digit 0 itself has high projection values for the clusters with labels~7 and~9. %
Qualitatively, the clustering result here is not too different from the one where we used 9 clusters. The main difference now is that we can somewhat distinguish better between the digits in the risk group $\{0,7,9\}$ consisting of digits with the highest censoring rates. We omit the random input vs projection plots and the survival probability heatmaps for the 10 different clusters' anchor directions as these visualizations do not provide additional insight at this point over the other findings we have already reported.

\abovesectionskip
\section{Theoretical Result on Projection Values When Information Content in Embedding Vectors is All in Magnitudes}
\label{sec:proposition-proof}
\belowsectionskip

\begin{proposition}
\label{prop:toy-example}
(\emph{Extreme example where the embedding space information is all in magnitudes})
Suppose that the embedding vectors (of anchor direction estimation and visualization data) are i.i.d.~of the form $(\Xi,0,\dots,0)\in\mathbb{R}^d$ (i.e., all coordinates are 0 except the first), where $\Xi$ is a continuous random variable with positive variance. In this setup, the only direction in the embedding space that matters is along the vector $\mu=(1,0,\dots,0)\in\mathbb{R}^d$, which we can take to be the anchor direction of interest. Then the only possible projection values $p_i^{\viz}$ are $-1$ or $1$; projection values in the open interval $(-1, 1)$ are not possible.
\end{proposition}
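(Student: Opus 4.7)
The plan is to exploit the very rigid structure the hypothesis imposes on every embedding vector. Because every anchor-estimation embedding lives on the first coordinate axis, the center of mass $\overline{u}^{\anchor}$ defined in equation~\eqref{eq:cluster-vs-all} must also lie on that axis; write it as $(\bar{\xi},0,\dots,0)$ where $\bar{\xi}$ is just the sample mean of the i.i.d.~draws of $\Xi$ used in the anchor-direction stage. Similarly each visualization embedding has the form $(\Xi_i^{\viz},0,\dots,0)$, so the centered embedding collapses to
\[
\phi(x_i^{\viz}) - \overline{u}^{\anchor} = (\Xi_i^{\viz} - \bar{\xi},\,0,\dots,0).
\]

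Next I would substitute into the projection formula~\eqref{eq:projection} with $\mu=(1,0,\dots,0)$. The Euclidean dot product reduces to the single scalar $\Xi_i^{\viz} - \bar{\xi}$, the denominator becomes $|\Xi_i^{\viz}-\bar{\xi}|\cdot 1$, and therefore
\[
\text{proj}_\mu(x_i^{\viz}) \;=\; \frac{\Xi_i^{\viz}-\bar{\xi}}{|\Xi_i^{\viz}-\bar{\xi}|} \;=\; \operatorname{sign}(\Xi_i^{\viz}-\bar{\xi}),
\]
whenever the denominator is nonzero. This immediately yields values in $\{-1,+1\}$ and never in the open interval $(-1,1)$, which is exactly what the proposition claims.

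The only thing that takes a sentence of care is the degenerate event $\Xi_i^{\viz}=\bar{\xi}$, where the projection is $0/0$ and undefined. Conditional on the anchor-estimation data, $\bar{\xi}$ is a fixed real number and $\Xi_i^{\viz}$ is a draw from a continuous distribution, so $\mathbb{P}(\Xi_i^{\viz}=\bar{\xi})=0$; marginalizing over the anchor data preserves this, so with probability one every visualization projection is well-defined and equal to $\pm 1$. The positive-variance assumption on $\Xi$ is what guarantees both values $-1$ and $+1$ are actually attained with positive probability (otherwise $\Xi$ could be a.s.~constant and equal to $\bar{\xi}$, making the statement vacuous).

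I do not expect any serious obstacle: the proof is essentially a one-line calculation once the axis-alignment is written down. The only subtle point is the degenerate denominator, which is handled by the continuity hypothesis; there is no real analytic or probabilistic difficulty beyond noting that a continuous random variable hits any fixed point with probability zero.
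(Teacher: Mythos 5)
Your proof is correct and follows essentially the same route as the paper's: reduce everything to the first coordinate, observe that the normalized centered embedding is $\pm\mu$, and dispose of the degenerate zero-denominator event using continuity of $\Xi$ (the paper does this by noting the difference $\Xi_i^{\viz}-\frac{1}{n^{\anchor}}\sum_i \Xi_i^{\anchor}$ is itself a continuous random variable, while you condition on the anchor data first; both are fine). No gap to report.
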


\begin{proof}
Let $\mu=(1,0,\dots,0)$; we treat this as the anchor direction as it is the only direction in which the embedding vectors even vary in this proposition's setup.
Our visualizations involve plugging in the visualization raw inputs $x_1^{\viz},\dots,x_{n^{\viz}}^{\viz}$ into~$\phi$. We denote $u_i^{\viz}\triangleq\phi(x_i^{\viz})$, so that the projection value~$p_i^{\viz}$ defined in equation~\eqref{eq:projection} is equal to
\begin{equation}
p_i^{\viz} = \text{proj}_\mu(x_i^{\viz}) %
= \Big\langle \frac{u_i^{\viz} - \overline{u}^{\anchor}}{\|u_i^{\viz} - \overline{u}^{\anchor}\|}, \mu \Big\rangle,
\label{eq:proof-helper}
\end{equation}
where we have used the fact that $\|\mu\|=1$.

The key observation is that we can write each $u_i^{\viz}$ as $u_i^{\viz}=(\Xi_i^{\viz},0,\dots,0)$ and similarly each $u_i^{\anchor}$ as $u_i^{\anchor}=(\Xi_i^{\anchor},0,\dots,0)$ where the $\{\Xi_i^{\viz}\}_{i=1}^{n^{\viz}}$ and $\{\Xi_i^{\anchor}\}_{i=1}^{n^{\anchor}}$ are all i.i.d.~continuous random variables with positive variance. Therefore,
\begin{align*}
u_i^{\viz} - \overline{u}^{\anchor}
&= u_i^{\viz} - \frac{1}{n^{\anchor}}\sum_{i=1}^{n^{\anchor}} u_i^{\anchor} \\
&= \Big( \underbrace{\Xi_i^{\viz} - \frac{1}{n^{\anchor}}\sum_{i=1}^{n^{\anchor}} \Xi_i^{\anchor}}_{\spadesuit}, 0, \dots, 0 \Big),
\end{align*}
where $\spadesuit$ is a sum of independent continuous random variables with positive variance, so $\spadesuit$ itself is still a continuous random variable with positive variance (note that the variance of the sum of two independent variables is the sum of their variances). This implies that $\spadesuit$ is 0 with probability 0, which in turn implies that with probability 1, $\|u_i^{\viz} - \overline{u}^{\anchor}\|$ is nonzero, so $\frac{u_i^{\viz} - \overline{u}^{\anchor}}{\|u_i^{\viz} - \overline{u}^{\anchor}\|}$ is well-defined and, in particular, it is either $\mu$ or $-\mu$. Then by using equation~\eqref{eq:proof-helper}, $p_i^{\viz}$ is either equal to $\langle\mu,\mu\rangle=1$ or $\langle-\mu,\mu\rangle=-1$.%
\end{proof}

\abovesectionskip
\section{Handling a Large Number of Clusters/Anchor Directions With the Help of Ranking}
\label{sec:handling-many-anchor-directions}
\belowsectionskip

When using our heuristic from Section~\ref{sec:anchor-directions-via-clustering} for choosing the number of clusters to use, it is possible that the number of clusters could be very large --- so large that examining visualizations for anchor directions corresponding to all the clusters would be too tedious for model debugging purposes. Of course, one could simply choose to not set the number of clusters to be so large. Put another way, when using our violin plot visualization to help select the number of clusters, one could simply choose a smaller p-value threshold, which would result in fewer clusters being used. However, if for whatever reason, one wants to use a number of clusters that is larger with the goal of having clusters that are more ``fine-grain'', we now suggest an approach for handling this situation as to reduce the amount of clusters to look at. Note that this approach actually applies more generally to the setting where there are many anchor directions that are under consideration, where the anchor directions need not be estimated from our clustering approach. For example, the anchor directions could be computed based on concepts as in Section~\ref{sec:anchor-directions-via-concepts}, where there is a very large number of concepts under consideration.

The basic idea is to rank the anchor directions. If we have a ranking of the anchor directions, then we could focus on, for instance, a few of the highest and a few of the lowest ranked anchor directions. Alternatively, we could also, for instance, take anchor directions that are ``diverse'' across ranks: for example, we could choose the 0th percentile-ranked anchor direction (lowest ranked), the 25th percentile, the 50th percentile (median), the 75th percentile, the 100th percentile (highest ranked). In this manner, we can focus on visualizing only a subset of all the anchor directions. We describe one approach to rank anchor directions next.

\smallskip
\noindent
\textbf{Ranking anchor directions based on predicted median survival times.}
One heuristic approach is to compute a median survival time estimate for each anchor direction, and then rank anchor directions based on this median survival time estimate.

Per anchor direction $\mu$, we first determine the visualization data that are in the top $\alpha$ fraction of the projection values along~$\mu$ (e.g., if $\alpha=0.1$, then this means that we consider data points with projection values that are within the top 10\%). Formally, this set of visualization data points can be written as follows. First, recall that the visualization data have projection values $p_i^{\viz} = \text{proj}_\mu(x_i^{\viz})$, for $i=1,\dots,n^{\viz}$. Suppose that we sort these projection values and denote the sorted projection values as $p_{(1)}^{\viz}<p_{(2)}^{\viz}<\cdots<p_{(n^{\viz})}^{\viz}$. Then the top $\alpha$ projection value can be estimated by
\[
q_{\alpha} \triangleq p_{(\lceil (1-\alpha) n^{\viz}\rceil)}^{\viz}.
\]
Then the visualization data with projection values in the top $\alpha$ percentile of projection values along~$\mu$ are the ones in the set
\[
\mathcal{I}_{\mu}^{\text{~\!top~\!}\alpha}
\triangleq\{ i \in \{1,2,\dots,n^{\viz}\}\text{ s.t.~}p_i^{\viz} \ge q_{\alpha}\}.
\]
Note that this equation is similar to that of equation~\eqref{eq:projection-bin-points}. We then compute the survival curve for the data points in $\mathcal{I}_{\mu}^{\text{~\!top~\!}\alpha}$ using an equation analogous to equation~\eqref{eq:survival-curve-bin}:
\[
\widehat{S}_{\mu}^{\text{~\!top~\!}\alpha}(t)
\triangleq \frac{1}{|\mathcal{I}_{\mu}^{\text{~\!top~\!}\alpha}|}\sum_{i\in\mathcal{I}_{\mu}^{\text{~\!top~\!}\alpha}} \widehat{S}(t|x_i^{\viz}).
\]
By a standard result in survival analysis, the time~$t$ where the survival curve $\widehat{S}_{\mu}^{\text{~\!top~\!}\alpha}(t)$ crosses 1/2 corresponds to a median survival time estimate (see, for instance, \citealt{reid1981estimating}). In particular, we denote this median survival time estimate as
\[
\widehat{\text{med}}_{\mu}^{\text{top~\!}\alpha}
\triangleq \inf\{t\ge0 \text{ s.t.~} \widehat{S}_{\mu}^{\text{~\!top~\!}\alpha}(t) \le 1/2 \}.
\]
In practice, to compute the infimum, commonly a discrete time grid is used, such as using all the unique observed times in the training data (i.e., the unique $Y_i$ values), and if the survival curve never crosses 1/2 over this discrete time grid, then for simplicity we just take the median survival time estimate to be a special value specifying that it is greater than the maximum observed time in the training data.

Note that what we stated above is for any anchor direction~$\mu$. Thus, if we have $k$ anchor directions denoted as $\mu_1,\dots,\mu_k$, then we can rank these anchor directions by $\widehat{\text{med}}_{\mu_1}^{\text{top~\!}\alpha},\dots,\widehat{\text{med}}_{\mu_k}^{\text{top~\!}\alpha}$.

\smallskip
\noindent
\textbf{SUPPORT dataset example.} Consider the data and setup from Section~\ref{sec:tabular}, which is the same setting as the additional results in Appendix~\ref{sec:support-additional-results}. By using the above approach for ranking clusters/anchor directions based on estimated median survival times and setting $\alpha=0.1$, we get the following ranking of the five clusters (in ascending order of estimated median survival times):
\begin{enumerate}[leftmargin=*,topsep=1pt,parsep=0pt,itemsep=0pt]
\item Cluster 1: median survival time estimate 46~days
\item Cluster 5: median survival time estimate 105~days
\item Cluster 2: median survival time estimate 236~days
\item Cluster 3: median survival time estimate 452~days
\item Cluster 4: median survival time estimate 1895~days
\end{enumerate}

\abovesectionskip
\section{Baseline Visualization Strategy: Use Dimensionality Reduction to Plot the Embedding Space}
\label{sec:baseline-visualization}
\belowsectionskip

\begin{figure*}[p!]
\floatconts
  {fig:baseline-visualizations}
  {\caption{2D PCA and t-SNE plots of the visualization data's embedding vectors from a DeepSurv model (with a norm~1 constraint) for the $(a)$ SUPPORT, $(b)$ Rotterdam/GBSG, and $(c)$ Survival MNIST datasets. The colors indicate estimated median survival times.}}
  {
    \subfigure[]{\label{fig:baseline-visualizations-support}
      \includegraphics[scale=.5]{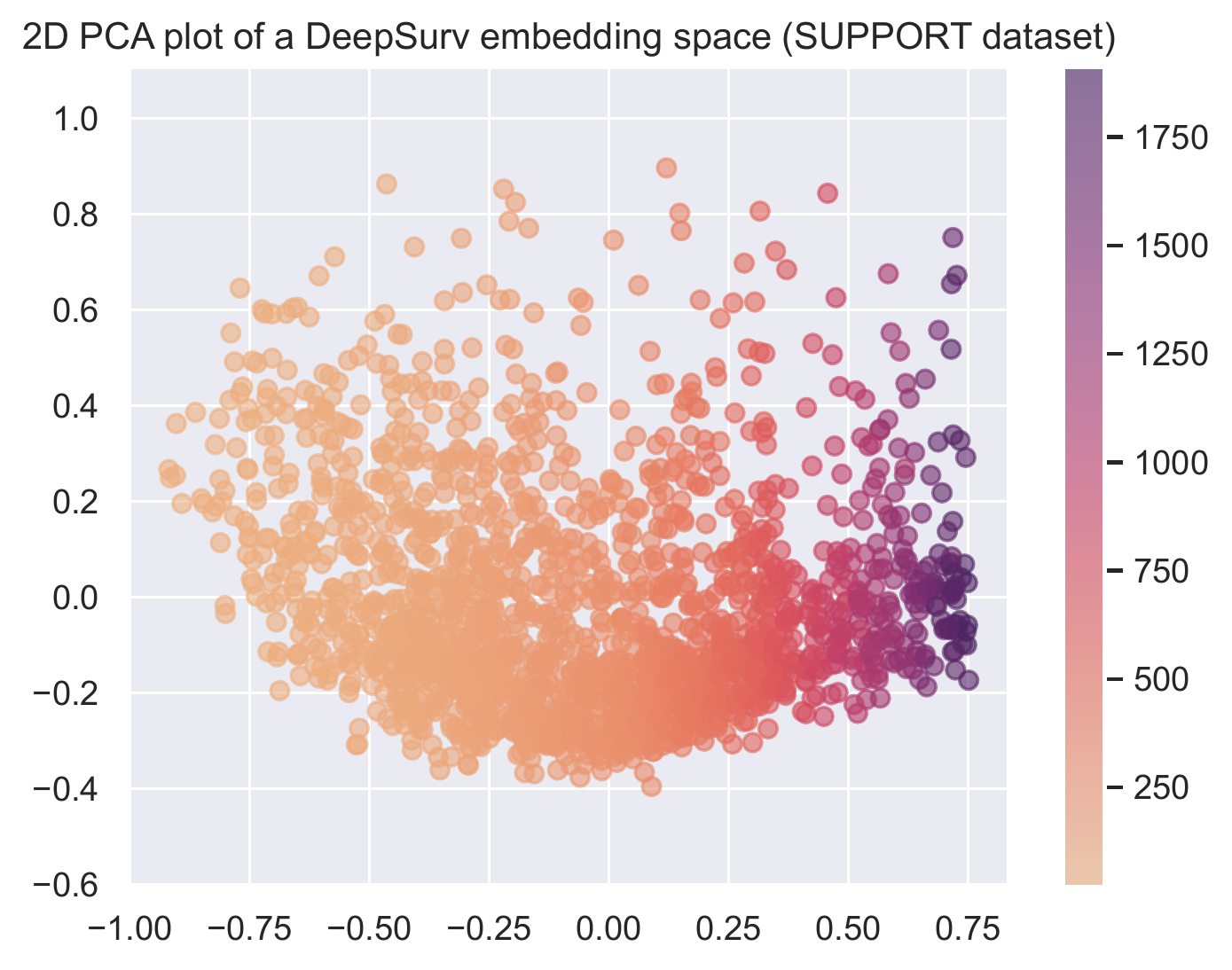}~
      \includegraphics[scale=.5]{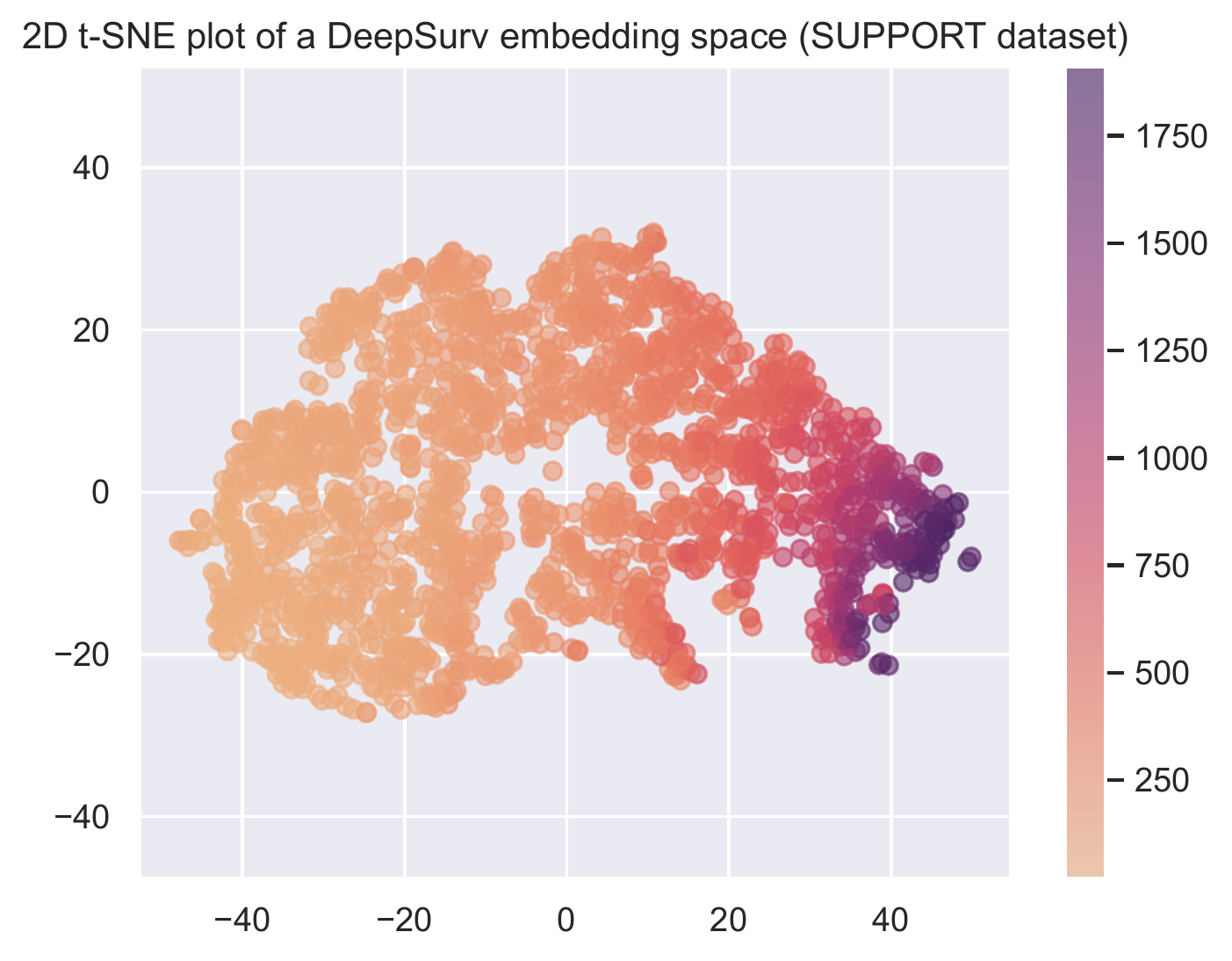}
    \vspace{-1.5em}
    }
    \vspace{.5em} \\
    \subfigure[]{\label{fig:baseline-visualizations-rotterdam-gbsg}
      \includegraphics[scale=.5]{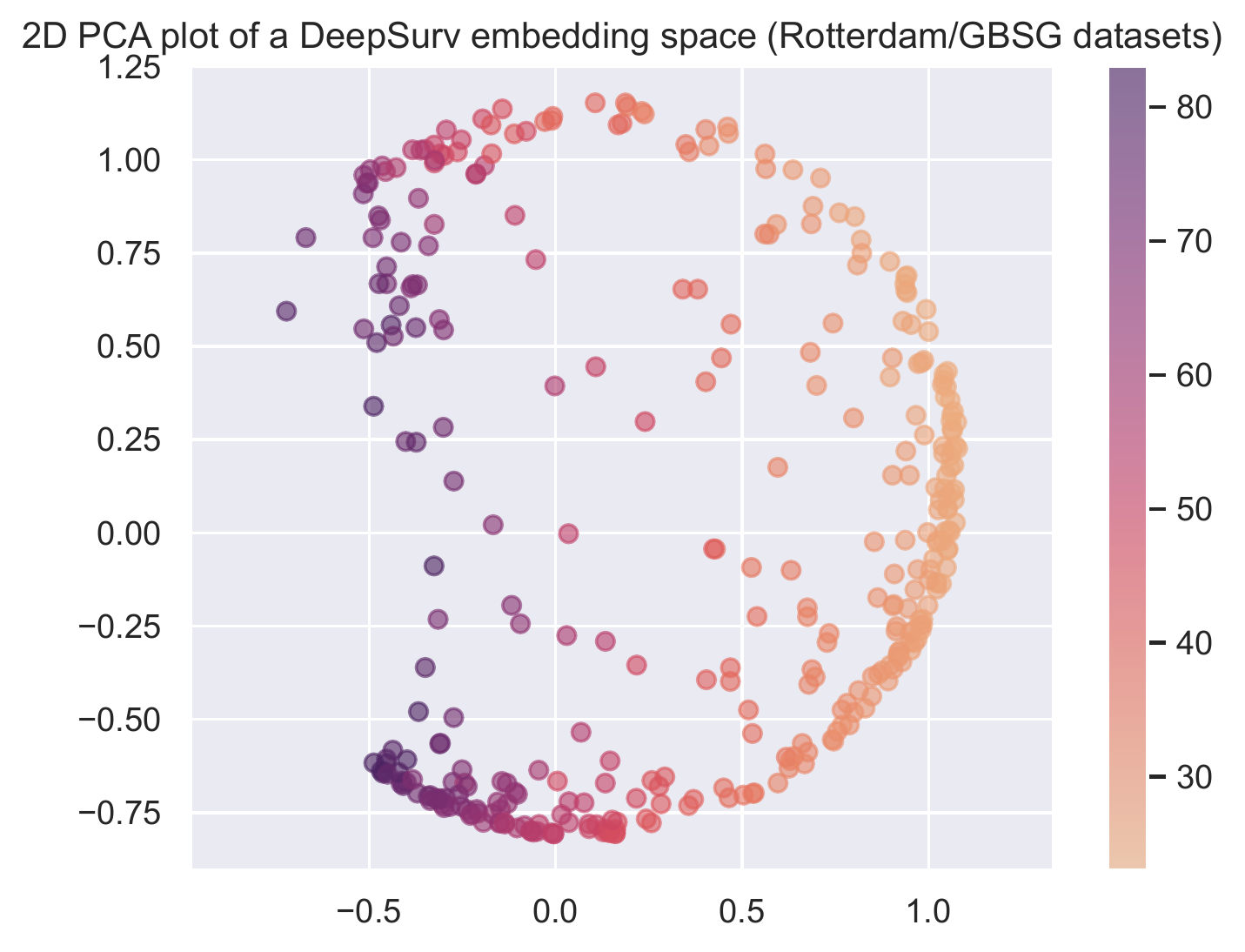}~
      \includegraphics[scale=.5]{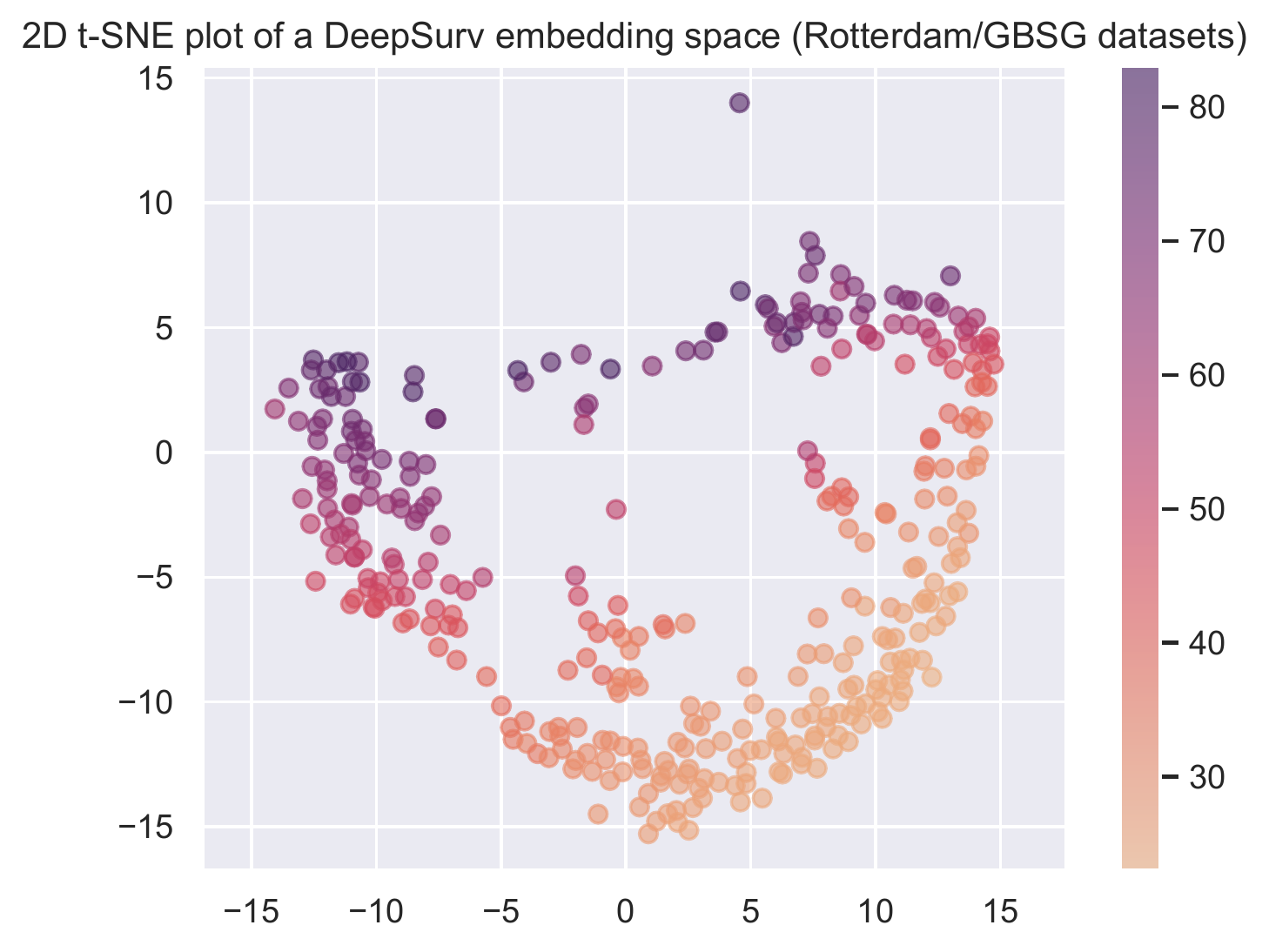}
    \vspace{-1.5em}
    }
    \vspace{.5em} \\
    \subfigure[]{\label{fig:baseline-visualizations-survival-mnist}
      \includegraphics[scale=.5]{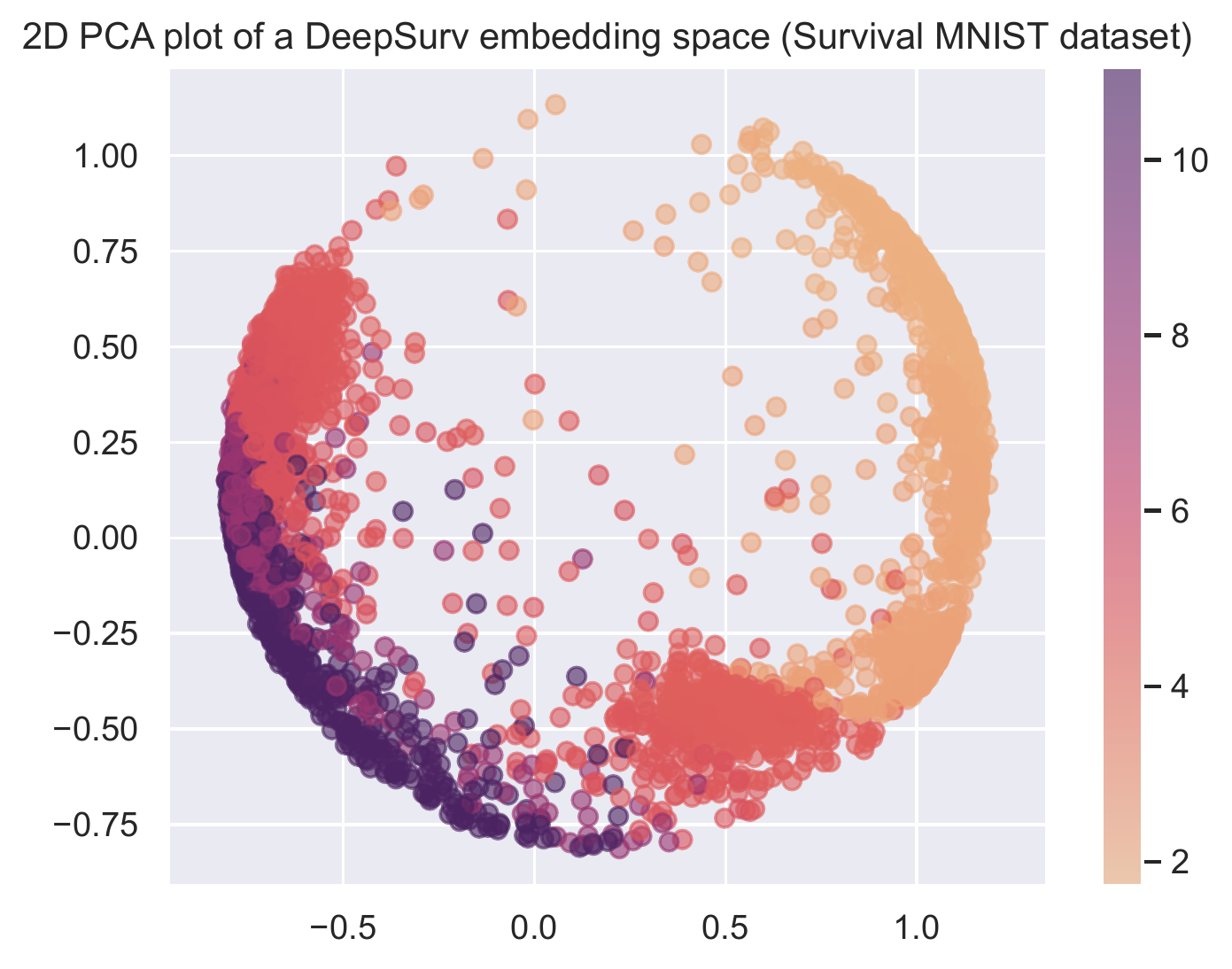}~
      \includegraphics[scale=.5]{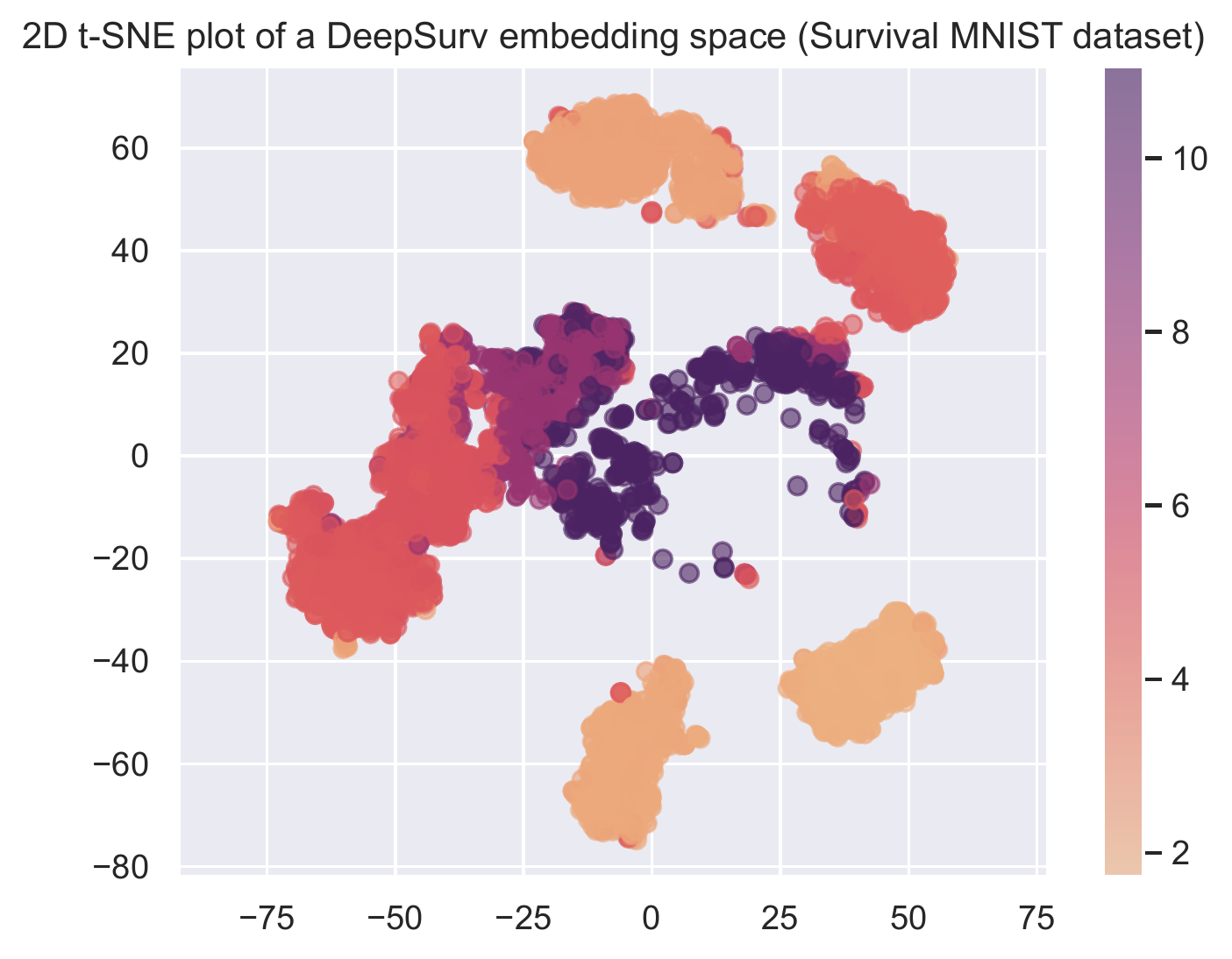}
    \vspace{-1.5em}
    }
    \vspace{-1.5em}
  }
\end{figure*}

We present PCA and t-SNE plots using the baseline visualization strategy described at the end of Section~\ref{sec:intro}. We show plots for the DeepSurv embedding space for the SUPPORT dataset (using the setup in Section~\ref{sec:tabular}/Appendix~\ref{sec:support-additional-results}) in \figureref{fig:baseline-visualizations-support}, the Rotterdam/GBSG datasets (using the setup in Appendix~\ref{sec:rotterdam-gbsg}) in \figureref{fig:baseline-visualizations-rotterdam-gbsg}, and the Survival MNIST dataset (using the setup in Section~\ref{sec:images}/Appendix~\ref{sec:survival-mnist}) in \figureref{fig:baseline-visualizations-survival-mnist}. In particular, the embedding spaces under examination all have a norm~1 constraint. Each scatter plot is made using the visualization data (and not the training data used to train the neural survival analysis model nor the anchor direction estimation data). For each visualization data point $x_i^{\viz}$, we compute its median survival time estimate by looking at the time $t$ where $\widehat{S}(t|x_i^{\viz})$ crosses 1/2 (similar to what we had discussed in Appendix~\ref{sec:handling-many-anchor-directions}), and we color scatter plot points based on these median survival times. From these scatter plots, we can get a rough sense of the geometry of the embedding space. For instance, whereas there are clear clusters of points that show up for Survival MNIST (in fact, one could check that these correspond to different groups of digits; again, as we pointed out in Section~\ref{sec:images}/Appendix~\ref{sec:survival-mnist}, the embedding space does not appear to disentangle all 10 digits neatly), we do not see this clustering behavior for the SUPPORT and Rotterdam/GBSG DeepSurv embedding spaces.

Importantly, as we already pointed out in Section~\ref{sec:intro}, these scatter plots from dimensionality reduction do not tell us how the embedding space relates to raw features. Even PCA, which is easier to interpret than nonlinear dimensionality reduction methods (e.g., t-SNE), does not relate the embedding space to raw features in this setting since PCA here is directly applied to vectors from the embedding space (and not vectors from the raw feature space). While the t-SNE plot for Survival MNIST shows clustering behavior, note that t-SNE itself does not actually estimate cluster assignments for different data points, i.e., t-SNE is inherently \emph{not} a clustering algorithm.

Note that the PCA plots can actually give us a sense of whether information in the embedding space is stored more in magnitudes vs more in angles. As a reminder, Euclidean vectors with norm~1 reside on what is called the ``unit hypersphere'' ${\mathcal{S}^{d-1}\triangleq\{v\in\mathbb{R}^d\text{ s.t.~}\|v\| = 1\}}$. When we take data on the unit hypersphere and plot their 2D PCA plot, the resulting 2D PCA plot will always look like points that are within a 2D circle (since PCA is a linear dimensionality reduction method, it retains the hyperspherical structure but projects down to 2D, where points can be projected inside the circle rather than only along th shell of the circle). This plot could be helpful. We can readily tell if the data appear uniformly distributed over a hypersphere or not. For example, for the SUPPORT dataset's 2D PCA plot in \figureref{fig:baseline-visualizations-support}, the points largely bunch up on one side of a circle, meaning that in the embedding space (that in this case is actually 10-dimensional), the points largely are concentrated around a hyperspherical cap (i.e., the embedding vectors are largely all pointed in a similar direction). In contrast, the 2D PCA plots for the Rotterdam/GBSG datasets (\figureref{fig:baseline-visualizations-rotterdam-gbsg}) and the Survival MNIST dataset (\figureref{fig:baseline-visualizations-survival-mnist}) clearly show more of a circle shape, indicating that the DeepSurv embedding vectors are more uniformly distributed for Rotterdam/GBSG and Survival MNIST (i.e., they have information stored more in angles than in magnitudes) than for \mbox{SUPPORT}.

We remark that it is possible to color the scatter plot points using other quantitative values. For example, we could use the mean (instead of the median) survival time estimate, which corresponds to the area under a data point's predicted survival curve, we could use an indicator value for whether the point is censored or not (to get a sense of whether some parts of the embedding space correspond to more censored points), or we could use cluster labels as estimated using any of the clustering approaches we had used to estimate anchor directions.

\end{document}